\documentclass{article}

\usepackage[english]{babel}

\usepackage[letterpaper,top=2cm,bottom=2cm,left=3cm,right=3cm,marginparwidth=1.75cm]{geometry}

\usepackage[utf8]{inputenc} % allow utf-8 input
\usepackage[T1]{fontenc}    % use 8-bit T1 fonts
\usepackage{hyperref}       % hyperlinks
\usepackage{url}            % simple URL typesetting
\usepackage{booktabs}       % professional-quality tables
\usepackage{amsfonts}       % blackboard math symbols
\usepackage{nicefrac}       % compact symbols for 1/2, etc.
\usepackage{microtype}      % microtypography
\usepackage[table]{xcolor}  % colors; [table] enables \rowcolor for tables

\usepackage{graphicx}
\usepackage{amssymb}
\usepackage{amsthm}
\usepackage{amsmath}
\newtheorem{theorem}{Theorem}
\newtheorem{proposition}{Proposition}
\newtheorem{example}{Example}

\newtheorem{definition}{Definition}
\newtheorem{corollary}{Corollary}
\usepackage{booktabs}
\usepackage{cleveref}
\usepackage{natbib}
\usepackage{algorithm}
\usepackage{algorithmic}
\usepackage{color}
\usepackage{enumitem}

\usepackage{xcolor}
\usepackage[most]{tcolorbox}
\usepackage{wrapfig}
\usepackage{subcaption}
\newtcolorbox{defbox}{
  colback=green!5!white,
  colframe=green!50!black,
  fonttitle=\bfseries,
  title=Definition
}

\usepackage[toc,page,header]{appendix}
\usepackage{minitoc}

\date{}
\title{CurveRL: Principled Distribution-Aware Context Reweighting for LLM Reasoning}
\author{%
  Ke Sun\footnote{Equal contribution. \quad $^\dagger$Co-corresponding authors.} $^{1}$ \quad
  Yizhou Zhao$^{*1}$ \quad
  Jiayi Xin$^{1}$ \quad
  Qi Long$^{\dagger 1}$ \quad
  Weijie Su$^{\dagger 1}$ \\
  $^{1}$University of Pennsylvania \\
  \texttt{\{kesun6, qlong\}@upenn.edu, yzzhao@sas.upenn.edu} \\
  \texttt{jiayixin@seas.upenn.edu, suw@wharton.upenn.edu}
}

\begin{document}
\maketitle

\begin{abstract}
 Context or prompt-level reweighting has emerged as a central algorithmic lever in Reinforcement Learning with Verified Rewards (RLVR) for improving the reasoning capability of large language models, yet the principle determining what constitutes an optimal weighting remains poorly understood. We address this gap by formulating prompt reweighting as a functional derivative of a utility functional defined in the pass-rate function space, yielding a unified optimality framework that accommodates existing schemes, including REINFORCE and GRPO. Building on this optimality framework, we propose a distribution-aware prompt reweighting approach, called \textbf{CurveRL}, based on a quantile coordinate transform, in which the weight assigned to each prompt depends not on the absolute value of pass rates but on its rank and density to reflect the distributional structure of the pass rates in the learning dynamics. Extensive experiments across multiple benchmarks demonstrate that our proposed CurveRL consistently outperforms GRPO and other RLVR baselines. Our study identifies context-distribution control as a principled axis for analyzing and designing prompt-reweighted RLVR algorithms. The code is released in \url{https://github.com/zhyzmath/CurveRL}. 
\end{abstract}

% \tableofcontents

%%%%%%%%%%% set table of content for appendix (start)
% Make the "Part I" text invisible
\renewcommand \thepart{}
\renewcommand \partname{}
\doparttoc % Tell to minitoc to generate a toc for the parts
\faketableofcontents % Run a fake tableofcontents command for the partocs
%%%%%%%%%%% set table of content for appendix (end)

\section{Introduction}

Reinforcement learning with verifiable rewards~(RLVR) has been the primary driver behind the recent emergence of reasoning models~\citep{jaech2024openai,guo2025deepseek}. As outcome-based rewards become more adopted, the token-level MDP~\citep{puterman2014markov} effectively collapses to a contextual bandit~\citep{lattimore2020bandit}, in which the entire reasoning trace is absorbed into the response as a one-step decision. Among modern RLVR methods, Group Relative Policy Optimization~(GRPO)~\citep{shao2024deepseekmath,guo2025deepseek} and its variants, such as \citep{yu2025dapo,liu2025understanding,chu2025gpg,zhang2025gvpo,xiong2025reinforce,tajwar2026maximum}, 
have emerged as the dominant family, offering competitive performance with low memory overhead. The mechanisms underlying their success, however, remain poorly understood.

Understanding why these algorithms work requires recognizing a distinctive feature of RLVR that has no direct counterpart in standard RL: the ability to directly shape the context or prompt distribution from which training samples are drawn.  In standard RL with an external environment, the state-visitation distribution is mainly shaped indirectly through the agent's actions and exploration strategy~\citep{thrun1992efficient,ladosz2022exploration}, as the state or context distribution is often treated as exogenous and cannot be directly manipulated. The contextual-bandit structure of outcome-based RLVR removes this constraint, opening up a new and orthogonal axis of information acquisition, which we refer to as \textit{context distribution control}: the prompt distribution is explicitly controllable during training, and the algorithm can decide which prompts to sample and how heavily to weight their gradients. Recently, a growing body of work exploits this freedom through a wide range of mechanisms, such as sample selection~\citep{yu2025dapo,mao2026dynamics,xiong2025reinforce}, curriculum strategies~\citep{parashar2025curriculum,rajaraman2026learning,chen2025self}, and prompt reweighting~\citep{davis2025objective,tajwar2026maximum}. These approaches are generally grounded by separate heuristics about which prompts deserve more gradient signal at each training stage. Yet none provides a principled answer to why its particular intervention on the prompt training distribution is the right one. Recent work~\citep{tajwar2026maximum} introduces the maximum likelihood principle into the RLVR objective by maximizing the log-likelihood of pass rates. While maximum likelihood estimation~(MLE) enjoys well-established optimality properties in classical statistics~\citep{shao1999mathematical,casella2024statistical}, we argue that these guarantees do not transfer to RLVR, as policy optimization is structurally different from statistical estimation. As also briefly mentioned in \citep{davis2025objective}, classical MLE optimality rests on a fixed, exogenous probability measure that characterizes the data-generating population, against which the estimator is evaluated. In RLVR, by contrast, the objective is evaluated under a policy-dependent measure or data distribution that co-evolves with the policy throughout training. There is no fixed population to estimate, and the measure itself is a policy-dependent endogenous object of the optimization. Thus, the classical optimality argument in MLE no longer applies. 

% A detailed discussion of related work is given in Appendix~\ref{app:relatedwork}.

\paragraph{Motivation: Prompt Reweighting and Its Optimality.} Prompt reweighting offers a concrete way to implement context distribution control in RLVR. To learn the LLM policy $\pi_\theta$, we consider RLVR algorithms whose policy gradient update assigns a policy-dependent prompt weight $w_\theta(x)$ to each prompt $x$. Let $r(x, y)$ denote the rule-based binary reward function for prompt $x$ and response $y$, $d_0$ denote the initial prompt distribution. Define the \textbf{pass rate} $p_\theta(x)=\mathbb{E}_{y \sim \pi_\theta(\cdot \mid x)}\left[r(x, y)\right]$. A broad class of prompt-reweighted RLVR algorithms employ the following policy gradient update:
\begin{align}\label{eq:gradentrule_generic}
     \nabla_\theta J(\theta)= \mathbb{E}_{x \sim d_0} \left[w_\theta(x) \mathbb{E}_{y \sim \pi_\theta(\cdot | x)} \left[ r(x, y) \nabla_\theta 
    \log \pi_\theta(y|x)\right] \right]= \mathbb{E}_{x \sim d_0}\left[w_\theta(x) \nabla_\theta p_\theta(x)\right].
\end{align}
For instance, the population counterpart of GRPO corresponds to $w_\theta(x)=1/\sqrt{p_\theta(x) (1-p_\theta(x))}$. A detailed explanation is given in Section~\ref{sec:preliminaries}. Importantly, this generic form in Eq.~\eqref{eq:gradentrule_generic} raises the central question of our study:
\begin{center}
    \textit{What are the principle and the optimality to determine $w_\theta(x)$ in the prompt-reweighted RLVR?}
\end{center}
% Answering the question above can also deepen our theoretical understanding about why GRPO employs a normalization-type prompt reweighting function and thus promotes the design of principled algorithms in the future.

\paragraph{Our Contributions.} In this paper, we cast prompt reweighting as context distribution control, where the algorithm directly reshapes the effective prompt distribution. Under this view, we formulate the optimal weight as a functional derivative of a utility function over the pass-rate function space.
The resulting optimality framework subsumes existing pointwise weighting rules and reveals their \textit{weight collapse} limitation. We then instantiate the principle with CurveRL, with the optimal weight derived by a distribution-aware utility function in pass-rate quantile space, which uses rank and density information of the evolving pass-rate distribution. Empirically, CurveRL consistently improves the pass@$1$ and pass@$k$ trade-off across multiple reasoning benchmarks. The contributions of our study can be succinctly summarized as follows:
\begin{itemize}[leftmargin=*]
    \item We formulate prompt reweighting in RLVR as context distribution control and define optimal weights through utility-dependent functional derivatives.

    \item We instantiate this principle with a distribution-aware utility in pass-rate quantile space and propose CurveRL that characterizes the distributional structure of the pass-rate distribution.
    
    \item We perform extensive experiments, showing that CurveRL improves the pass@$1$ and pass@$k$ Pareto frontier over standard baselines. The underlying mechanism of CurveRL is also analyzed.
\end{itemize}

\section{Preliminaries and Technical Background}\label{sec:preliminaries}

\paragraph{REINFORCE Objective in the Pass-Rate Space.} RLVR is often formulated as a contextual bandit problem, where the entire reasoning trace is absorbed into the response as a one-step decision. We assume a rule-based binary reward $r(x, y) \in \{0, 1\}$ for prompt $x$ and response $y$. To learn $\pi_\theta$ guided by these rewards, the policy gradient method~\citep{sutton1998reinforcement}, e.g., REINFORCE~\citep{williams1992simple,sutton1999policy}, is generally utilized to maximize $J_{\text{RL}}(\theta)$, which is defined by the following:
\begin{align}\label{eq:RL}
     J_{\text{RL}}(\theta)=\mathbb{E}_{x \sim d_0,y \sim \pi_\theta(\cdot \mid x)}\left[r(x, y)\right] =  \mathbb{E}_{x \sim d_0} \left[\mathbb{E}_{y \sim \pi_\theta(\cdot \mid x)}\left[\mathbb{I}\{y \in C(x)\}\right]\right]:= \mathbb{E}_{x \sim d_0} \left[p_\theta(x)\right],
\end{align}
where $C(x)$ is a feasible set in the response, which is often determined by the domain-specific verifier. The score function trick is pivotal in derivations of REINFORCE~\citep{williams1992simple}, offering a generic optimization tool to solve the optimization problem under decision-dependent sampling distributions for LLM-based generative model $\pi_\theta$. Next, we can derive its gradient:
\begin{align}
   \nabla_\theta J_{\text{RL}}(\theta)
    = \mathbb{E}_{x \sim d_0, y \sim \pi_\theta(\cdot | x)} \left[ r(x, y) \nabla_\theta 
    \log \pi_\theta(y|x)\right] =   \mathbb{E}_{x \sim d_0} \left[ \nabla_\theta p_\theta(x)\right].
\end{align}
In particular, for the generic prompt reweighting form in Eq.~\eqref{eq:gradentrule_generic} in the pass-rate space, REINFORCE corresponds to a constant prompt weight $w_\theta(x)=1$.

\paragraph{Group Relative Policy Optimization~(GRPO)~\citep{shao2024deepseekmath,guo2025deepseek}.} We next rewrite the gradient of GRPO in the same prompt-reweighting form. For each prompt $x$, GRPO performs $n$ rollouts to generate responses $\{y_i\}_{i=1}^n$. We denote a reference policy $\pi_{\text{ref}}$ and an old policy $\pi_{\text{old}}$. Under a contextual bandit or one-step decision abstraction analysis adopted in \citep{davis2025objective}, GRPO can be written in a sequence-level form, which is closely related to Group Sequence Policy Optimization~(GSPO)~\citep{zheng2025group}. As such, GRPO maximizes the following objective:
\begin{equation}
    \begin{aligned}
         &J_{\mathrm{GRPO}}(\theta) = \mathbb{E}_{x \sim d_0, \{y_i\}_{i=1}^n \sim \pi_{\theta_{\text{old}}}(\cdot \mid x)}\\
    & \left[\frac{1}{n} \sum_{i=1}^n\left(\min \left(\frac{\pi_{\theta}\left(y_i |x\right)}{\pi_{\theta_{\mathrm{old }}}\left(y_i | x\right)} \hat{A}_i^x, \operatorname{clip}\left(\frac{\pi_{\theta}\left(y_i | x \right)}{\pi_{\theta_{\text{old}}}\left(y_i | x \right)}, 1-\epsilon, 1+\epsilon\right) \hat{A}_i^x\right)-\left.\beta D_{\mathrm{KL}}\left(\pi_\theta(\cdot \mid x) \| \pi_{\mathrm{ref}}(\cdot \mid x)\right)\right)\right)\right],
    \end{aligned}
\end{equation}
where $\epsilon$ is the clipping parameter and $\hat{A}_i^x$ is the advantage estimate. The clipped policy ratio $\frac{\pi_{\theta}\left(y_i |x\right)}{\pi_{\theta_{\mathrm{old }}}\left(y_i |x\right)}$ prevents $\pi_\theta$ from deviating dramatically from the previous policy $\pi_{\text{old}}$ and the regularization hyperparameter $\beta$ penalizes the divergence from the reference policy $\pi_{\text{ref}}$. Notably, the group-based normalization scheme is employed to estimate the advantage by $\hat{A}_i^x$ in the gradient update rule:
\begin{align}
    \hat{A}_i^x=\frac{r\left(x, y_i\right)-\text{mean}\left(\left\{r\left(x, y_j\right)\right\}_{j=1}^n\right)}{\text{std}\left(\left\{r\left(x, y_j\right)\right\}_{j=1}^n\right)}.
\end{align}
Following \citep{davis2025objective}, when we replace the group-wise empirical baseline and normalization by their population counterparts~(i.e., an infinite group size) and ignore the clipping and the policy-ratio term, the gradient of GRPO can be approximated by the following simple form:
\begin{equation}
    \begin{aligned}\label{eq:GRPO}
    \nabla_\theta J_{\mathrm{GRPO}}(\theta) 
    &\approx \mathbb{E}_{x \sim d_0, y \sim \pi_\theta(\cdot \mid x)}\left[\frac{ r(x, y) - \mathbb{E}_{\tilde{y}\sim \pi_\theta(\cdot|x)}\left[r(x, \tilde{y})\right]}{\sqrt{\text{Var}_{\tilde{y} \sim \pi_\theta(\cdot \mid x)}\left[r(x, \tilde{y})\right]}}\nabla_\theta \log \pi_\theta(y|x)\right] \\
    & = \mathbb{E}_{x \sim d_0}\left[\frac{1}{\sqrt{p_\theta(x) (1-p_\theta(x))}} \nabla_\theta p_\theta(x) \right],
\end{aligned}
\end{equation}
which ideally corresponds to a population-level objective $J_{\text{GRPO}}(\theta) = \mathbb{E}_{x\sim d_0}\left[2 \arcsin{\sqrt{p_\theta( x)}}\right]$~\citep{davis2025objective}. Notably, the prompt weighting function of the population-level GRPO in Eq.~\eqref{eq:GRPO} is $1/\sqrt{p_\theta(x)(1-p_\theta(x))}$, which emphasizes prompts with either very small~(i.e., $p_\theta(x)\rightarrow 0$) or large~(i.e., $p_\theta(x)\rightarrow 1$) pass rates in the learning dynamics.

\paragraph{Advanced Prompt Reweighting RL Objectives.} Several recent RLVR variants can also be interpreted as pointwise transformations
of the pass rate~\citep{walder2025pass,tajwar2026maximum,xiong2025reinforce}. A representative example is Maximum Likelihood RL~(MaxRL)~\citep{tajwar2026maximum}, where the Maximum Likelihood~(ML) principle~\citep{bishop2006pattern,casella2024statistical} is heuristically introduced into the RL objective:
\begin{align}\label{eq:maxRL}
    J_{\text{ML}}(\theta)=\mathbb{E}_{x \sim d_0} \left[\log p_\theta( x)\right], \quad \nabla_\theta J_{\text{ML}} & = \mathbb{E}_{x\sim d_0}\left[\frac{1}{p_\theta(x)} \nabla_\theta p_\theta(x)\right].
\end{align}
An interesting property of MaxRL is that maximum likelihood optimizes an infinite harmonic mixture of pass@k gradients and REINFORCE therefore can be viewed as optimizing a first-order approximation to MaxRL. In practice, MaxRL approximates the maximum likelihood gradient in Eq.~\eqref{eq:maxRL} by truncating the Maclaurin expansion of the logarithmic function to a finite order. In our study, we use the oracle maximum likelihood form for theoretical analysis and its practical approximation for algorithmic comparisons.  In parallel, \cite{xiong2025reinforce} studied a prompt reweighting framework that adaptively allocates the number of sampled responses, implicitly prioritizing difficult prompts to mitigate vanishing pass-rate estimates. In contrast, we focus on identifying what constitutes an optimal weighting scheme itself and characterizing its role in shaping the learning dynamics, rather than approaching the problem from the perspective of sampling budget allocation.

In summary, the RLVR algorithms considered in this section can be interpreted as assigning different pointwise weights to prompts over the pass-rate function space. This perspective offers a technical foundation for the utility-functional framework developed in the remainder of the paper, particularly in \Cref{sec:optimality}.

\section{Prompt Reweighting as Utility-Dependent Context Distribution Control}\label{sec:optimality}

\subsection{Prompt Reweighting as Context Distribution Control}\label{sec:optimality_framework}

\paragraph{Problem Setting: Policy-Reweighted Contextual Bandit~(PRCB).} Introducing prompt reweighting into the RLVR policy-gradient update makes the \textit{effective context distribution} policy-dependent in the learning dynamics. Given a generic non-negative weight $w_\theta(x)$, define $d_\theta(x) = d_0(x) w_\theta(x) / Z_\theta$, where $Z_\theta =\int d_0(x) w_\theta(x) d x$ is the normalization factor. Under this framework, the general gradient update can be written as
\begin{align}\label{eq:gradient_reweighted}
    \nabla_\theta J \left(\theta \right)=\mathbb{E}_{x \sim d_0}\left[w_\theta(x) \nabla_\theta p_\theta(x)\right] = Z_\theta \mathbb{E}_{x \sim d_\theta}\left[\nabla_\theta p_\theta(x)\right] ,
\end{align}
where the positive scalar $Z_\theta$ only changes the update scale rather than the gradient update direction at a fixed policy iteration. Thus, prompt reweighting can be interpreted as optimizing under an effective prompt measure $d_\theta$ rather than the fixed base measure $d_0$. A special case is the \textbf{pointwise objective} $J_g(\theta)$ with its gradient form:
\begin{align}\label{eq:objective_pointwise}
    J_g(\theta) = \mathbb{E}_{x \sim d_0}\left[g\left(p_\theta(x)\right)\right], \quad \nabla_\theta J_g(\theta)=\mathbb{E}_{x \sim d_0}\left[g^{\prime}\left(p_\theta(x)\right) \nabla_\theta p_\theta(x)\right],
\end{align}
which induces the pointwise weight $w_\theta(x) = g^\prime(p_\theta(x))$ for a deterministic function $g$, and includes REINFORCE, GRPO, and MaxRL as examples from Section~\ref{sec:preliminaries}.  We call this setting a \textit{Policy-Reweighted Contextual Bandit~(PRCB)}: the contextual-bandit interaction remains one-step and outcome-based, but the optimization measure over contexts becomes policy-dependent through reweighting. Unlike MDP and non-Markovian environments that are treated as the temporal structure extension of classical contextual bandit, PRCB highlights a distinctive axis of information acquisition in RLVR in the direction of measure adaptivity via direct context distribution control. Notably, a direct context distribution control is typically unavailable in standard RL with an external environment. 

% We provide a detailed discussion and list new challenges in the PRCB setting in Appendix~\ref{app:discussion_PRCB}.

\paragraph{New Information Acquisition in RLVR.} In standard RL with an external environment, information acquisition is primarily achieved through action-level exploration~\citep{thrun1992efficient,ladosz2022exploration}, where the policy affects
future observations only indirectly through the environment dynamics. In contrast, RLVR under the PRCB formulation introduces a fundamentally new control axis: the learner can directly reshape the effective context distribution through policy-dependent prompt reweighting. This mechanism is closely related
in spirit to active learning~\citep{liu2024dual,menard2021fast}, adaptive experimental design~\citep{mehta2021experimental,blau2022optimizing}, and Bayesian optimization~\citep{frazier2018tutorial,brochu2010tutorial,balakrishnan2020efficient}, where learning efficiency is improved by adaptively allocating sampling effort
to more informative inputs. With a slight notational abuse, we let $d_t(x)$ denote the context distribution and $d_t(x,y)$ the joint context-response distribution at time $t$. The following schematic contrast highlights the distinction between the two paradigms:
% \begin{figure}[htbp]
% \centering
% \vspace{-5mm}
% \hspace{0mm}
% \begin{minipage}{0.06\textwidth}
%     \centering
%     \includegraphics[width=0.9\linewidth,trim=250 120 280 30, clip]{fig/Agent.png}
% \end{minipage}
% \hspace{2mm}
% \begin{minipage}{0.8\textwidth}
% \begin{align*}
% \text{Standard RL:} \ 
% & \cdots \rightarrow d_t(x) \overset{\pi_{\theta_t}}{\rightarrow} d_t(x,y) 
% \overset{\text{update}}{\rightarrow} \pi_{\theta_{t+1}} 
% \overset{\textbf{exploration}}{\rightarrow} d_{t+1}(x) \rightarrow \cdots, \\
% \text{RLVR under PRCB:} \  
% & \cdots \rightarrow d_t(x) \overset{\pi_{\theta_t}}{\rightarrow} d_t(x,y) 
% \overset{\text{update}}{\rightarrow} \pi_{\theta_{t+1}}  
% \overset{\boldsymbol{w_{\theta_{t+1}}(x)}}{\rightarrow} d_{t+1}(x) \rightarrow \cdots.
% \end{align*} 
% \end{minipage}
% \vspace{-3mm}
% \end{figure}
\begin{align*}
\text{Standard RL:} \ 
& \cdots \rightarrow d_t(x) \overset{\pi_{\theta_t}}{\rightarrow} d_t(x,y) 
\overset{\text{update}}{\rightarrow} \pi_{\theta_{t+1}} 
\overset{\textbf{exploration}}{\rightarrow} d_{t+1}(x) \rightarrow \cdots, \\
\text{RLVR under PRCB:} \  
& \cdots \rightarrow d_t(x) \overset{\pi_{\theta_t}}{\rightarrow} d_t(x,y) 
\overset{\text{update}}{\rightarrow} \pi_{\theta_{t+1}}  
\overset{\boldsymbol{w_{\theta_{t+1}}(x)}}{\rightarrow} d_{t+1}(x) \rightarrow \cdots.
\end{align*} 

This perspective highlights that RLVR differs fundamentally from standard RL by enabling direct control over the state or context distribution, rather than relying solely on action-level exploration.

\paragraph{New Challenges under PRCB.} The PRCB viewpoint also introduces two conceptual challenges. \textit{ (1) Coupled decision-dependent sampling structure}. RLVR already involves decision-dependent sampling in the response space, where the data distribution in the optimization depends on  $\pi_\theta$. Under PRCB, this dependence further extends to the context space through $d_\theta(x)$. As such, the learning dynamics are shaped jointly by response-level policy dependence via $\pi_\theta$ and context-level reweighting via $w_\theta$. \textit{(2) Breakdown of classical optimality principles.} Many optimality criteria for contextual bandits, such as simple and cumulative regret minimization~\citep{deshmukh2018simple,zhou2020neural,lattimore2020bandit}, are formulated under a fixed context distribution $d_0$. In PRCB, however, the effective training measure changes with the policy through $w_\theta$. This policy-dependent measure adaptivity falls outside the standard analysis and motivates our utility-based formulation as one principled optimality for the prompt reweighting introduced in
Section~\ref{sec:optimality_utility}.

\subsection{Optimality of Prompt Reweighting is Utility-Dependent}\label{sec:optimality_utility}

% and $p_\theta : \mathcal{X} \to [0,1]$ denote the pass rate function given $\pi_\theta$. 

% \begin{wrapfigure}[12]{r}{0.5\textwidth}
% % \begin{figure}[htbp]
%     \centering
%     \vspace{-10pt}
%     \includegraphics[width=1.0\linewidth,trim=0 0 0 0, clip]{fig/difficulty.png}
%     \vspace{-10pt}
%     \caption{Density shifts as a distribution transport.}
%     \label{fig:difficulty}
%     \vspace{-5pt}
% % \end{figure}
% \end{wrapfigure}

\paragraph{Policy Optimization in RLVR as Pass-rate Distribution Transport.}  Let $\mathcal{X}$ be the prompt space and $\pi^*$ be an oracle policy. Define an oracle pass-rate function $p^*: \mathcal{X}\rightarrow [0, 1]$ by 
$\mathbb{E}_{y \sim \pi^*(\cdot \mid x)} \left[r(x, y)\right]:= p^*(x)$. Let $f_\theta$ and $f^*$ be the corresponding density functions of the random variables $p_\theta(x)$ and $p^*(x)$ under $x \sim d_0$. The learning dynamics when performing policy optimization in RLVR can be interpreted as a distribution transport problem from $f_\theta$ to $f^*$ in the pass-rate space:
\begin{align}\label{eq:distribution_transport}
    f_\theta(t) \rightarrow f^*(t), \quad t \in [0, 1].
\end{align}
Importantly, perturbations to $p_\theta(x)$ at individual prompts induce heterogeneous effects on the resulting distribution transport, which are governed not only by local sensitivity about pass rates, but also by cross-prompt interactions, coupling, and interference in the induced distributional dynamics~\citep{he2019local,barakat2026pass}.

\paragraph{Optimal Weight Function as Marginal Contribution of a Utility Functional.} Assume $p_\theta \in L^2(d_0)$, the space of square-integrable functions under the prompt distribution $d_0$. We define a policy-dependent functional $\mathcal{U}_\theta: p_\theta(\cdot)\rightarrow \mathbb{R}$ on the pass-rate function space. Note that $\mathcal{U}_\theta$ integrates a broad transformation class of $p_\theta(\cdot)$ over $x \sim d_0$. Define a pushforward measure $\mu_\theta=\left(p_\theta\right)_{\#} d_0$~(equivalently, $\mu_\theta = \text{Law}_{x\sim d_0}(p_\theta(x))$), i.e., the distribution of $p_\theta(x)$ by mapping prompts $x$ to the pass rate function $p_\theta(\cdot)$, with the property that $\mu_\theta([0,1])=\mathbb{P}_{x \sim d_0}\left(p_\theta(x) \in[0,1]\right)=1$. In measure theory, $\mu_\theta$ induces the Cumulative Distribution Function~(CDF) $F_\theta$ defined by $F_\theta(t)=\mu_\theta([0, t])=\mathbb{P}_{x\sim d_0}\left(p_\theta(x) \leq t\right)$. We introduce the derivative concept to define the optimal prompt reweighting $w_\theta^\star$, which characterizes the fastest direction for increasing the marginal contribution of $\mathcal{U}_\theta$ through an upward shift of $p_\theta(x)$. Concretely, we define the optimal weight function $w^\star_\theta(x)$ as the functional derivative~\citep{lions1971optimal,evans2022partial} of the utility functional $\mathcal{U}_\theta$ over the pass rate function $p_\theta(\cdot)$ evaluated at the prompt $x$ in Definition~\ref{def:weight}.
\begin{definition}[\textbf{Utility-dependent Optimal Prompt Weight $w_\theta^\star$}]\label{def:weight} For any perturbation 
$h \in L^2(d_0)$, according to the Riesz representation, the first variation of $\mathcal{U}_\theta$ is given by
\begin{align}\label{eq:weight_Gateaux}
\lim_{\epsilon \to 0}
\frac{\mathcal U_\theta(p_\theta + \epsilon h) - \mathcal U_\theta(p_\theta)}{\epsilon}
=
\int w_\theta^\star(x)\, h(x)\, d_0(x) dx,
\end{align}
where $w_\theta^\star(x)$ is the functional derivative denoted by
\begin{align}
    w_\theta^\star(x)
:= \frac{\delta \mathcal{U}_\theta}{\delta p_\theta}(x),
\end{align}
which indicates the optimal prompt weighting function for the prompt $x$ and the current policy $\pi_\theta$.
\end{definition}

\noindent 
As the Gateaux derivative defines a linear functional 
in the perturbation direction $h$ (cf. Eq.~\eqref{eq:weight_Gateaux}),  the Riesz representation theorem~\citep{brezis2011functional} ensures that, under mild regularity conditions, this functional admits a representation in the Hilbert space $L^2(d_0)$. Particularly, there exists $w_\theta^\star \in L^2(d_0)$ such that the first variation of the objective can be expressed as an inner product with respect to $d_0$, which naturally defines the optimal weight function $w_\theta^\star(x)$. Our definition is reminiscent of the notion of influence functions widely studied in statistics~\citep{hampel1974influence,ronchetti2009robust} and recently revisited in modern machine learning~\citep{koh2017understanding,grosse2023studying,min2025understanding}, where one quantifies the impact of infinitesimal perturbations on a global objective. More broadly, the functional derivative generalizes the ordinary partial derivative from finite-dimensional variables to infinite-dimensional function spaces~\citep{lions1971optimal,evans2022partial}. Partial derivatives with respect to $p_\theta(x)$ only capture the local sensitivity of each $x$,  but \emph{functional derivatives can characterize how the objective responds to perturbations on the entire pass rate function} $p_\theta(\cdot)$. The functional derivatives-based definition generalizes the partial derivatives, enabling distribution-aware weighting that reflects the data geometry, akin to the geometry-aware formulations in optimal transport~\citep{villani2009optimal,peyre2019computational}. This perspective is also the theoretical foundation of our approach in Section~\ref{sec:distribution}. 

\begin{example}[Pointwise Utility Functional.] For the pointwise utility functional with $\mathcal{U}_\theta = J_g(\theta)$, the prompt weight only depends on $p_\theta(x)$ itself by implicitly assuming the independence among prompts. Therefore, the functional derivative degenerates to the partial derivative:
\begin{align}
    \frac{\delta \mathcal{U}_\theta}{\delta p_\theta}(x)=g^{\prime}\left(p_\theta(x)\right) = w^\star_\theta(x).
\end{align}
This reduction encompasses the aforementioned RLVR algorithms in a straightforward way, including
(1) $w^\star_\theta(x)=1$ in REINFORCE, (2) $w^\star_\theta(x) = 1/\sqrt{(p_\theta(x)(1-p_\theta(x))}$ in GRPO, and (3)$w^\star_\theta(x) = 1/p_\theta(x)$ in MaxRL, by taking the functional~(partial) derivative~(see Appendix~\ref{app:theory_reduction}).
\end{example}

\begin{example}[Variance-Based Utility Functional.] Functional derivatives over the pass-rate function are strictly more expressive than pointwise derivatives, i.e., in general $\frac{\delta \mathcal{U}_\theta}{\delta p_\theta}\left(x\right) \neq \frac{d \mathcal{U}_\theta(p_\theta(x))}{d p_\theta(x)}$.
Unlike pointwise utility functionals, functional derivatives can capture global distributional structure of $p_\theta(\cdot)$. As an illustrative example, consider the variance-based utility functional $\mathcal{U}_\theta=\operatorname{Var}_{x \sim d_0}(p_\theta(x))=\mathbb{E}_{x\sim d_0}\left[p_\theta(x)^2\right]-(\mathbb{E}_{x \sim d_0}[p_\theta(x)])^2$. We can derive
\begin{align}
    \frac{\delta \mathcal{U}_\theta}{\delta p_\theta}(x)=2 p_\theta(x)-2 \mathbb{E}_{x\sim d_0}[p_\theta(x)],
\end{align}
where the second term introduces \textit{a global coupling through the expectation}, reflecting the dependence on the entire pass-rate distribution rather than the local value $p_\theta(x)$ alone. This serves as a motivating example of the general distribution-aware utility functionals introduced in Section~\ref{sec:distribution}. 
\end{example}

\subsection{The Optimality of Utility Functionals is Risk-Dependent}\label{sec:optimality_risk}

\paragraph{Optimal Choice of the Utility Functional: A Risk-Sensitive Control Perspective.} According to Definition~\ref{def:weight}, the optimal prompt reweighting is determined by the choice of a utility function $\mathcal{U}_\theta$ that influences the performance of the RLVR algorithm. A natural question is whether there exists a single ``optimal'' utility function $\mathcal{U}_\theta$ that uniformly dominates the alternatives. However, we argue that the answer is negative, as it depends on the choice of risk preference. 
% Different risk preferences are optimal for different design objectives. 
The RL literature has long recognized that utility selection reflects the designer's risk attitude rather than a universal principle. Specifically, risk-sensitive RL~\citep{mihatsch2002risk} with an objective of some risk measure on cumulative rewards, has not converged on a single optimal risk measure. A typical strategy is to focus on concrete criteria such as exponential utility and CVaR, treating selection as a tradeoff among interpretability, safety semantics, and tractability~\citep{bauerle2024markov,wang2022risk,smith2023exponential,majumdar2017risk}. Distributional RL~\citep{dabney2018implicit,bellemare2023distributional} makes this dependence explicit by allowing arbitrary distortion functions. From this perspective, REINFORCE, GRPO, and MaxRL are different points in a continuum of risk preference. A useful summary of this continuum is the widely adopted entropic risk family~\citep{howard1972risk}, where the agent seeks to optimize the expectation of an exponential utility function $\mathcal{U}_\theta^{\mathrm{risk}}(\eta)$ with the lower bound $J_{\mathrm{RL}}$ by Jensen's inequality:
% which is also equivalent to a KL robust control, 
\begin{align}
   \mathcal{U}_\theta^{\mathrm{risk}}(\eta)=\mathbb{E}_{x \sim d_0}\left[ \frac{1}{\eta} \log  \mathbb{E}_{y \sim \pi_\theta(\cdot \mid x)} e^{\eta r(x, y)}\right] \geq \mathbb{E}_{x \sim d_0} \left[\mathbb{E}_{y \sim \pi_\theta(\cdot \mid x)}\left[r(x, y)\right]\right] = J_{\mathrm{RL}}(\theta),
\end{align}
where $\eta > 0$ controls the degree of risk-seeking: larger values of $\eta$ place exponentially more weight on high-reward
responses. As shown in Proposition~\ref{prop:entropic_risk}, this entropic-risk family provides a utility-functional interpolation  between standard RL~(i.e., REINFORCE) and MaxRL. Increasing $\eta$ induces stronger reweighting toward low-pass-rate prompts through the resulting weight function $w_\theta^\star$. The proof of Proposition~\ref{prop:entropic_risk} is provided in Appendix~\ref{app:theory_entropicrisk}. In summary, we argue that no single utility functional $\mathcal{U}_\theta$ is inherently preferred outside of a specific design objective. 

\begin{proposition}[\textbf{Gradient of Entropic Risk RL Interpolates Standard RL and MaxRL}]\label{prop:entropic_risk} Assume $p_\theta(x) > 0$ for $d_0$-almost every $x$ and $\frac{\left\|\nabla_\theta p_\theta(x)\right\|}{p_\theta(x)} \in L^1\left(d_0\right)$, then 
\begin{align}
      \lim _{\eta \rightarrow 0^+} \nabla_\theta \mathcal{U}_\theta^{\mathrm{risk}}(\eta) = \nabla_\theta J_{\mathrm{RL}}(\theta), \ \lim _{\eta \rightarrow \infty} \eta \nabla_\theta \mathcal{U}^{\mathrm{risk}}_\theta(\eta)=\nabla_\theta J_{\mathrm{ML}}(\theta).
\end{align}
\end{proposition}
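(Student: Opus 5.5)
Since $r(x,y)\in\{0,1\}$, the inner expectation collapses to a function of the pass rate: $\mathbb{E}_{y\sim\pi_\theta(\cdot\mid x)} e^{\eta r(x,y)} = 1-p_\theta(x)+e^{\eta}p_\theta(x) = 1+(e^\eta-1)p_\theta(x)$. Hence $\mathcal{U}_\theta^{\mathrm{risk}}(\eta)=\mathbb{E}_{x\sim d_0}[g_\eta(p_\theta(x))]$ is a pointwise objective of the form \eqref{eq:objective_pointwise}, with $g_\eta(p)=\frac{1}{\eta}\log(1+(e^\eta-1)p)$. By the Pointwise Utility Functional example its gradient is
\begin{align*}
\nabla_\theta \mathcal{U}_\theta^{\mathrm{risk}}(\eta)=\mathbb{E}_{x\sim d_0}\left[w_\eta(x)\,\nabla_\theta p_\theta(x)\right],\qquad w_\eta(x)=g_\eta'(p_\theta(x))=\frac{e^\eta-1}{\eta\,\bigl(1+(e^\eta-1)p_\theta(x)\bigr)}.
\end{align*}
Interchanging $\nabla_\theta$ and $\mathbb{E}_{x\sim d_0}$ is taken for granted at the same level of rigor as the earlier gradient formulas. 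If needed, it can be justified by dominated convergence using the bound on $w_\eta$ below together with the integrability assumption.

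Both limits then reduce to pointwise limits of the weight, followed by a dominated convergence argument. For $\eta\to0^+$: $(e^\eta-1)/\eta\to1$ and $e^\eta-1\to0$, so $w_\eta(x)\to1$ for every $x$, which is the REINFORCE weight. For $\eta\to\infty$: $\eta w_\eta(x)=\frac{e^\eta-1}{1+(e^\eta-1)p_\theta(x)}=\frac{1}{p_\theta(x)+(e^\eta-1)^{-1}}\to 1/p_\theta(x)$ wherever $p_\theta(x)>0$. By hypothesis this holds $d_0$-a.e., and the limit is the MaxRL weight in \eqref{eq:maxRL}.

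The main step is to exhibit a $d_0$-integrable envelope so that the limit can pass inside the expectation.
\begin{itemize}
\item \emph{Large-$\eta$ limit.} Since $(e^\eta-1)^{-1}>0$, we have $0\le \eta w_\eta(x)\le 1/p_\theta(x)$. Hence $\|\eta w_\eta(x)\nabla_\theta p_\theta(x)\|\le \|\nabla_\theta p_\theta(x)\|/p_\theta(x)\in L^1(d_0)$, and dominated convergence gives $\eta\nabla_\theta\mathcal{U}^{\mathrm{risk}}_\theta(\eta)\to\nabla_\theta J_{\mathrm{ML}}(\theta)$.
\item \emph{Small-$\eta$ limit.} Take $\eta\in(0,1]$. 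Then $w_\eta(x)\le (e^\eta-1)/\eta\le e-1$ uniformly in $x$, so it suffices that $\|\nabla_\theta p_\theta\|\in L^1(d_0)$. This follows from the assumption, because $\|\nabla_\theta p_\theta\|\le \|\nabla_\theta p_\theta\|/p_\theta$ when $p_\theta\le1$. Dominated convergence then gives convergence to $\nabla_\theta J_{\mathrm{RL}}(\theta)$.
\end{itemize}

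Finally, I would note the monotonicity of $w_\eta$ in $p_\theta(x)$, which decreases more steeply as $\eta$ grows. This supports the statement's remark that larger $\eta$ shifts weight toward low-pass-rate prompts. The only delicate point is the domination in the $\eta\to\infty$ case, and the inequality $\eta w_\eta\le1/p_\theta$ settles it directly.
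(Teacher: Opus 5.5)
Your proposal is correct and matches the paper's proof step for step. You reduce the Bernoulli moment generating function to $1+(e^\eta-1)p_\theta(x)$, read off the pointwise weight, and pass to both limits by dominated convergence, using the same envelope $\eta w_\eta \le 1/p_\theta(x)$ as the paper in the large-$\eta$ case. Your small-$\eta$ argument is slightly more careful than the paper's: you derive the needed integrability of $\|\nabla_\theta p_\theta\|$ from the hypothesis via $p_\theta \le 1$, whereas the paper only cites uniform boundedness of the weight.
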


% \clearpage
\section{CurveRL: Distribution-Aware Reweighting in Pass-Rate Quantile Space}\label{sec:distribution}

\subsection{From Pointwise Utility to Distribution-Aware Utility in the Quantile Space}\label{sec:distribution_method} 

Even though there may not be a universally optimal utility function in the general prompt reweighting as analyzed in \Cref{sec:optimality_risk}, we can still systematically improve the existing pointwise utility functional family in Eq.~\eqref{eq:objective_pointwise}.

\paragraph{Motivation: Weight Collapse of Pointwise Utility Functionals.} Previous approaches, such as \citep{davis2025objective,tajwar2026maximum,xiong2025reinforce}, define utility as a pointwise transformation of the pass rate, i.e., $g(p_\theta(x))$, solely based on the absolute magnitude $p_\theta(x)$. However, such strategies suffer from a systematic limitation that we term \textit{weight collapse}: the induced weights fail to adequately differentiate prompts with distinct learning potential. This issue manifests in both early and late training phases.
(1) In the early stage, most prompts have $p_\theta(x)\approx 0$, leading to nearly indistinguishable weights, despite substantial heterogeneity: some prompts lie near the learning frontier, while others are intrinsically difficult and less likely to benefit from further training. (2) In the late stage, many prompts have $p_\theta(x)\approx 1$ and thus yield similar weights, although certain prompts still admit meaningful improvement. The root cause is that pointwise utility functions depend solely on the local information in the pass rate space based on the \emph{absolute} value of $p_\theta(x)$, ignoring the \emph{geometry} in the pass rate space~(i.e., the \textit{distributional structure} of $p_\theta(x)$), including the rank, density, and spacing information. To address this limitation, we propose a fundamentally different distribution-aware utility function family. This motivation is analogous to the distinction between geometry-aware distances, such as Wasserstein distance, and pointwise divergences like Kullback–Leibler~(KL) divergence~(see Appendix~\ref{app:discussion_geometry} for more discussions).

\paragraph{Distribution-Aware Utility Functional.} We aim to incorporate the distributional structure of $p_\theta(x)$ into the prompt weighting $w_\theta(x)$. Consider a reference distribution~(i.e., probability measure) $\mu_{\mathrm{ref}}$, which induces a CDF $F_{\mathrm{ref}}$ with a density $f_{\mathrm{ref}}$ defined by $F_{\mathrm{ref}}(t)=\mu_{\mathrm{ref}}([0, t])$. We consider a \textit{quantile coordinate transform} via $F_{\text{ref}}$ on $p_\theta(x)$ to develop a new utility function:
\begin{align}\label{eq:utility_distribution}
    \mathcal{U}_\theta(F_{\text{ref}})=\mathbb{E}_{x \sim d_0}\left[\psi\left(F_{\text{ref}}\left(p_\theta(x)\right)\right)\right], 
\end{align}
where $\psi$ is an increasing function, often called \textit{distortion function} rooted in distortion risk measure in economics and risk theory~\citep{yaari1987dual,acerbi2002spectral,wang1996premium,dhaene2012remarks,balbas2009properties}. In principle, $F_{\text{ref}}$ should approximate the distribution of $p_\theta(x)$ to capture meaningful distributional geometry. Nonetheless, it should not coincide with the exact policy-induced CDF $F_{\theta}$ of the same random variable $p_\theta(x)$. This is because, by probability integral transform~\citep{casella2024statistical}, we have $F_{\theta}(p_\theta(x))\sim\text{Uniform}(0, 1)$, which removes any dependency on $\theta$ in expectation for $\mathcal{U}_\theta(F_\theta)$. Consequently, the utility degenerates to a constant: $\mathbb{E}_{x\sim d_0}\left[\psi\left(F_{\theta}\left(p_\theta(x)\right)\right)\right]=\int_0^1 \psi(z) d z$, yielding a non-informative $\mathcal{U}_\theta$. Proposition~\ref{prop:CDF_bound} further quantifies this effect by relating the deviation between $F_{\mathrm{ref}}$ and $F_\theta$ to a distribution-aware 1-Wasserstein distance. The proof is provided in Appendix~\ref{app:theory_CDFbound}.
\begin{proposition}\label{prop:CDF_bound}
    Denote $W_1$ as 1-Wasserstein distance. If $\psi$ is $L_\psi$-Lipschitz and $\Vert f_\theta \Vert_\infty < \infty$, then
    \begin{align}
        \left| \mathcal{U}_\theta(F_{\mathrm{ref}}) - \mathcal{U}_\theta(F_\theta) \right| \leq L_\psi \Vert f_\theta \Vert_\infty W_1(\mu_{\mathrm{ref}}, \mu_\theta).
    \end{align}
\end{proposition}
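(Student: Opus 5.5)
Both utilities are integrals against the same measure, so we will pull the difference inside the expectation and then push everything forward to the pass-rate space. We start from
\begin{align*}
\left|\mathcal{U}_\theta(F_{\mathrm{ref}})-\mathcal{U}_\theta(F_\theta)\right|
\le \mathbb{E}_{x\sim d_0}\left[\left|\psi(F_{\mathrm{ref}}(p_\theta(x)))-\psi(F_\theta(p_\theta(x)))\right|\right]
\le L_\psi\, \mathbb{E}_{x\sim d_0}\left[\left|F_{\mathrm{ref}}(p_\theta(x))-F_\theta(p_\theta(x))\right|\right].
\end{align*}
The first inequality is Jensen (the triangle inequality for integrals). The second uses the Lipschitz property of $\psi$. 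Both integrands are bounded, since CDFs take values in $[0,1]$ and $\psi$ is continuous on $[0,1]$, so no integrability issues arise.

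Next we rewrite the expectation through the pushforward $\mu_\theta=(p_\theta)_\# d_0$. Because $\|f_\theta\|_\infty<\infty$, $\mu_\theta$ has density $f_\theta$ on $[0,1]$, and the change-of-variables formula gives
\begin{align*}
\mathbb{E}_{x\sim d_0}\left[\left|F_{\mathrm{ref}}(p_\theta(x))-F_\theta(p_\theta(x))\right|\right]
=\int_0^1 \left|F_{\mathrm{ref}}(t)-F_\theta(t)\right| f_\theta(t)\,dt
\le \|f_\theta\|_\infty \int_0^1 \left|F_{\mathrm{ref}}(t)-F_\theta(t)\right| dt .
\end{align*}

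The last step is the classical one-dimensional identity $W_1(\mu,\nu)=\int_{\mathbb{R}}|F_\mu(t)-F_\nu(t)|\,dt$. Both $\mu_{\mathrm{ref}}$ and $\mu_\theta$ are supported on $[0,1]$, so their CDFs agree outside $[0,1]$: both equal $0$ for $t<0$ and $1$ for $t\ge 1$. Hence the integral over $[0,1]$ equals $W_1(\mu_{\mathrm{ref}},\mu_\theta)$. Chaining the three displays gives the stated bound.

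The only point that needs care is this appeal to the CDF representation of $W_1$. We will cite it (e.g., Villani or Peyré--Cuturi) rather than reprove it. If a self-contained argument is wanted, one can derive it from the quantile coupling together with Fubini, writing $|a-b|=\int \mathbb{I}\{\min(a,b)\le t<\max(a,b)\}\,dt$. We also need to check that $\mu_{\mathrm{ref}}$ is indeed concentrated on $[0,1]$; this is implicit in the definition $F_{\mathrm{ref}}(t)=\mu_{\mathrm{ref}}([0,t])$, and we will state it explicitly. Atoms of $\mu_{\mathrm{ref}}$ cause no difficulty, since only $\mu_\theta$ is required to have a bounded density.
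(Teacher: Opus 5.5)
Your proposal is correct and follows the paper's proof essentially step for step. Both arguments use the triangle inequality, then the Lipschitz bound on $\psi$, then the pushforward to $\mu_\theta$ with density $f_\theta$, then the sup-norm bound on $f_\theta$, and finally the CDF representation of $W_1$. Your proof also states explicitly something the paper leaves implicit: both CDFs coincide outside $[0,1]$, so the integral over $[0,1]$ equals the integral over $\mathbb{R}$. In addition, you never need the probability-integral-transform identity $\mathcal{U}_\theta(F_\theta)=\int_0^1\psi(z)\,dz$.
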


\paragraph{Implementation and Interpretation.} In practice, $F_{\text{ref}}$ is instantiated using a lagged policy~(e.g., $F_{\text{ref}} = F_{\theta_\text{old}}$) and updated in a sliding-window manner akin to the target network trick adopted in deep RL algorithms~\citep{mnih2015human,lillicrap2015continuous}. The gradient is taken with respect to the current pass rate $p_\theta(x)$ by fixing $F_{\text{ref}}$ within each gradient update, analogous to a standard pointwise derivative. Theoretically, $F_{\text{ref}}$ encodes the global information about the pass rate function $p_\theta(\cdot)$ and is thus policy-dependent. Consequently, although the gradient computation appears pointwise in implementation by fixing $F_{\text{ref}}$, the resulting utility functional $\mathcal{U}_\theta$ in Eq.~\eqref{eq:utility_distribution} is inherently distribution-aware and cannot be reduced to a deterministic pointwise transformation $g(p_\theta(\cdot))$. From the perspective of distribution transport illustrated in Section~\ref{sec:optimality_utility}, the introduced quantile-based representation via $F_{\mathrm{ref}}$ aligns with a Wasserstein-type geometry, rather than relying solely on the pointwise information like the KL divergence. Recall the definition of 1-Wasserstein distance as $W_1(\mu, \nu) =  \inf _{\gamma \in \Pi(\mu, \nu)} \int\|x-y\|_1 d \gamma(x, y) = \int_{\mathbb{R}}\left|F_\mu(t)-F_\nu(t)\right| d t$ for two measures $\mu$ and $\nu$, where $\Pi(\mu, \nu)$ is the joint coupling with the marginal distributions $\mu$ and $\nu$. Similarly, Wasserstein geometry is also formed on the quantile space via the CDFs $F_\mu$ and $F_\nu$. We defer a detailed discussion about the induced geometry in Appendix~\ref{app:discussion_geometry}.

\paragraph{Our Method: CurveRL.} In our main approach, we adopt $\psi(u) = \log u$ as the distortion risk function. This choice corresponds to a risk-seeking preference that emphasizes hard prompts, a preference well aligned with RLVR's empirical goal of improving worst-case reasoning performance. The log is preferable for designers who care about tail performance, but it is not universally optimal as discussed in Section~\ref{sec:optimality_risk}. Consequently, a concise and interpretable prompt reweighting is derived in the gradient form:
\begin{align}\label{eq:ours}
    \nabla_\theta \mathcal{U}_\theta(F_{\mathrm{ref}}) &  = \mathbb{E}_{x\sim d_0}\left[ \frac{f_{\mathrm{ref}}\left(p_\theta(x)\right)}{F_\mathrm{ref}\left(p_\theta(x)\right)} \nabla_\theta p_\theta(x)\right],
\end{align}
where the weight above is interpreted as emphasizing prompts with the \textit{lower-quantile} pass rates via $1/F_{\mathrm{ref}}(p_\theta(x))$ and \textit{higher density} via $f_{\mathrm{ref}}(p_\theta(x))$. The logarithm transformation as $\psi$ explicitly elicits the geometry of the pass rate function space of $p_\theta(\cdot)$, including the density and rank information of $p_\theta(x)$, to reflect the distributional structure. This weight has the same form as \textit{reverse hazard rate} in probability and survival analysis~\citep{block1998reversed,finkelstein2002reversed} albeit in a fundamentally different context.

\subsection{Algorithm: CurveRL}\label{sec:distribution_algorithm}

\begin{wrapfigure}{r}{0.48\linewidth}   
  \vspace{-8mm}                         
  \begin{minipage}{\linewidth}\small    
\begin{algorithm}[H]
\caption{CurveRL Update at Step $t$}
\label{alg:curverl}
\begin{algorithmic}[1]
\REQUIRE Batch $\mathcal{B}$ of inputs, number of rollouts $N$, sliding window $W$ with the size $t_0 \times |\mathcal{B}|$
\STATE {\color{gray} \# Weight Estimation }
\FOR{each $p \in [\frac{1}{N}, \frac{2}{N},\ldots, \frac{N-1}{N}]$}
    \STATE Evaluate density $\hat{f}_{\mathrm{ref}}(p)$ and CDF $\hat{F}_{\mathrm{ref}}(p)$
\ENDFOR
    \STATE {\color{gray} \# Gradient Estimation}
\FOR{each input $x \in \mathcal{B}$}
    \STATE Sample $y_{1},\dots,y_{N}\sim\pi_{\theta}(\cdot\mid x)$\\
    \FOR{$i = 1$ to $N$}
    \STATE $r_i \leftarrow r(x,y_{i})$, $S_i \leftarrow \nabla_\theta\log\pi_\theta(y_{i}\mid x)$
    % \STATE Evaluate 
    \ENDFOR
    \STATE Estimate $\hat p\leftarrow\tfrac{1}{N}\sum_{i=1}^{N} r_i $
    \IF{$\hat{p}\in (0,1)$}
        \STATE $w_t(\hat{p}) \leftarrow \hat{f}_{\mathrm{ref}}(\hat{p}) / \hat{F}_{\mathrm{ref}}(\hat{p})$
        \STATE $\hat{g}(x) \leftarrow \frac{1}{N} \sum_{i=1}^N w_t(\hat{p}) \left(r_i - \hat{p}\right)S_i$
        \STATE Append $\hat{p}$ to $W$
    \ENDIF
\ENDFOR
\STATE Remove pass rates in $t-t_0$ from $W$
\STATE $\hat{g} \leftarrow \frac{1}{|\mathcal{B}|} \sum_{x\in \mathcal{B}} \hat{g}(x)$
\RETURN $\hat{g}$
\end{algorithmic}
\end{algorithm}
  \end{minipage}
  \vspace{-15mm}
\end{wrapfigure}

\noindent In Algorithm~\ref{alg:curverl}, we elaborate on the update details of CurveRL at each training step $t$.

\paragraph{Sliding Window Estimate.} Denote the input batch $\mathcal{B}$ and number of rollouts $N$. We initialize a queue $W$ with the size $t_0 \times |\mathcal{B}|$, which stores the pass rates collected from the step $t-t_0$ to $t-1$. At step $t$, we estimate $F_{\mathrm{ref}}(p)$ and $f_{\mathrm{ref}}(p)$ via a histogram estimator from this lagged window for each $p \in [\frac{1}{N}, \frac{2}{N}, \ldots, \frac{N-1}{N}]$. Therefore, $F_{\mathrm{ref}}$ is determined by old policies in the last $t_0$ steps.

\paragraph{Weight and Gradient Estimation.} Next, for each prompt $x$ in the batch $\mathcal{B}$, we perform $N$ response attempts and estimate the pass rate $\hat{p}$. We evaluate $\hat{F}_{\mathrm{ref}}(\hat{p})$ and $\hat{f}_{\mathrm{ref}}(\hat{p})$ in the lagged window. We further use $\hat{p}$ as the group-level baseline, which reduces the variance of the policy gradient estimator. Finally, we maintain $W$ as an active-pass-rate window, appending only prompts with non-vanishing gradients, i.e., $\hat{p}\in (0,1)$. We also remove pass rates collected at step $t-t_0$, ensuring a length-$t_0$ sliding window estimation for $F_{\mathrm{ref}}$ in $W$. Thus, $t_0$ is the main hyperparameter in CurveRL.

\subsection{Theoretical Interpretation and Advantages}\label{sec:distribution_theoryadvantage}

\paragraph{A Unified Explanation of Prompt Reweighting as a Prior $F_{\mathrm{ref}}$.} The distribution-aware utility in Eq.~\eqref{eq:utility_distribution} provides a unified interpretation of pointwise prompt reweighted algorithms. Specializing to the log distortion $\psi(u) = \log u$ in Eq.~\eqref{eq:ours}, Theorem~\ref{thm:distribution} shows that any pointwise weight $w_\theta(x)$, e.g., those in REINFORCE, GRPO, and MaxRL, can be equivalently represented by a prior distribution $F_{\mathrm{ref}}$ in the gradient update. In contrast, CurveRL estimates $F_{\mathrm{ref}}$ from the evolving pass-rate distribution and is thus adaptive and data-driven. The proof of Theorem~\ref{thm:distribution} is given in Appendix~\ref{app:theory_distribution}.
\begin{theorem}[\textbf{Pointwise Weight Induces a Prior $F_{\mathrm{ref}}$}]\label{thm:distribution} Denote $p=p_\theta(x)$ and the pointwise weight $w_\theta(x)=g^\prime(p_\theta(x)):=w(p)$ in the pass-rate space. Assume $\int_p^1 w(t) dt < \infty$ for any $p \in (0, 1]$. Under the distortion $\psi(u)=\log u$ in Eq.~\eqref{eq:ours}, $w(p) = f_{\mathrm{ref}}\left(p\right)/F_{\mathrm{ref}}\left(p\right)$ admits a unique $F_\mathrm{ref}$:
\begin{align}\label{eq:prior}
    F_\mathrm{ref}(p) = \exp \left(-\int_p^1 w(t) dt \right).
\end{align}
% \vspace{-4mm}
\end{theorem}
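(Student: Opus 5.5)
The identity $w(p)=f_{\mathrm{ref}}(p)/F_{\mathrm{ref}}(p)$ is a first-order ODE for $F_{\mathrm{ref}}$: since $f_{\mathrm{ref}}=F_{\mathrm{ref}}'$, it reads $\frac{d}{dp}\log F_{\mathrm{ref}}(p)=w(p)$ wherever $F_{\mathrm{ref}}>0$. The plan is to integrate this ODE from $p$ to $1$, use a boundary condition at $p=1$ to fix the constant, and then check uniqueness and that the result is a valid CDF on $[0,1]$.

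\textbf{Existence.} Define $F(p):=\exp\bigl(-\int_p^1 w(t)\,dt\bigr)$ for $p\in(0,1]$. The hypothesis $\int_p^1 w(t)\,dt<\infty$ makes this well defined and strictly positive.

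Assume $w$ is continuous, or at least locally integrable, on $(0,1]$. Then by the fundamental theorem of calculus, $F$ is (absolutely) continuous and
\[
F'(p)=w(p)F(p)
\]
at every continuity point of $w$. Hence $f_{\mathrm{ref}}:=F'$ satisfies $f_{\mathrm{ref}}/F=w$, and $F(1)=e^0=1$.

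To be a CDF, $F$ must be nondecreasing, which requires $w\ge 0$. This holds because weights are non-negative in the PRCB setting, and for $g$ increasing we have $g'\ge 0$.

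Extend $F$ to $p=0$ by $F(0):=\lim_{p\to0^+}F(p)=\exp\bigl(-\int_0^1 w\bigr)$. The limit exists by monotonicity. It equals $0$ when $\int_0^1 w=\infty$; for example, MaxRL gives $F(p)=p$ (uniform prior) and GRPO-type weights are integrable. If $\int_0^1 w<\infty$, for instance REINFORCE with $w\equiv1$ and $F(p)=e^{p-1}$, then $\mu_{\mathrm{ref}}$ has an atom at $0$. That atom does not affect the ratio on $(0,1]$, which is all the statement concerns.

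\textbf{Uniqueness.} Let $G$ be any CDF on $[0,1]$ with density $g$ on $(0,1]$ satisfying $g/G=w$ on $(0,1]$. The ratio presupposes $G>0$ on $(0,1]$, so $\log G$ is absolutely continuous there with derivative $w$. Integrating from $p$ to $1$ gives
\[
\log G(1)-\log G(p)=\int_p^1 w(t)\,dt .
\]
Every CDF supported on $[0,1]$ has $G(1)=1$, so $G(p)=\exp\bigl(-\int_p^1 w\bigr)=F(p)$. Equivalently, $(G/F)'=0$ on $(0,1]$ and the two agree at $p=1$.

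\textbf{Main obstacle.} The delicate step is the regularity and boundary bookkeeping rather than any computation. Two points need care:
\begin{itemize}
\item the normalization $G(1)=1$ is what pins down the otherwise free multiplicative constant in the ODE solution, so it must be stated as the boundary condition;
\item the behaviour at $p\to0$ is not controlled by the hypothesis, which only concerns $(0,1]$, so uniqueness should be asserted on $(0,1]$, with the mass at $0$ determined as $\exp(-\int_0^1 w)$.
\end{itemize}
For the regularity, I would state that $w$ is assumed non-negative and locally integrable, so that all identities hold almost everywhere. Under that assumption, absolute continuity of $\log G$ justifies the integration step.
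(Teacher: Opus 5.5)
Your proposal is correct and follows essentially the same route as the paper. You rewrite $w=f_{\mathrm{ref}}/F_{\mathrm{ref}}$ as $(\log F_{\mathrm{ref}})'=w$, integrate over $[p,1]$ using $F_{\mathrm{ref}}(1)=1$, verify monotonicity from $w\ge 0$ and $f_{\mathrm{ref}}=F_{\mathrm{ref}}\,w$ by the chain rule, and get uniqueness by integrating the same identity for any competing CDF.

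Your extra care about regularity (identities holding a.e.\ under local integrability) and about the atom of mass $\exp(-\int_0^1 w)$ at $0$ is a sound refinement; the paper mentions the atom only in its REINFORCE example. One small misplacement: GRPO's weight is integrable on $(0,1)$, so it belongs to the atom case with mass $e^{-\pi}$, not alongside MaxRL in the case $F(0)=0$.
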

Next, we consider the degeneration case of our distribution-aware prompt reweighting from the distribution of $F_{\mathrm{ref}}$. In \Cref{cor:degeneration}, we show that $F_{\mathrm{ref}}$ is $\text{Uniform}(0,1)$, i.e., $F_{\mathrm{ref}}(t)=t$, if and only if the gradients of distribution-aware and pointwise utility functionals coincide under the same distortion measure $\psi$, i.e., $\nabla_\theta \mathcal{U}_\theta(F_{\mathrm{ref}})=\nabla_\theta J_\psi(\theta)$. 
\begin{corollary}[Degeneration of Distribution-Aware Prompt Reweighting]\label{cor:degeneration}
    Consider the distribution-aware utility function $\mathcal{U}_\theta(F_{\mathrm{ref}})$ and the pointwise utility function $J_\psi(\theta)$ with the same distortion function $\psi$. Then,
    \begin{align}
        \nabla_\theta \mathcal{U}_\theta(F_{\mathrm{ref}})=\nabla_\theta J_\psi(\theta) \quad  \text{if and only if} \quad F_{\mathrm{ref}}(t)=t, \quad t\in [0,1].
    \end{align}
\end{corollary}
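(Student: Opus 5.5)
We would compare the two gradients directly through their induced prompt weights, reducing the gradient identity to a pointwise identity in the pass-rate variable. By the chain rule, with $F_{\mathrm{ref}}$ held fixed as in the implementation paragraph,
\begin{align*}
\nabla_\theta \mathcal{U}_\theta(F_{\mathrm{ref}}) = \mathbb{E}_{x\sim d_0}\left[\psi'\big(F_{\mathrm{ref}}(p_\theta(x))\big) f_{\mathrm{ref}}(p_\theta(x)) \nabla_\theta p_\theta(x)\right],
\end{align*}
while Eq.~\eqref{eq:objective_pointwise} gives $\nabla_\theta J_\psi(\theta) = \mathbb{E}_{x\sim d_0}[\psi'(p_\theta(x)) \nabla_\theta p_\theta(x)]$. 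The two gradients thus correspond to the pass-rate weights $w_{\mathrm{DA}}(t) = \psi'(F_{\mathrm{ref}}(t)) f_{\mathrm{ref}}(t)$ and $w_{\mathrm{PW}}(t) = \psi'(t)$ on $[0,1]$.

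\emph{Sufficiency.} If $F_{\mathrm{ref}}(t) = t$, then $f_{\mathrm{ref}} \equiv 1$, so $w_{\mathrm{DA}}(t) = \psi'(t) = w_{\mathrm{PW}}(t)$ and the gradients agree for every $\theta$. In fact $\mathcal{U}_\theta(F_{\mathrm{ref}}) = J_\psi(\theta)$ identically in this case.

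\emph{Necessity.} We read the ``if and only if'' as equality of the reweighting rules, i.e.\ equality of the gradients for every policy and every admissible direction $\nabla_\theta p_\theta$. This is the natural reading given Definition~\ref{def:weight}, since the functional derivative is identified through arbitrary perturbations $h \in L^2(d_0)$. Under this reading, equality of the Gateaux derivatives $\int (w_{\mathrm{DA}} - w_{\mathrm{PW}})(p_\theta(x))\, h(x)\, d_0(x)\,dx = 0$ for all $h$ forces $w_{\mathrm{DA}}(p_\theta(x)) = w_{\mathrm{PW}}(p_\theta(x))$ for $d_0$-a.e.\ $x$. Ranging over policies whose pass rates cover $[0,1]$ then yields the pointwise ODE
\begin{align*}
\psi'(F_{\mathrm{ref}}(t))\, F_{\mathrm{ref}}'(t) = \psi'(t), \qquad t \in (0,1),
\end{align*}
that is, $\frac{d}{dt}\big[\psi(F_{\mathrm{ref}}(t)) - \psi(t)\big] = 0$. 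Hence $\psi(F_{\mathrm{ref}}(t)) = \psi(t) + c$.

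To pin down $c$, evaluate at $t=1$. Since $F_{\mathrm{ref}}$ is the CDF of a measure on $[0,1]$, $F_{\mathrm{ref}}(1) = 1$, which gives $c = 0$. Strict monotonicity of $\psi$ then gives $F_{\mathrm{ref}}(t) = t$. For $\psi = \log$ this is consistent with Theorem~\ref{thm:distribution}: inserting $w(t) = 1/t$ into Eq.~\eqref{eq:prior} gives $\exp(-\int_t^1 s^{-1}\,ds) = t$, so the uniqueness statement there already delivers the conclusion in the main case.

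\emph{Main obstacle.} The hard step is the passage from equality of gradients to pointwise equality of weights. For a single fixed $\theta$, the equation $\nabla_\theta \mathcal{U} = \nabla_\theta J_\psi$ is only one vector identity, and it cannot force the weights to coincide. I would therefore state explicitly that the equivalence is understood as an identity of weight functions, holding for all perturbations $h$ (equivalently, for all policies). I would also impose the mild regularity this argument needs: $\psi$ strictly increasing and $C^1$ on $(0,1]$, and $F_{\mathrm{ref}}$ absolutely continuous so that the ODE can be integrated. Values at the endpoint $t=0$ can then be handled by right-continuity.
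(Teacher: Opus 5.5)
Your proposal is correct and follows essentially the same route as the paper: equate the induced weights $\psi'(F_{\mathrm{ref}}(p)) f_{\mathrm{ref}}(p) = \psi'(p)$, integrate $\frac{d}{dp}\psi(F_{\mathrm{ref}}(p)) = \frac{d}{dp}\psi(p)$ from $p$ to $1$ using $F_{\mathrm{ref}}(1)=1$, invert $\psi$ by monotonicity, and note that the converse is immediate. You also read the equivalence explicitly as an identity of weight functions (for all perturbations $h$ or all policies), and assume strict monotonicity of $\psi$ and absolute continuity of $F_{\mathrm{ref}}$; this makes precise a step the paper simply asserts when it says gradient equality implies the pointwise weight identity.
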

The proof of \Cref{cor:degeneration} is provided in Appendix~\ref{app:discussion_equivalence}. This identifies the uniform reference distribution as the boundary case in which the quantile transform does not change the gradient direction.  Away from this boundary, the effect of distribution-aware reweighting can be characterized by a relative multiplier $R_\psi(p) = \psi^\prime(F_{\mathrm{ref}}(p))f_{\mathrm{ref}}(p)/\psi^\prime(p)$. 

\paragraph{Sufficient Conditions for the Comparison of Weight Magnitude.} We further derive sufficient conditions to compare the weight magnitude between distribution-aware and pointwise prompt weighting methods based on $R_\psi(p)$. In particular, when $R^\prime_\psi(p)<0$, our distribution-aware method assigns relatively more normalized weights to low-pass-rate prompts as opposed to the pointwise counterpart without $F_{\mathrm{ref}}$, reflecting more risk-seeking behavior or aggressive reweighting against low-pass-rate prompts. The detailed discussion is provided in Appendix~\ref{app:discussion_weightcomparison}.

\paragraph{Advantage 1: Distribution-aware and Data-Driven Reweighting.} The classical weight function derived from the pointwise utility functional solely relies on the local information or raw value of $p_\theta(x)$ at each prompt. By contrast, our weight function is derived from a distribution-aware utility functional that characterizes the geometry~(e.g., ranking and density in Eq.~\eqref{eq:ours}) in the pass-rate function space based on the functional derivative defined in Definition~\ref{def:weight}. More importantly, as shown in Theorem~\ref{thm:distribution}, pointwise weights can be treated as special prior distribution of $F_{\mathrm{ref}}(p_\theta(x))$, while our weight function of CurveRL in Eq.~\eqref{eq:ours} is \textit{adaptive and data-driven}, which is updated periodically via non-parametric statistical approaches, e.g., a histogram function~(see Section~\ref{sec:distribution_algorithm}). Our method is more adaptive than the exogenous prompt schedule in curriculum learning~(see Appendix~\ref{app:discussion_curriculum}).

\paragraph{Advantage 2: Mitigating Weight Collapse by Introducing Quantile Coordinate.} As mentioned in Section~\ref{sec:distribution_method}, the  pointwise utility function suffers from the weight collapse issue when $p_\theta(x)\approx 0$ or $1$, leading to nearly indistinguishable learnability in the weights. By mapping the absolute value of pass rates to the CDF/quantile coordinate via $F_{\mathrm{ref}}$ in Eq.~\eqref{eq:utility_distribution}, CurveRL becomes robust to miscalibration of pass rates and enjoys a \textit{monotone calibration invariance property}~(see Appendix~\ref{app:theory_invariance}). Unlike the evolving non-stationary pass-rate space in the learning dynamics, the CDF/quantile space via $ F_\mathrm{ref}(p_\theta(x))$ in CurveRL induces a more uniform difficulty spectrum and is thus more stationary as $F_{\mathrm{ref}}$ is close to $F_\theta$ as analyzed in Proposition~\ref{prop:CDF_bound}. This potentially leads to more stable optimization.

\section{Experiments}
\label{sec:experiment}

\label{sec:exp-setup}
\paragraph{Experimental Setups.} We train Qwen3-1.7B-Base and Qwen3-4B-Base on POLARIS-53K~\citep{an2025polaris}, a corpus of approximately 53K mathematical reasoning prompts, using the verl framework~\citep{sheng2025hybridflow}. We compare CurveRL with GRPO~\citep{shao2024deepseekmath} and MaxRL~\citep{tajwar2026maximum}. All methods use a shared training loop and differ only in the prompt-weighting rule. We adopt the batch size $|\mathcal{B}|=256$, the number of rollouts $N=8$ for all algorithms, and $t_0=10$ in Algorithm~\ref{alg:curverl} for CurveRL. Training-time correctness is verified using \textsc{Math-Verify}~\citep{kydlicek_math_verify}. In evaluation, we use $\mathrm{pass@}k$~\citep{chen2021evaluating} as the primary metric, with $k\in\{1,2,4,\dots,1024\}$ for Qwen3-1.7B-Base and $k\in\{1,2,4,\dots,512\}$ for Qwen3-4B-Base. $\mathrm{pass@}k$ measures the probability that at least one of $k$ independently sampled responses is correct, serving as a proxy for a model's capability under a fixed sampling budget. In practice, $\mathrm{pass@}1$ is the raw mean
accuracy, while $\mathrm{pass@}k$ for $k\geq 2$ uses $1000$ best-of-$k$ bootstrap resamples (with replacement) of size $k$, averaged across prompts. We perform $2048$ rollouts per prompt for Qwen3-1.7B-Base and $1024$ for Qwen3-4B-Base to reduce estimation variance. More implementation details are provided in Appendix~\ref{app:impl-details}.

\paragraph{Evaluation.} We evaluate trained models on eight challenging mathematical reasoning
benchmarks:
\text{AIME 2025}~\citep{balunovic2025matharena}, \text{BeyondAIME}~\citep{seed2025thinking},
\text{HMMT 02/25}~\citep{balunovic2025matharena}, \text{HMMT 02/26}~\citep{balunovic2025matharena}, and
\text{MATH-500}~\citep{hendrycks2021measuring} in the main content, and
\text{BRUMO 2025}~\citep{balunovic2025matharena},
\text{HMMT 11/25}~\citep{balunovic2025matharena}, and
\text{Minerva}~\citep{lewkowycz2022solving} in Appendix~\ref{app:supp-results}.
% Together, these benchmarks span Olympiad-level problems (AIME, HMMT, BRUMO, BeyondAIME) and broader undergraduate-level mathematics (MATH-500, Minerva), allowing us to assess both peak reasoning ability and breadth of competence.

\subsection{Main Results}
\label{sec:main-results}

\begin{table}[b!]
\centering
% \vspace{-5mm}
\caption{\textbf{Main results on five math reasoning benchmarks.} We report pass@1 and pass@64 (\%) for each method on Qwen3-1.7B-Base and Qwen3-4B-Base. The best per column is in bold.}
\label{tab:main-results}
\setlength{\tabcolsep}{3pt}
% \small
\resizebox{\textwidth}{!}{%
\begin{tabular}{lcccccccccccc}
\toprule
\multicolumn{1}{c}{\textbf{Method}} & \multicolumn{2}{c}{\textbf{AIME 2025}} & \multicolumn{2}{c}{\textbf{BeyondAIME}} & \multicolumn{2}{c}{\textbf{HMMT 02/25}} & \multicolumn{2}{c}{\textbf{HMMT 02/26}} & \multicolumn{2}{c}{\textbf{MATH-500}} & \multicolumn{2}{c}{\textbf{Avg.}} \\
\cmidrule(lr){2-3} \cmidrule(lr){4-5} \cmidrule(lr){6-7} \cmidrule(lr){8-9} \cmidrule(lr){10-11} \cmidrule(lr){12-13}
 & pass@1 & pass@64 & pass@1 & pass@64 & pass@1 & pass@64 & pass@1 & pass@64 & pass@1 & pass@64 & pass@1 & pass@64 \\
\midrule
\multicolumn{13}{c}{\emph{Qwen3-1.7B-Base}} \\
\midrule
Base & 3.1 & 34.6 & 1.1 & 16.9 & 0.2 & 10.6 & 1.1 & 14.1 & 57.1 & 92.8 & 12.5 & 33.8 \\
GRPO & 4.8 & 29.3 & 1.1 & 15.6 & 0.5 & 10.0 & \textbf{3.8} & 15.5 & 70.5 & 92.6 & 16.1 & 32.6 \\
MaxRL & 6.7 & 36.5 & 1.3 & 21.8 & 1.3 & 18.4 & 3.2 & 17.7 & \textbf{72.1} & 94.0 & 16.9 & 37.7 \\
\rowcolor{gray!18} \textbf{CurveRL} & \textbf{7.7} & \textbf{40.9} & \textbf{1.4} & \textbf{23.6} & \textbf{1.7} & \textbf{19.3} & 3.0 & \textbf{21.0} & 71.6 & \textbf{94.6} & \textbf{17.1} & \textbf{39.9} \\
\midrule
\multicolumn{13}{c}{\emph{Qwen3-4B-Base}} \\
\midrule
Base & 6.9 & 44.3 & 4.0 & 23.8 & 1.0 & 22.5 & 3.0 & 20.6 & 67.5 & 95.7 & 16.5 & 41.4 \\
GRPO & 19.1 & 48.6 & 8.0 & 29.0 & 5.8 & 19.0 & 9.9 & 27.2 & 82.8 & 94.7 & 25.1 & 43.7 \\
MaxRL & 18.9 & 52.6 & 8.2 & 32.5 & 7.9 & 27.8 & 11.8 & 33.9 & \textbf{85.1} & 96.6 & 26.4 & 48.7 \\
\rowcolor{gray!18} \textbf{CurveRL} & \textbf{19.6} & \textbf{55.6} & \textbf{8.3} & \textbf{40.4} & \textbf{9.0} & \textbf{32.4} & \textbf{11.9} & \textbf{40.9} & 84.5 & \textbf{97.7} & \textbf{26.7} & \textbf{53.4} \\
\bottomrule
\end{tabular}%
}
\end{table}

\paragraph{Improved Trade-off Frontier.} \Cref{tab:main-results} reports evaluation results on five reasoning benchmarks, with similar results on the remaining three benchmarks deferred to Appendix~\ref{app:supp-table}. Across both Qwen3-1.7B-Base and Qwen3-4B-Base, CurveRL achieves the highest pass@$64$ on all benchmarks, indicating stronger performance on hard prompts. Compared with MaxRL, the strongest baseline, it improves average pass@$64$ by $\mathbf{+5.9\%}$ and $\mathbf{+9.7\%}$, respectively. Importantly, CurveRL also achieves a higher average pass@$1$ on both model sizes, indicating that its improvement in Best-of-$N$ performance does not come at the expense of single-shot accuracy. This mitigates the pass@$k$ degradation issue observed in prior work~\citep{yue2025does,chen2025pass}. Overall, our results show that the distribution-aware reweighting of CurveRL improves the Pareto frontier between pass@$1$ and pass@$k$ over the existing pointwise reweighting methods.

\begin{figure}[!t]
    \centering
    \includegraphics[width=1.0\linewidth]{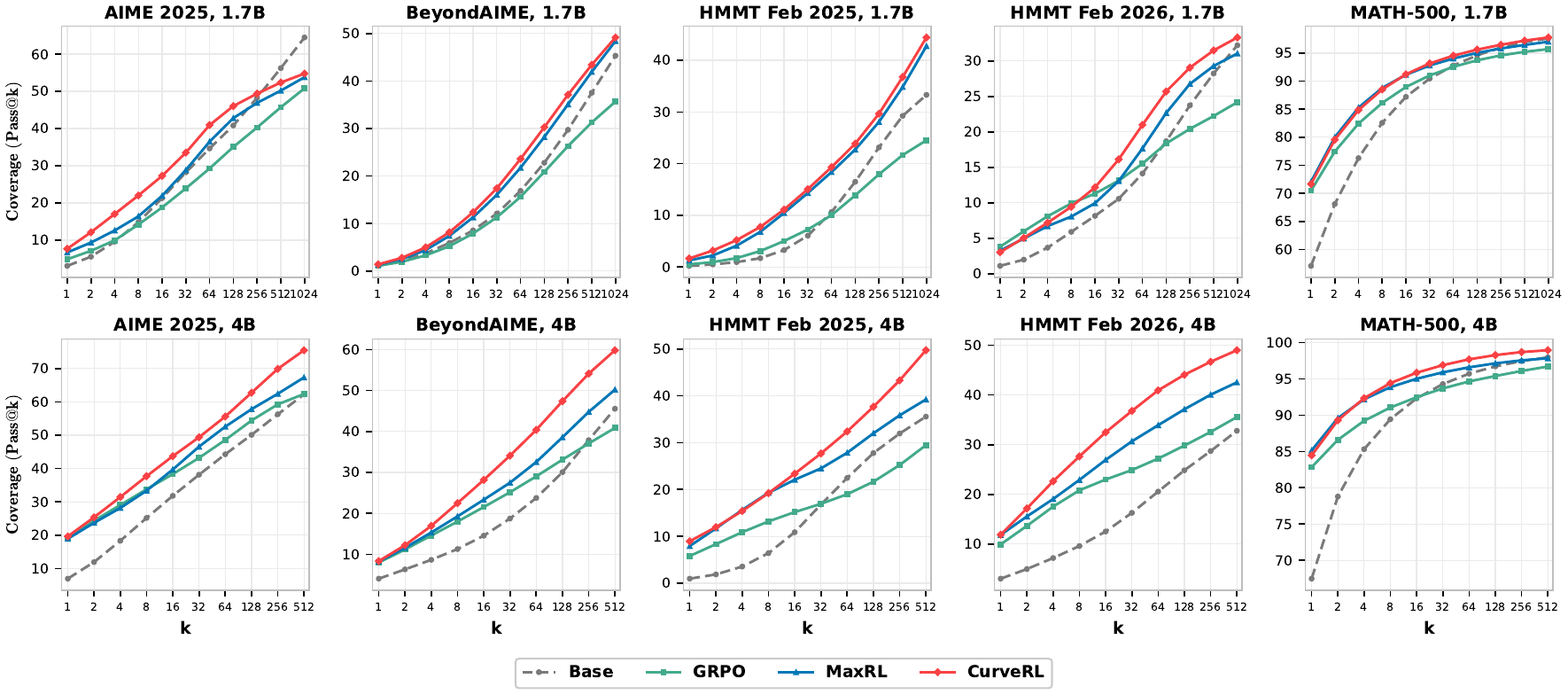}
    % \vspace{-5mm}
    \caption{ \textbf{Pass@$k$ scaling on five representative
    benchmarks.} Top row: Qwen3-1.7B-Base, $k\in\{1,\dots,1024\}$. Bottom
    row: Qwen3-4B-Base, $k\in\{1,\dots,512\}$. CurveRL outperforms GRPO and
    MaxRL across the full range of $k$ on both model sizes, and exceeds the pretrained base model on most panels.}
    \label{fig:main-passk}
    % \vspace{-5mm}
\end{figure}

\paragraph{Analysis of Pass@$k$ Scaling.} Following prior work~\citep{yue2025does,tajwar2026maximum,wu2025invisible}, we further examine the capability boundary of each algorithm through pass@$k$ scaling over a wide range of $k$. \Cref{fig:main-passk} reports results on five representative benchmarks, while the remaining three benchmarks in \Cref{fig:appendix-passk} of Appendix~\ref{app:supp-figure} follow similar trends. CurveRL consistently outperforms the other algorithms across a wide range of pass@$k$ values in most settings, indicating a higher upper bound on reasoning capability. Consistent with prior observations~\citep{yue2025does}, GRPO and MaxRL exhibit varying degrees of pass@$k$ degradation relative to the pretrained base model, whereas CurveRL exceeds the pretrained base model in $9$ out of $10$ panels of \Cref{fig:main-passk} across both model sizes and all values of $k$. Meanwhile, CurveRL’s advantage over the strongest baseline, MaxRL, grows as both the model scale and $k$ increase on challenging reasoning benchmarks, reaching approximately $\mathbf{+7.3\%}$ on HMMT Feb 2026 at $k=1024$ with Qwen3-1.7B-Base and $\mathbf{+26.8\%}$ on HMMT Feb 2025 at $k=512$ with Qwen3-4B-Base. This widening advantage suggests that CurveRL effectively broadens the search space of reasoning trajectories, enabling the policy to discover rare correct solutions. This property may further benefit both reward-free and reward-guided test-time scaling methods, including scalable Best-of-$N$ selection with strong statistical signals~\citep{kang2026scalable,fu2026deep}, lightweight probing~\citep{guo2026mining}, and process reward models (PRMs)~\citep{lightman2023let}.

\subsection{Mechanism Analysis of CurveRL}
\label{sec:analysis}

\paragraph{Difficulty Distribution Analysis.} We investigate how the algorithm in the post-training changes the RLVR model's capability over prompts with distinct difficulty levels. We partition prompts by empirical pass rate $\hat p$ over 2048 rollouts into four groups with different prior difficulty levels: \emph{unsolvable} ($\hat p=0$), \emph{hard} ($\hat p\in(0,1/2]$), \emph{medium} ($\hat p\in(1/2,1)$), and \emph{easy} ($\hat p=1$). As illustrated in \Cref{fig:difficulty-main} on two benchmarks BRUMO 2025 and MATH-500, CurveRL consistently reduces the fraction of \emph{unsolvable} prompts, especially on BRUMO 2025. This verifies that CurveRL has a larger potential to expand the real solvability boundary by making more unsolvable problems tractable.

% \begin{wrapfigure}{r}{0.6\textwidth}
\begin{figure}[htbp]
    \centering
    \vspace{-2mm}
    \includegraphics[width=0.8\linewidth]{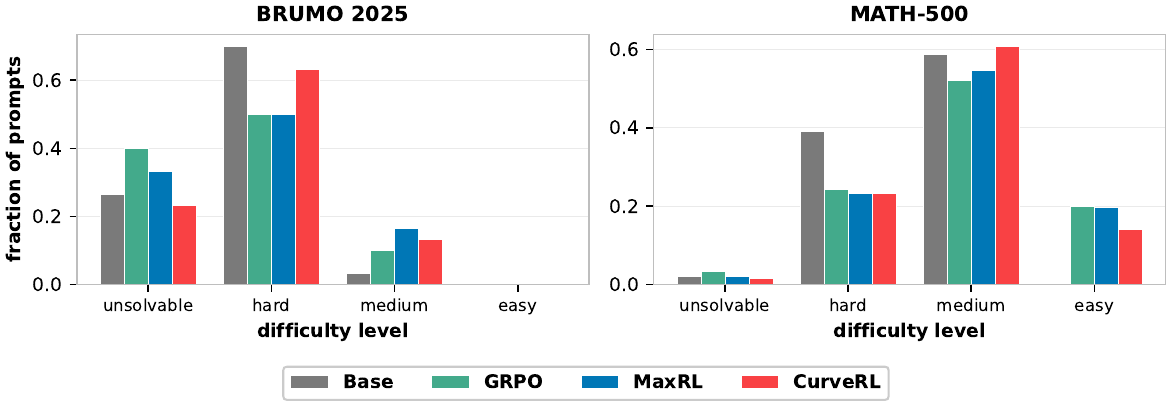}
    % \vspace{-5mm}
    \caption{Prompt Distribution across Difficulty in Qwen3-1.7B-Base.}
    \label{fig:difficulty-main}
    % \vspace{-6mm}
\end{figure}
% \end{wrapfigure}

\begin{wraptable}{r}{0.45\textwidth}
    \centering
    \vspace{-0mm}
    \caption{Majority@$2048$ on Qwen3-1.7B-Base.}
    \label{tab:majority-2048}
    % \small
    % \setlength{\tabcolsep}{4pt}
    \scalebox{0.95}{
\begin{tabular}{ccc}
    \toprule
    Method & BRUMO 2025 & MATH-500 \\
    \midrule
    Base   & 13.3 & 70.0 \\
    GRPO   & 16.7 & 74.6 \\
    MaxRL  & 16.7 & 77.2 \\
    \rowcolor{gray!18} \textbf{CurveRL} & \textbf{23.3} & \textbf{78.8} \\
    \bottomrule
    \end{tabular}
    }
    \vspace{-2mm}
\end{wraptable}
\noindent Moreover, CurveRL does not merely convert unsolvable prompts into barely solvable ones. It shifts a larger fraction of prompts into the \emph{medium} regime, where $\hat p>1/2$ and majority voting~\citep{wang2022self} becomes statistically reliable: repeated sampling can amplify individual correctness without requiring additional reward signals. This explains the stronger reward-free test-time scaling of CurveRL, as reflected by the higher majority@$2048$ scores on BRUMO 2025 and MATH-500
in \Cref{tab:majority-2048}.

\paragraph{Distribution-Aware and Data-Driven Weighting.} \Cref{fig:weight-comparison} visualizes how CurveRL differs from pointwise weighting algorithms from the perspective of prompt weights in the pass-rate space. 
\begin{enumerate}[leftmargin=*]
    \item The left panel suggests the evolution of the empirical reference density $\hat f_{\mathrm{ref}}(\hat p)$ during training: the pass-rate distribution shifts from a sharp concentration near the lowest pass-rate bins toward higher-pass-rate regions, becoming smoother and more spread. This provides empirical evidence for the distribution-transport view in Eq.~\eqref{eq:distribution_transport} of \Cref{sec:optimality_utility}. 
    \item The center panel showcases the resulting weight dynamics $w_t(\hat p)=\hat f_{\mathrm{ref}}(\hat p)/\hat F_{\mathrm{ref}}(\hat p)$ in CurveRL, which adapts to the evolving estimates of reference distribution $F_{\mathrm{ref}}$ and $f_{\mathrm{ref}}$. The CDF denominator preserves an emphasis on lower-quantile prompts, while the density numerator makes the allocation data-driven by assigning weights to pass-rate regions that are populated under the current policy. 

    \item The right panel contrasts this adaptive behavior with the static pointwise weights of GRPO and MaxRL, which can also be interpreted as a fixed prior in \Cref{thm:distribution}. With a symmetric weight $1/\sqrt{\hat{p}(1-\hat{p})}$, GRPO assigns large weights to both low- and high-pass-rate bins, while MaxRL employs $1/\hat{p}$ and monotonically emphasizes prompts with smaller pass rates.
\end{enumerate}

\begin{figure}[htbp]
    \centering
    \includegraphics[width=1.0\linewidth]{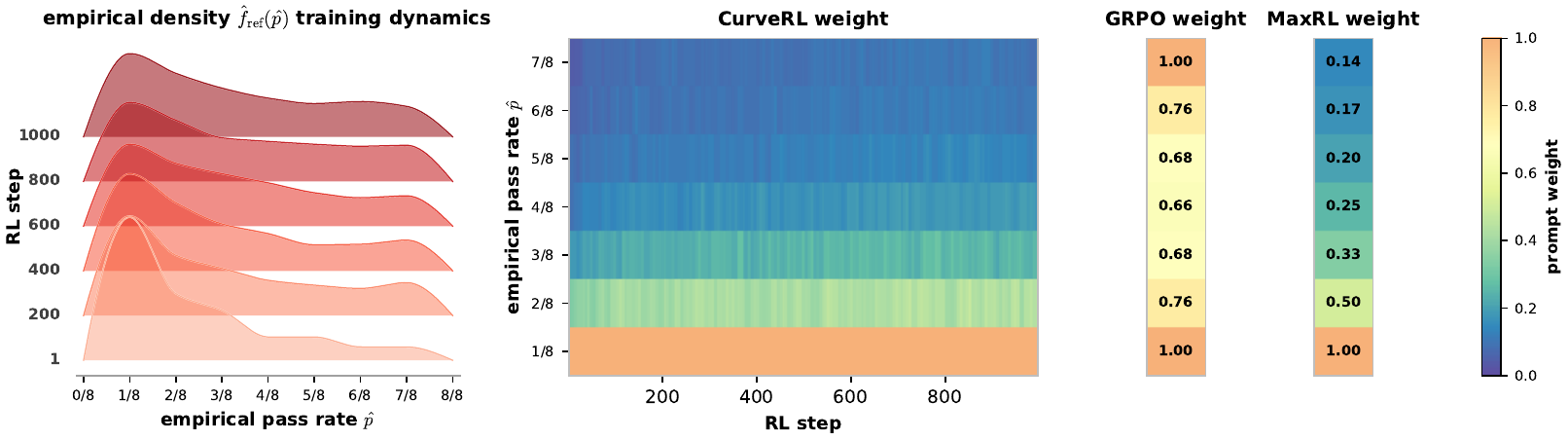}
    % \vspace{-3mm}
    \caption{\textbf{Distribution-aware and data-driven weighting of CurveRL on Qwen3-4B-Base.}
\emph{Left}: dynamics of CurveRL's empirical pass-rate density $\hat f_{\mathrm{ref}}(\hat p)$, smoothed by interpolation. \emph{Center}: dynamics of CurveRL's adaptive prompt weight $w_t(\hat p)=\hat f_{\mathrm{ref}}(\hat p)/\hat F_{\mathrm{ref}}(\hat p)$ . \emph{Right}: static GRPO weight $1/\sqrt{\hat p(1-\hat p)}$ and MaxRL weight $1/\hat p$ rescaled to $[0,1]$.}
% \vspace{-6mm}
    \label{fig:weight-comparison}
\end{figure}

\paragraph{Sensitivity Analysis of Sliding Window Size $t_0$.}
\label{sec:ablation} We further compare our selected sliding-window size $t_0=10$ batches in Algorithm~\ref{alg:curverl} with two extreme choices: a single-batch window ($t_0=1$) and a much wider window ($t_0=50$). \Cref{tab:sensitivity-t0} suggests that CurveRL with $t_0=10$ achieves a better overall trade-off, obtaining the largest pass@$64$ on all benchmarks while remaining competitive on pass@$1$ relative to the alternatives. Specifically, a smaller window size can introduce larger variance when estimating the pass-rate distribution, while a larger window size relies more heavily on older policies and thus introduces bias in estimating the current pass-rate distributions. Notably, even with $t_0=1$ or $50$, CurveRL still matches or exceeds the strongest pointwise baseline, MaxRL, on most benchmarks. Pass@$k$ scaling curves for CurveRL under the two $t_0$ are provided in \Cref{fig:sensitivity-passk} in Appendix~\ref{app:sensitivity-t0}.

\begin{table}[t!]
\centering
% \vspace{-4mm}
\caption{\textbf{Sensitivity to sliding window size $t_0$} regarding the pass@1 and pass@64 (\%) performance on Qwen3-1.7B-Base.}
\label{tab:sensitivity-t0}
\scalebox{0.72}{
\begin{tabular}{lcccccccccccc}
\toprule
\multicolumn{1}{c}{\textbf{$t_0$}} & \multicolumn{2}{c}{\textbf{BRUMO 2025}} & \multicolumn{2}{c}{\textbf{AIME 2025}} & \multicolumn{2}{c}{\textbf{HMMT 11/25}} & \multicolumn{2}{c}{\textbf{HMMT 02/26}} & \multicolumn{2}{c}{\textbf{MATH-500}} & \multicolumn{2}{c}{\textbf{Avg.}} \\
\cmidrule(lr){2-3} \cmidrule(lr){4-5} \cmidrule(lr){6-7} \cmidrule(lr){8-9} \cmidrule(lr){10-11} \cmidrule(lr){12-13}
 & pass@1 & pass@64 & pass@1 & pass@64 & pass@1 & pass@64 & pass@1 & pass@64 & pass@1 & pass@64 & pass@1 & pass@64 \\
\midrule
$1$ & 16.9 & 44.9 & 7.1 & 36.1 & 3.2 & 18.3 & \textbf{3.5} & 18.1 & \textbf{72.0} & 94.2 & 20.5 & 42.3 \\
\rowcolor{gray!18} $10$ (default) & \textbf{18.0} & \textbf{45.3} & \textbf{7.7} & \textbf{40.9} & \textbf{3.9} & \textbf{20.6} & 3.0 & \textbf{21.0} & 71.6 & \textbf{94.6} & \textbf{20.8} & \textbf{44.5} \\
$50$ & 15.8 & 43.3 & 6.0 & 37.5 & 3.6 & 19.9 & 2.5 & 17.5 & 69.3 & 94.1 & 19.4 & 42.5 \\
\bottomrule
\end{tabular}%
}
% \vspace{-3mm}
\end{table}

\section{Related Work}\label{app:relatedwork}

\paragraph{Action-level Exploration in Standard RL vs Context-level Distribution Control in RLVR.} In standard RL, the state-visitation distribution is often reshaped indirectly through: (i) importance weighting in off-policy correction~\citep{sutton1998reinforcement,mahmood2015emphatic}, (ii) experiment-design-style transition selection~\citep{mehta2021experimental,blau2022optimizing}, (iii) goal-conditioning~\citep{liu2022goal,eysenbach2022contrastive}, (iv) replay variants such as prioritized experience replay~\citep{lin1992self,schaul2015prioritized,andrychowicz2017hindsight,kapturowski2018recurrent,pan2022understanding}, all of which act on \emph{previously collected} data. RLVR with prompt reweighting~\citep{shao2024deepseekmath,guo2025deepseek,yu2025dapo,tajwar2026maximum,xiong2025reinforce,parashar2025curriculum} instead manipulates the \emph{online} prompt distribution directly. Action-level exploration becomes more challenging in this setting, as the trajectory-level action space is combinatorially large and the contextual-bandit structure provides no state-visitation dynamics to leverage~\citep{cui2025entropy,yue2025does,dai2025cde}. Therefore, the direct context distribution control by prompt reweighting supplies a new axis of information acquisition, which is exactly formulated by the PRCB framework we develop in Section~\ref{sec:optimality_framework}.

\paragraph{Existing Strategies of Context-level Distribution Control in RLVR.} There are mainly three classes of methods that exploit the freedom to shape the prompt distribution in RLVR. Our work casts all three mechanisms inside a single \textit{effective context distribution} $d_\theta(x)\propto d_0(x)w_\theta(x)$ and supplies the optimality principle for $w_\theta$ that has been missing from this line of work.
\begin{itemize}[leftmargin=8mm]
    \item \emph{Sample selection and dynamic filtering}: \cite{yu2025dapo,mao2026dynamics} discard prompts whose empirical pass rate equals $0$ or $1$, serving as a heuristic yet effective way to select the most learnable prompts. \cite{xiong2025reinforce} focuses on adaptively adjusting the number of samples for each prompt and prioritizes the harder prompts based on the prompt reweighting framework with non-linear RL objectives. We instead view reweighting as a direct context distribution control mechanism to improve the policy improvement by shaping learning dynamics, and establish the reweighting optimality from the perspective of utility functionals, which are risk-dependent. 

    \item \emph{Curriculum strategies}: Curriculum learning in RLVR, rooted in classical curriculum learning~\citep{bengio2009curriculum,narvekar2020curriculum}, schedules prompts from easy to hard~\citep{parashar2025curriculum,rajaraman2026learning,chen2025self}. As analyzed in Appendix~\ref{app:discussion_curriculum}, curriculum learning relies on an exogenous data schedule to help the policy optimization to chase the moving learnable window, while our method transforms the learnable window into a stationary and nearly fixed one by quantile reparameterization, thus enhancing the learning dynamics.

    \item \emph{Prompt reweighting}: \cite{davis2025objective} first formalizes the prompt weighting objective for LLM reasoning. This motivates MaxRL~\citep{tajwar2026maximum} by introducing the maximum likelihood principle into the RL objective and the adaptive sampling framework of \cite{xiong2025reinforce} through the lens of adaptive sample selection. This prompt reweighting framework is general to accommodate a flurry of RLVR algorithms, including the pass@K optimization in \cite{walder2025pass}. From the perspective of the model's edge of competence, \cite{zhang2025interplay,huang2026learning} also show that selecting prompts or manipulating the training distribution is crucial for effective learning. Our study focuses on this line of work in context distribution control of RLVR, as it allows more principled analysis than sample selection and curriculum learning, which can also be considered as special prompt reweighting in general.
\end{itemize}

\paragraph{Theoretical understanding of GRPO and Its Variants.} A growing GRPO family modifies the algorithm primarily through the denominator and group-level normalization: Dr.GRPO~\citep{liu2025understanding} removes the standard-deviation factor to debias the gradient; GVPO~\citep{zhang2025gvpo}, GPG~\citep{chu2025gpg}, and follow-ups~\citep{ge2026grpo,yang2026your,chen2025beyond} explore alternative group-level rescalings. These modifications are typically motivated by variance reduction or empirical fixes, leaving the form of the denominator unjustified at the population level. A separate strand analyzes GRPO and REINFORCE through the lens of U-statistics~\citep{zhou2026demystifying} and optimization dynamics~\citep{suk2026Optimization,mroueh2025reinforcement,huang2026learning}. In particular, \cite{huang2026learning} identifies a \emph{replay effect} when the difficulty spectrum of $p_\theta(x)$ is sufficiently smooth. MaxRL~\citep{tajwar2026maximum} imports the maximum likelihood principle into the policy gradient, which seemingly entails the optimality of maximum likelihood in statistics. However, we argue this is essentially a coincidence: maximum likelihood enjoys optimality under certain regularity conditions in statistical estimation, whereas policy optimization is mode-seeking instead of statistical estimation, where the optimality of $\log p_\theta$ does not transfer in a straightforward way. Our utility-functional view recovers MaxRL as one specific pointwise utility with a risk-seeking preference, and broadly provides a principled framework to explain the optimal prompt reweighting based on the choice of utility functional.

\paragraph{Distributional Structure in RLVR.}  The distributional structure between prompts and their rewards is crucial in algorithm analysis and design. \cite{barakat2026pass,walder2025pass} reveals the conflict between improving  pass@$K$ and pass@$1$ induced by the prompt interference, which is also supported by the fundamental property of deep nets called \textit{local elasticity}~\citep{he2019local,su2024envisioning}. This trade-off also mirrors the well-known tension between adversarial robustness and clean accuracy~\citep{tsipras2018robustness,zhang2019theoretically}: pushing harder prompts (akin to adversarial examples) trades off accuracy on easier ones. Therefore, it is beneficial to include the correlations between prompts in the algorithm design instead of treating them independently. Our distribution-aware method emerges across the prompt distribution in the pass-rate space by leveraging the rank/quantile and density information of $p_\theta(x)$, offering an explicit knob for navigating this trade-off between hard and easy prompts. \cite{wu2026Quantile} proposes quantile advantage estimation by replacing the mean with a group-wise $K$-quantile baseline to address entropy collapse and entropy explosion, which provides further evidence to demonstrate that leveraging the distributional structure of prompts can be beneficial.

\paragraph{Optimality of Utility Functional and Risk-sensitive Control.}  Risk-sensitive RL~\citep{howard1972risk,mihatsch2002risk,majumdar2017risk,wang2022risk,smith2023exponential,bauerle2024markov} replaces the expectation of cumulative rewards with other utility functionals or risk measures, reflecting the principle that there is no universal optimality in this choice, which mainly depends on the designer's risk attitude. Distributional RL~\citep{dabney2018implicit,bellemare2023distributional} makes this dependence explicit by allowing arbitrary distortion functions. Our distribution-aware utility instead consumes the distributional structure of the pass rates, not its absolute magnitude. The CDF/quantile coordinate transform with the associated distortion function connects directly to dual utility and spectral risk measures~\citep{yaari1987dual,wang1996premium,acerbi2002spectral,balbas2009properties}, which provides the risk-control rationale behind the quantile coordinate transform of Section~\ref{sec:distribution}.

% \clearpage
\section{Discussion and Conclusion}\label{sec:discussion}

Our study suggests that the prompt reweighting in RLVR should be understood as principled context distribution control rather than heuristic modifications of weights in policy gradients. Under this view, there is no universally optimal
prompt weight independent of the training objective. Instead, the optimal weight is utility-dependent and determined by the marginal value of improving each prompt's pass rate, which is formulated by the functional derivative. By using the quantile coordinate transform, CurveRL instantiates context distribution control with a distribution-aware utility functional in the pass-rate quantile space, allowing the prompt measure to adapt to the model's current competence profile through the distributional structure of pass rates independent of existing pointwise rules.
% More broadly, our results highlight that the effective RLVR requires a principled context distribution control as a new axis of information acquisition in RL.

\paragraph{Limitations and Future Work.} A natural extension of our method is to integrate pointwise and distribution-aware utility functionals into more expressive prompt weighting schemes. We explored two such integrated strategies in Appendix~\ref{app:discussion_integration}, but did not observe empirical improvements. This suggests that the two utility functional classes may induce different geometries in the pass-rate space, and simply adding or multiplying them may not reconcile their optimization effects. A rigorous characterization of these geometries and their impact on learning dynamics remains an important direction for future work. Moreover, defining weight optimality directly in the prompt space, rather than in the pass-rate space, additionally captures the effect of parameter updates, although computing functional derivatives over high-dimensional prompt representations is generally intractable.
% Lastly, exploring the optimal reweighting in on-policy distillation regime is also an intriguing direction. 

\section*{Acknowledgments}

This work was supported in part by NIH grants R01EB036016, R01EB037101, and R01MH143267, NSF grant DMS-2310679, a Meta Faculty Research Award, and Wharton AI for Business. The content is solely the responsibility of the authors and does not necessarily represent the official views of the NIH.

\bibliographystyle{plainnat}
\bibliography{reference}

\clearpage
\appendix

%%%%%%%%%%%% set table of content for appendix (start)
\addcontentsline{toc}{section}{Appendix} % Add the appendix text to the document TOC
\part{Appendix} % Start the appendix part
\parttoc % Insert the appendix TOC
%%%%%%%%%%% set table of content for appendix (end)
\clearpage

\section{Theoretical Results}\label{app:theory}

\subsection{Functional derivative under Pointwise Utility Reduces to Partial Derivative}\label{app:theory_reduction}

If the utility function $\mathcal{U}_\theta$ equals to the pointwise utility function, functional derivative degenerates to the partial derivative, i.e., 
\begin{align*}
    \mathcal{U}_\theta =\int_{\mathcal{X}} g\left(p_\theta(x)\right) d_0(x) d x \Rightarrow \frac{\delta \mathcal{U}_\theta}{\delta p_\theta}(x)=g^{\prime}\left(p_\theta(x)\right)=\frac{\partial g\left(p_\theta(x)\right)}{\partial p_\theta(x)}.
\end{align*}
Even though it is a classical theoretical result in Calculus of Variations~\citep{gelfand2000calculus} and Partial Differential Equation~\citep{evans2022partial}, we provide a short version of the proof as a reference for completeness, as the remaining theoretical results in our paper all rely on this backbone.

\begin{proof}

Firstly, after the perturbation, the perturbed utility function is
\begin{align*}
    \mathcal{U}_\theta\left(p_\theta+\epsilon h\right)=\int_{\mathcal{X}} g\left(p_\theta(x)+\epsilon h(x)\right) d_0(x) d x.
\end{align*}
By Taylor expansion at $t=p_\theta(x)$, we have
\begin{align*}
    g\left(p_\theta(x)+\epsilon h(x)\right)=g\left(p_\theta(x)\right)+\epsilon \cdot g^{\prime}\left(p_\theta(x)\right) \cdot h(x)+r_\epsilon(x),
\end{align*}
where the reminder term satisfies $r_\epsilon(x)/\epsilon \rightarrow 0$ pointwise as $\epsilon \rightarrow 0$. Therefore, the utility difference is
\begin{align*}
    \mathcal{U}_\theta\left(p_\theta+\epsilon h\right)-\mathcal{U}_\theta\left(p_\theta\right)=\epsilon \int_{\mathcal{X}} g^{\prime}\left(p_\theta(x)\right) h(x) d_0(x) d x+\int_{\mathcal{X}} r_\epsilon(x) d_0(x) d x.
\end{align*}
Next, by regularity condition, including the dominated convergence, the remainder term vanishes. Then we have
\begin{align*}
    \frac{\mathcal{U}_\theta\left(p_\theta+\epsilon h\right)-\mathcal{U}_\theta\left(p_\theta\right)}{\epsilon}&=\int_{\mathcal{X}} g^{\prime}\left(p_\theta(x)\right) h(x) d_0(x) d x+\int_{\mathcal{X}} \frac{r_\epsilon(x)}{\epsilon} d_0(x) d x \\
    & \rightarrow \int_{\mathcal{X}} g^{\prime}\left(p_\theta(x)\right) h(x) d_0(x) d x \quad \text{as} \ \epsilon \rightarrow 0.
\end{align*}
By Definition~\ref{def:weight} and the uniqueness of Riesz representation, we obtain that
\begin{align*}
    \int_{\mathcal{X}} \frac{\delta \mathcal{U}_\theta}{\delta p_\theta}(x) \cdot h(x) d_0(x) d x=\int_{\mathcal{X}} g^{\prime}\left(p_\theta(x)\right) \cdot h(x) d_0(x) d x, \forall h \in L^2(d_0).
\end{align*}
Consequently, it results in the conclusion that $\frac{\delta \mathcal{U}_\theta}{\delta p_\theta}(x)=g^{\prime}\left(p_\theta(x)\right)$ for $d_0$ almost everywhere for $x$.

\end{proof}

\subsection{Proof of Proposition~\ref{prop:entropic_risk}}\label{app:theory_entropicrisk}

\textbf{Proposition}~\ref{prop:entropic_risk} (Entropic Risk RL Interpolates Classical RL and MaxRL.) Assume $p_\theta(x) > 0$ for $d_0$-almost every $x$ and $\frac{\left\|\nabla_\theta p_\theta(x)\right\|}{p_\theta(x)} \in L^1\left(d_0\right)$, then 
\begin{align}
      \lim _{\eta \rightarrow 0^+} \nabla_\theta \mathcal{U}_\theta^{\mathrm{risk}}(\eta) = \nabla_\theta J_{\mathrm{RL}}(\theta), \ \lim _{\eta \rightarrow \infty} \eta \nabla_\theta \mathcal{U}_\theta^{\mathrm{risk}}(\eta) =\nabla_\theta J_{\mathrm{ML}}(\theta).
\end{align}

\begin{proof}
    As the reward $r(x, y)$ is a Bernoulli random variable, we have
    \begin{align*}
        \mathbb{E}_{y \sim \pi_\theta(\cdot \mid x)} e^{\eta r(x, y)}=\left(1-p_\theta(x)\right) e^0+p_\theta(x) e^\eta=1-p_\theta(x)+p_\theta(x) e^\eta.
    \end{align*}
    Therefore, we have:
    \begin{align*}
        \mathcal{U}_\theta^{\mathrm{risk}}(\eta) =\mathbb{E}_{x \sim d_0}\left[\frac{1}{\eta} \log \left(1-p_\theta(x)+p_\theta(x) e^\eta\right)\right]:=\mathbb{E}_{x \sim d_0}\left[ h_\eta(p_\theta(x)) \right],
    \end{align*}
    where we denote the weight function as $w_\theta^\eta(x)$ and therefore we have
    \begin{align*}
         \nabla_\theta \mathcal{U}_\theta^{\mathrm{risk}}(\eta) :=\mathbb{E}_{x \sim d_0}\left[ w_\theta^\eta(x) \nabla_\theta p_\theta(x)\right] = \mathbb{E}_{x \sim d_0}\left[\frac{e^\eta-1}{\eta\left[1+\left(e^\eta-1\right) p_\theta(x)\right]} \nabla_\theta p_\theta(x)\right].
    \end{align*}
    As $\eta > 0$, the weight function $w^\eta_\theta(x)$ above is always positive and thus $h_\eta$ is an increasing function of $p_\theta(x)$, which leads to a meaningful utility function. Next, we discuss the two limiting behaviors of $ \nabla_\theta \mathcal{U}_\theta^{\mathrm{risk}}(\eta).$

    \paragraph{Case 1: $\eta \rightarrow 0^+$.} By Taylor expansion, $e^\eta = 1 + \eta + o(\eta)$ as $\eta \rightarrow 0^+$. Consequently, we have the pointwise limit:
        \begin{align*}
            \frac{e^\eta-1}{\eta\left[1+\left(e^\eta-1\right) p_\theta(x)\right]} = \frac{\eta + o(\eta)}{\eta\left[1+\left(\eta + o(\eta)\right) p_\theta(x)\right]} = \frac{1+o(1)}{1+\eta p_\theta(x) + o(\eta)p_\theta(x)} \rightarrow 1, \ (\eta \rightarrow 0^+).
        \end{align*}
    Since $w_\eta(p_\theta(x)) \to 1$ and is uniformly bounded,
the limit can be exchanged with expectation. This implies that $ \lim _{\eta \rightarrow 0^+} \nabla_\theta \mathcal{U}_\theta^{\mathrm{risk}}(\eta)  = \mathbb{E}_{x \sim d_0}\left[ 1 \cdot \nabla_\theta p_\theta(x)\right] = \nabla_\theta J_{\mathrm{RL}}(\theta)$.
        
    \paragraph{Case 2: $\eta \rightarrow +\infty$.} Denote $a_\eta \sim b_\eta$ when $\eta \rightarrow +\infty$ if $\frac{a_\eta}{b_\eta}\rightarrow1$. Firstly, given each prompt $x$, we have the pointwise limit:
    \begin{align*}
        \eta w_\theta^\eta(x) =  \frac{e^\eta-1}{1+\left(e^\eta-1\right) p_\theta(x)} = \frac{1}{\frac{1}{e^\eta -1} + p_\theta(x)} \rightarrow \frac{1}{p_\theta(x)}. \ (\eta \rightarrow +\infty)
    \end{align*}
    This implies $w_\theta^\eta(x) \sim \frac{1}{\eta p_\theta(x)}$ as $\eta \rightarrow +\infty$. However, the pointwise convergence does not directly ensure the limit of the integral is the integral of the limit, i.e., $\lim \mathbb{E}=\mathbb{E} \lim$, which additionally requires some mild conditions~(i.e., Dominated convergence theorem~\citep{evans2025measure}). Since $\eta w_\theta^\eta(x)  = \frac{1}{\frac{1}{e^\eta -1} + p_\theta(x)} \leq \frac{1}{p_\theta(x)}$, according to our assumption in Proposition~\ref{prop:entropic_risk}, we have
    \begin{align*}
        \left|\eta w_\theta^\eta \left(p_\theta(x)\right) \nabla_\theta p_\theta(x)\right| \leq \frac{\left|\nabla_\theta p_\theta(x)\right|}{p_\theta(x)} \in L^1\left(d_0\right),
    \end{align*}
    where the function $g(x) \in L^1\left(d_0\right)$ indicates that $\mathbb{E}_{x \sim d_0}[|g(x)|]<\infty$. This inequality implies that a sequence of functions is bounded in absolute value by an integrable function. Therefore, the dominated convergence theorem~\citep{evans2025measure} yields
    \begin{align*}
        \lim_{\eta \rightarrow \infty} \eta \nabla_\theta \mathcal{U}_\theta^{\mathrm{risk}}(\eta) =\lim _{\eta \rightarrow \infty} \mathbb{E}_{x \sim d_0}\left[\eta w_\theta^\eta(x) \nabla_\theta p_\theta(x)\right]  = \mathbb{E}_{x \sim d_0}\left[\frac{1}{p_\theta(x)} \nabla_\theta p_\theta(x)\right]=\nabla_\theta J_{\mathrm{ML}}(\theta).
    \end{align*}
\end{proof}

\subsection{Proof of Proposition~\ref{prop:CDF_bound}}\label{app:theory_CDFbound}
\noindent \textbf{Proposition~\ref{prop:CDF_bound}}. Denote $W_1$ as 1-Wasserstein distance. If $\psi$ is $L_\psi$-Lipschitz and $\Vert f_\theta \Vert_\infty < \infty$, then
    \begin{align}
        \left| \mathcal{U}_\theta(F_{\text{ref}}) - \mathcal{U}_\theta(F_\theta) \right| \leq L_\psi \Vert f_\theta \Vert_\infty W_1(\mu_{\text{ref}}, \mu_\theta).
    \end{align}
\begin{proof}
    As $\psi$ is $L_\psi$-Lipschitz, we have $\left|\psi(a) - \psi(b) \right| \leq L_\psi \left| a - b \right|$. $\Vert f_\theta \Vert_\infty < \infty$ implies a bounded density function or a bounded essential supremum of the density, i.e., $f_\theta(t) \leq \Vert f_\theta \Vert_\infty $ almost everywhere (a.e.), indicating that the inequality holds except on a set of measure zero. Recall the definition of 1-Wasserstein distance $W_1(\mu, \nu)$ to characterize the difference of two measure $\mu$ and $\nu$:
    \begin{align*}
       W_1(\mu, \nu) =  \inf _{\gamma \in \Pi(\mu, \nu)} \int\|x-y\|_1 d \gamma(x, y) = \int_{\mathbb{R}}\left|F_\mu(t)-F_\nu(t)\right| d t.
    \end{align*}
where $\gamma$ is the coupling or the transport plan, and $\Pi(\mu, \nu)$ is the joint distribution space with the marginal distributions as $\mu$ and $\nu$. The RHS above is the equivalent form of 1-Wasserstein distance, which is pivotal for our upper bound. In addition, we also write down the two utility functions:
\begin{align*}
    \mathcal{U}_\theta(F_{\text{ref}}) &= \mathbb{E}_{x\sim d_0}\left[\psi\left(F_{\mathrm{ref}}\left(p_\theta(x)\right)\right)\right]=\int \psi(F_{\mathrm{ref}}(t)) d \mu_\theta(t) \\
    \mathcal{U}_\theta(F_\theta) & = \mathbb{E}_{x\sim d_0}\left[\psi\left(F_{\theta}\left(p_\theta(x)\right)\right)\right] = \int \psi(F_{\theta}(t)) d \mu_\theta(t) =\int_0^1 \psi(z) d z = \text{const}.
\end{align*}
Putting all together, we can derive
\begin{align*}
    \left| \mathcal{U}_\theta(F_{\text{ref}}) - \mathcal{U}_\theta(F_\theta)\right| & = \int \left[ \psi(F_{\mathrm{ref}}(t)) - \psi(F_{\theta}(t)) \right] d \mu_\theta(t)  \\
    & \leq \int \left| \psi(F_{\mathrm{ref}}(t)) - \psi(F_{\theta}(t)) \right| d \mu_\theta(t) \\
    & \overset{(a)}{\leq} L_\psi \int \left| F_{\mathrm{ref}}(t) - F_{\theta}(t) \right| d \mu_\theta(t)\\
    & = L_\psi \int \left| F_{\mathrm{ref}}(t) - F_{\theta}(t) \right| f_\theta(t) d t \\
    & \overset{(b)}{\leq} L_\psi \Vert f_\theta \Vert_\infty \int \left| F_{\mathrm{ref}}(t) - F_{\theta}(t) \right| d t \\
    & \overset{(c)}{=} L_\psi \Vert f_\theta \Vert_\infty W_1(\mu_{\text{ref}}, \mu_\theta),
\end{align*}
where $(a)$ holds by the $L_\psi$-Lipschitz condition of $\psi$, $(b)$ holds by $\Vert f_\theta \Vert_\infty < \infty$, and $(c)$ satisfies due to the definition of 1-Wasserstein distance. 

\paragraph{Interpretation.} The upper bound in Proposition~\ref{prop:CDF_bound} strengthens the connection of our distribution-aware utility and a distribution- or geometry-aware Wasserstein distance. Specifically, this bound shows that the discrepancy between the two utility functionals is controlled by the Wasserstein distance between the reference and current pass-rate distributions. In particular, it provides a formal justification that our method is distribution-aware: the utility depends on the global structure of the pass-rate distribution rather than pointwise values.

\end{proof}

\subsection{Proof of Theorem~\ref{thm:distribution}}\label{app:theory_distribution}

\noindent \textbf{Theorem~\ref{thm:distribution}}~(\textbf{Pointwise Weight Induces a Prior $F_{\mathrm{ref}}$}) Denote $p=p_\theta(x)$ and the pointwise weight $w_\theta(x)=g^\prime(p_\theta(x)):=w(p)$ in the pass-rate space. Assume $\int_p^1 w(t) dt < \infty$ for any $p \in (0, 1]$. Under the distortion $\psi(u)=\log u$ in Eq.~\eqref{eq:ours}, $w(p) = f_{\mathrm{ref}}\left(p\right)/F_{\mathrm{ref}}\left(p\right)$ admits a unique $F_\mathrm{ref}$:
\begin{align*}
    F_\mathrm{ref}(p) = \exp \left(-\int_p^1 w(t) dt \right).
\end{align*}

\begin{proof}

Recall that $w_\theta:(0,1] \rightarrow \mathbb{R}_{\geq 0}$ is a non-negative weight function that satisfies
\begin{align*}
    \int_p^1 w(t) d t<\infty.
\end{align*}
for any $p\in (0, 1]$. The connection between the existing pointwise weight $w_\theta(\cdot)$ and the distribution-aware weight $\frac{f_{\mathrm{ref}}\left(p\right)}{F_\mathrm{ref}\left(p\right)}$ requires:
\begin{align*}
    w(p) = \frac{f_{\mathrm{ref}}\left(p\right)}{F_\mathrm{ref}\left(p\right)}.
\end{align*}

\paragraph{Example 1: REINFORCE with $w(p)=1$.} This requires $(\log F_{\mathrm{ref}}(p))^\prime  = 1$ and $\log F_{\mathrm{ref}}(p) = p+C$, i.e., 
\begin{align*}
    F_{\mathrm{ref}}(p) = e^{p+C}
\end{align*}
Let $F_{\mathrm{ref}}(1)=1$, we have $C=-1$ and $f_{\mathrm{ref}} = e^{p-1}$. Notably, there is a point mass at $p=0$ as $F_{\mathrm{ref}}(0)=e^{-1}$. This corresponds to a reflected and truncated exponential distribution. Assume $Z\sim \text{Exp}(1)$  with $p_Z(z)=e^{-z}$ for $z \in [0, +\infty)$. Define $Y = 1 - Z$ such that $y \in (-\infty, 1]$. $P_Y(Y\leq y) = P (1-Z \leq y) = 1 - F_Z(1-y)=e^{y-1}$. Furthermore, by putting all mass in $(-\infty, 0)$ on the point $z=0$, the sub-distribution becomes $F_{\mathrm{ref}}(p)  = e^{p-1}, \ p\in [0,1]$, with the sub-density $f_{\mathrm{ref}}(p)=e^{p-1}$.

\paragraph{Example 2: GRPO with $w(p)=1/\sqrt{p(1-p)}$.} This requires $\left(\log F_{\mathrm{ref}}\right)^{\prime}(p)=1 / \sqrt{p(1-p)}$, implying
\begin{align*}
    \log F_{\mathrm{ref}}(p)=\int \frac{d p}{\sqrt{p(1-p)}}=2 \arcsin (\sqrt{p})+C \Rightarrow F_{\mathrm{ref}}(p) = \exp (2 \arcsin (\sqrt{p})+C).
\end{align*}
As $\arcsin(1)=\frac{\pi}{2}$, we let $F_{\mathrm{ref}}(1)=1$. Then $C = -\pi$. Consequently, we have
\begin{align*}
    F_{\mathrm{ref}}(p) = \exp (2 \arcsin (\sqrt{p})-\pi), \quad f_{\mathrm{ref}}(p) = \frac{\exp (2 \arcsin (\sqrt{p})-\pi)}{\sqrt{p(1-p)}}.
\end{align*}
Note that this distribution is not common one, but a normalized arcsine distribution after an exponential transformation.

\paragraph{Example 3: MaxRL with $w(p)=1/p$.} It requires $\left(\log F_{\mathrm{ref}}\right)^{\prime}(p)=1 / p$, which implies $\log F_{\mathrm{ref}}(p) = \log p + C$ and $F_{\mathrm{ref}}(p) = \exp(\log p+C)$. Let $F_{\mathrm{ref}}(1)=1$, we have $C=0$. Therefore, we have $F_{\mathrm{ref}}(p) = p$ and $f_{\mathrm{ref}}(p)=1$, which is a uniform distribution in $[0,1]$. We illustrate the density and CDF in Figure~\ref{fig:prior}.

\begin{figure}[hbtp]
    \centering
    \includegraphics[width=0.8\linewidth]{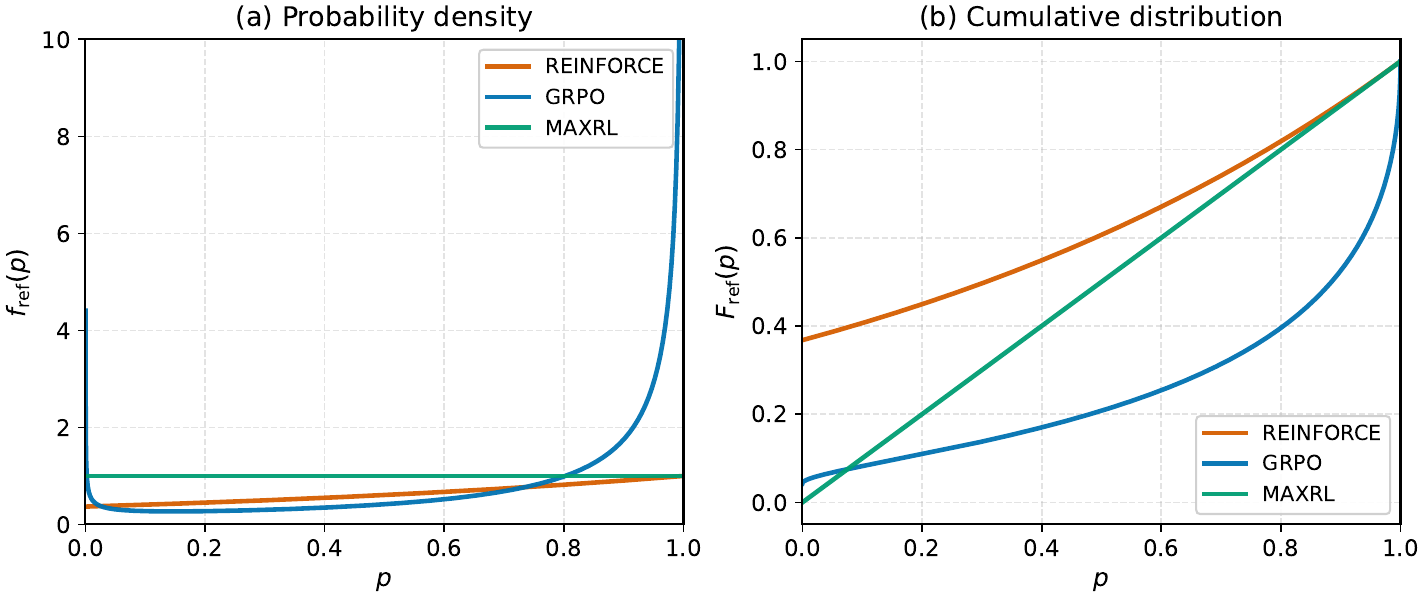}
    \caption{$f_{\mathrm{ref}}$ and $F_{\mathrm{ref}}$ for REINFORCE, GRPO, and MaxRL.}
    \label{fig:prior}
\end{figure}

\paragraph{General Case.} In general, there is a key connection equation:
\begin{align*}
    \int_p^1 w(t) dt  =  \int_p^1 \frac{f_{\text{ref}}\left(t\right)}{F_\text{ref}\left(t\right)} d t = \log F_\text{ref}(1) - \log F_\text{ref}(p) = - \log F_\text{ref}(p).
\end{align*}
This implies that 
\begin{align*}
    F_\text{ref}(p) = \exp \left(-\int_p^1 w(t) dt \right).
\end{align*}
Since $w$ is non-negative, we can easily verify that $F_\text{ref}$ is non-decreasing function as $F_\text{ref}(p_1) < F_\text{ref}(p_2)$ if $p_1 < p_2$ and $F_\text{ref}(1)=1$. Therefore, $F_\text{ref}$ is a valid CDF. Given the fact that $\frac{d }{dp} \left(-\int_p^1 w(t) dt\right)  = w(p)$, by the chain rule, we can derive
\begin{align*}
    f_\text{ref}(p) = \frac{d }{dp} \left(\exp \left(-\int_p^1 w(t) dt \right)\right)  =  F_\text{ref}(p) w(p),
\end{align*}
which is the exact goal we want to prove in the beginning. 

\paragraph{Uniqueness.} Assume $\tilde{F}$ also satisfies $\frac{\tilde{f}(p)}{\tilde{F}(p)}=w(p)$. This implies that
\begin{align*}
    \frac{d}{d p} \log \tilde{F}(p)=w(p). 
\end{align*}
By taking the integral between $[p, 1]$ on both sides, we derive
\begin{align*}
    \log \tilde{F}(1) - \log \tilde{F}(p) = \int_p^1 w(t) dt\Rightarrow   \log \tilde{F}(p) = -\int_p^1 w(t) dt \Rightarrow \tilde{F}(p) = \exp \left(-\int_p^1 w(t) dt \right).
\end{align*}
Therefore, we have $\tilde{F} = F_{\mathrm{ref}}$ and $F_{\mathrm{ref}}$ is unique. 

\paragraph{Remark.} The induced $F_{\mathrm{ref}}$ only depends on the pointwise weight $w(p)$ with the pass rate value $p$ in the pass-rate space, which is independent of the prompt $x\sim d_0$ and the policy parameter $\theta$.

\end{proof}

\subsection{Monotone Calibration Invariance Property}\label{app:theory_invariance}

For any increasing function $G: [0, 1]\rightarrow[0, 1]$ on the absolute value of pass rate $p_\theta(x)$, the gradient of our distribution-aware method keeps the same form, which we call a monotone calibration invariance property in Proposition~\ref{prop:invariance}:
\begin{proposition}[\textbf{Monotone Calibration Invariance}]\label{prop:invariance}  Assume $F_{\mathrm{ref}}=F_{\theta_{\text{old}}}$ and denote $\tilde{p}_\theta(x) = G(p_\theta(x))$, i.e., $\tilde{p}_\theta = G \circ p_\theta$. Denote the induced reference CDF and density as $\tilde{F}_{\mathrm{ref}}$ and $\tilde{f}_{\mathrm{ref}}$ , then
    \begin{align*}
        \nabla_\theta J_{\text{curve}}\left(\theta ; p_\theta, F_{\mathrm{ref }}\right)=\nabla_\theta J_{\text{curve }}\left(\theta; \tilde{p}_\theta, \tilde{F}_{\mathrm{ref}}\right).
    \end{align*}
\end{proposition}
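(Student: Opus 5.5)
The plan is to show that the utility itself is unchanged under the relabeling $p\mapsto G(p)$. The gradient identity then follows by differentiating the same object. Throughout, $J_{\text{curve}}(\theta;p_\theta,F_{\mathrm{ref}})=\mathbb{E}_{x\sim d_0}[\psi(F_{\mathrm{ref}}(p_\theta(x)))]$ is the utility in Eq.~\eqref{eq:utility_distribution}. In Eq.~\eqref{eq:ours} it is taken with $\psi=\log$, but the argument works for any $\psi$. We assume $G$ is strictly increasing and continuous on $[0,1]$. The first step identifies the induced reference distribution. Since $F_{\mathrm{ref}}=F_{\theta_{\text{old}}}$ is the CDF of $p_{\theta_{\text{old}}}(x)$ under $x\sim d_0$, the transformed reference $\tilde F_{\mathrm{ref}}$ is the CDF of $\tilde p_{\theta_{\text{old}}}(x)=G(p_{\theta_{\text{old}}}(x))$. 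Because $G$ is strictly increasing, for every $s$ in the range of $G$ we have
\begin{align*}
\tilde F_{\mathrm{ref}}(s)=\mathbb{P}_{x\sim d_0}\bigl(G(p_{\theta_{\text{old}}}(x))\le s\bigr)=\mathbb{P}_{x\sim d_0}\bigl(p_{\theta_{\text{old}}}(x)\le G^{-1}(s)\bigr)=F_{\mathrm{ref}}(G^{-1}(s)),
\end{align*}
so $\tilde F_{\mathrm{ref}}\circ G=F_{\mathrm{ref}}$.

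The second step composes the two maps. For every $x$ and $\theta$, $\tilde F_{\mathrm{ref}}(\tilde p_\theta(x))=\tilde F_{\mathrm{ref}}(G(p_\theta(x)))=F_{\mathrm{ref}}(p_\theta(x))$. This gives $J_{\text{curve}}(\theta;\tilde p_\theta,\tilde F_{\mathrm{ref}})=J_{\text{curve}}(\theta;p_\theta,F_{\mathrm{ref}})$ identically in $\theta$, with the reference held fixed as the implementation prescribes. Differentiating both sides in $\theta$ gives the claim. This is the same quantile-invariance idea that drives the probability integral transform argument before Proposition~\ref{prop:CDF_bound}.

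As a sanity check, and to match the explicit weighted form of Eq.~\eqref{eq:ours}, I would also verify the identity at the level of weights. Assume $G$ is differentiable with $G'>0$. Differentiating $\tilde F_{\mathrm{ref}}(G(p))=F_{\mathrm{ref}}(p)$ gives $\tilde f_{\mathrm{ref}}(G(p))\,G'(p)=f_{\mathrm{ref}}(p)$. Using $\nabla_\theta\tilde p_\theta=G'(p_\theta)\nabla_\theta p_\theta$, the transformed integrand becomes
\begin{align*}
\frac{\tilde f_{\mathrm{ref}}(\tilde p_\theta)}{\tilde F_{\mathrm{ref}}(\tilde p_\theta)}\nabla_\theta\tilde p_\theta=\frac{\tilde f_{\mathrm{ref}}(G(p_\theta))\,G'(p_\theta)}{F_{\mathrm{ref}}(p_\theta)}\nabla_\theta p_\theta=\frac{f_{\mathrm{ref}}(p_\theta)}{F_{\mathrm{ref}}(p_\theta)}\nabla_\theta p_\theta .
\end{align*}
The Jacobian factors cancel exactly.

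The main obstacle is regularity rather than algebra. If $G$ is only non-decreasing, it may have flat pieces, and then $G^{-1}$ must be replaced by a generalized inverse. In that case $\tilde F_{\mathrm{ref}}\circ G=F_{\mathrm{ref}}$ holds only $\mu_{\theta_{\text{old}}}$-almost everywhere, and the density relation holds only where $G'>0$. I would therefore state the proposition for strictly increasing, absolutely continuous $G$. I would handle atoms, such as the point masses at $0$ or $1$ seen in Theorem~\ref{thm:distribution}, by working with CDFs rather than densities. Finally, I would justify exchanging $\nabla_\theta$ and $\mathbb{E}_{d_0}$ by dominated convergence under the same integrability assumption used in Proposition~\ref{prop:entropic_risk}.
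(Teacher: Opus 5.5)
Your proof is correct, but your main route differs from the paper's.

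The paper works only at the level of the weighted integrand, and only for $\psi=\log$. It derives $\tilde F_{\mathrm{ref}}(u)=F_{\mathrm{ref}}(G^{-1}(u))$. It then obtains $\tilde f_{\mathrm{ref}}(u)=f_{\mathrm{ref}}(G^{-1}(u))/G'(G^{-1}(u))$ from the inverse-function rule. Finally it checks that this factor cancels the Jacobian in $\nabla_\theta\tilde p_\theta=G'(p_\theta)\nabla_\theta p_\theta$. That is exactly your ``sanity check''. You do it a little more economically, by differentiating $\tilde F_{\mathrm{ref}}\circ G=F_{\mathrm{ref}}$ instead of computing $(G^{-1})'$.

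Your primary argument instead shows that $\tilde F_{\mathrm{ref}}(\tilde p_\theta(x))=F_{\mathrm{ref}}(p_\theta(x))$ for every $x$ and every $\theta$. It matters that this holds for all $p\in[0,1]$, not only on the support of $\mu_{\theta_{\text{old}}}$, because the current pass rates need not lie there; your strict-monotonicity argument gives exactly that. With the reference frozen, the two objectives are then the same function of $\theta$, so their gradients coincide trivially.

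Your route buys generality and explanation:
\begin{itemize}
\item The objective identity needs only strict monotonicity of $G$, with no differentiability and not even continuity.
\item It uses no densities, so atoms are harmless.
\item It holds for any distortion $\psi$.
\item It shows the paper's Jacobian cancellation is just the chain rule applied to an identity of functions.
\end{itemize}
The paper's computation, in return, shows directly that the implemented per-prompt weight $f_{\mathrm{ref}}/F_{\mathrm{ref}}$ transforms like a density under recalibration. The price is that it tacitly assumes $G'>0$ and that $f_{\mathrm{ref}}$ exists.

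One side remark of yours is inaccurate. Suppose $G$ is constant on a maximal interval $[a,b]$. Then $\tilde F_{\mathrm{ref}}(G(p))=F_{\mathrm{ref}}(b)$ for all $p\in[a,b]$. So $\tilde F_{\mathrm{ref}}\circ G=F_{\mathrm{ref}}$ fails on a $\mu_{\theta_{\text{old}}}$-non-null set whenever, for example, $\mu_{\theta_{\text{old}}}$ has positive density on $[a,b]$; it is not an a.e.\ identity. In that case the invariance genuinely breaks: for constant $G$ the right-hand gradient vanishes identically. So strict monotonicity is a necessary hypothesis, not merely a convenience, and your decision to restrict to strictly increasing $G$ is the right one.
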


\begin{proof}
    After the transformation with $\tilde{p}_\theta$, we have
\begin{align*}
    \tilde{F}_{\text{ref}}(u) = \tilde{F}_{\theta_{\text{old}}}(u) = \mathbb{P}_{x\sim d_0}(\tilde{p}_{\theta_{\text{old}}}(x) \leq u) = \mathbb{P}_{x\sim d_0}(p_{\theta_{\text{old}}}(x) \leq G^{-1}(u)) = F_{\mathrm{ref}}(G^{-1}(u)).
\end{align*}
Then, we can derive
\begin{align*}
   \tilde{f}_{\text{ref}}(u)  =  \frac{d \tilde{F}_{\text{ref}}(u)}{d u} = f_{\mathrm{ref}}(G^{-1}(u)) \frac{d G^{-1}(u)}{du}. % = f_{\mathrm{ref}}(G^{-1}(u)) \frac{1}{G'(u)},
\end{align*}
Based on the derivative equation of the inverse function and let $u = G(v)$, we know $(G^{-1}(u))' = \frac{1}{G'(v)}$. Therefore, we have
\begin{align*}
     \tilde{f}_{\text{ref}}(u)  = f_{\mathrm{ref}}(G^{-1}(u)) \frac{d G^{-1}(u)}{du} = f_{\mathrm{ref}}(G^{-1}(u)) \frac{1}{G'(G^{-1}(u))}.
\end{align*}

Recall the gradients before and after the transformation have the following form:
\begin{align*}
    \nabla_\theta J_{\text{curve}}(\theta;p_\theta, F_{\mathrm{ref}}) &  = \mathbb{E}_{x\sim d_0}\left[ \frac{f_{\text{ref}}\left(p_\theta(x)\right)}{F_\text{ref}\left(p_\theta(x)\right)} \nabla_\theta p_\theta(x)\right] \\
    \nabla_\theta J_{\text{curve}}(\theta; \tilde{p}_\theta, \tilde{F}_{\mathrm{ref}}) &  = \mathbb{E}_{x\sim d_0}\left[ \frac{\tilde{f}_{\text{ref}}\left(\tilde{p}_\theta(x)\right)}{\tilde{F}_\text{ref}\left(\tilde{p}_\theta(x)\right)} \nabla_\theta \tilde{p}_\theta(x)\right].
\end{align*}
Next, we know that $\nabla \tilde{p}_\theta(x) = G'(p_\theta(x)) \nabla p_\theta(x)$. By putting all together, we have the following key result:
\begin{align*}
\frac{\tilde{f}_{\text{ref}}\left(\tilde{p}_\theta(x)\right)}{\tilde{F}_\text{ref}\left(\tilde{p}_\theta(x)\right)} \nabla_\theta \tilde{p}_\theta(x) & = \frac{f_{\mathrm{ref}}(G^{-1}(\tilde{p}_\theta(x))) \frac{1}{G'(G^{-1}(\tilde{p}_\theta(x)))}}{F_{\mathrm{ref}}(G^{-1}(\tilde{p}_\theta(x)))}  \nabla_\theta \tilde{p}_\theta(x)  \\
& = \frac{f_{\mathrm{ref}}(G^{-1}(G(p_\theta(x)))) \frac{1}{G'(G^{-1}(G(p_\theta(x))))}}{F_{\mathrm{ref}}(G^{-1}(G(p_\theta(x))))}  \nabla_\theta \tilde{p}_\theta(x) \\
& = \frac{f_{\mathrm{ref}}(p_\theta(x)) \frac{1}{ G'(p_\theta(x))}}{F_{\mathrm{ref}}(p_\theta(x))}  G'(p_\theta(x)) \nabla p_\theta(x) \\
& = \frac{f_{\mathrm{ref}}(p_\theta(x))}{F_{\mathrm{ref}}(p_\theta(x))}  \nabla p_\theta(x).
\end{align*}
This implies that
\begin{align*}
      \nabla_\theta J_{\text{curve}}(\theta; \tilde{p}_\theta, \tilde{F}_{\mathrm{ref}}) & = \mathbb{E}_{x\sim d_0}\left[ \frac{\tilde{f}_{\text{ref}}\left(\tilde{p}_\theta(x)\right)}{\tilde{F}_\text{ref}\left(\tilde{p}_\theta(x)\right)} \nabla_\theta \tilde{p}_\theta(x)\right] \\
      & = \mathbb{E}_{x\sim d_0}\left[ \frac{f_{\text{ref}}\left(p_\theta(x)\right)}{F_\text{ref}\left(p_\theta(x)\right)} \nabla_\theta p_\theta(x)\right] = \nabla_\theta J_{\text{curve}}\left(\theta; p_\theta, F_{\mathrm{ref}}\right).
\end{align*}
\end{proof}

For the pointwise objective function, this monotone calibration invariance property does not hold. Immediately, we have the following corollary.
\begin{corollary} Denote $ J_g(\theta; p_\theta) = \mathbb{E}_{x \sim d_0}\left[g\left(p_\theta(x)\right)\right]$ and $ J_g(\theta; \tilde{p}_\theta) = \mathbb{E}_{x \sim d_0}\left[g\left(\tilde{p}_\theta(x)\right)\right]$ with $\tilde{p}_\theta = G \circ p_\theta$. There exists some $g$ such that 
    \begin{align*}
   \nabla_\theta J_g(\theta; p_\theta) \neq \nabla_\theta J_g(\theta; \tilde{p}_\theta). 
\end{align*}

\end{corollary}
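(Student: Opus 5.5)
The corollary is an existence statement, so it suffices to exhibit one pair $(g, G)$ for which the two gradients differ. The plan is to compute both gradients by the chain rule, reduce the question to a comparison of pointwise weights, and then pick a concrete counterexample.

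First, both sides have explicit pointwise gradients. By the pointwise-utility formula in Eq.~\eqref{eq:objective_pointwise},
\[
\nabla_\theta J_g(\theta;p_\theta)=\mathbb{E}_{x\sim d_0}\bigl[g'(p_\theta(x))\nabla_\theta p_\theta(x)\bigr].
\]
Since $\tilde p_\theta=G\circ p_\theta$, the chain rule gives
\[
\nabla_\theta J_g(\theta;\tilde p_\theta)=\mathbb{E}_{x\sim d_0}\bigl[g'(G(p_\theta(x)))\,G'(p_\theta(x))\,\nabla_\theta p_\theta(x)\bigr].
\]
So the transformed objective induces the weight $\tilde w(p)=g'(G(p))G'(p)$, whereas the original induces $w(p)=g'(p)$. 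The contrast with Proposition~\ref{prop:invariance} is that the reference-CDF Jacobian $1/G'$ is absent here, so nothing cancels the factor $G'$.

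Second, choose the simplest counterexample: $g(p)=p$, i.e.\ REINFORCE, and $G(p)=p^2$, which is increasing on $[0,1]$ and maps $[0,1]$ to itself. Then $w\equiv1$ while $\tilde w(p)=2p$. The difference of the two gradients is
\[
\mathbb{E}_{x\sim d_0}\bigl[(1-2p_\theta(x))\nabla_\theta p_\theta(x)\bigr]=\nabla_\theta\,\mathbb{E}_{x\sim d_0}\bigl[p_\theta(x)-p_\theta(x)^2\bigr].
\]

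Third, this difference must be shown to be nonzero for some admissible model, since ``there exists some $g$'' implicitly allows fixing a concrete setting. Take a single prompt ($d_0=\delta_{x_0}$) and a scalar parameter with $p_\theta(x_0)=\theta\in(0,1)$, realized for instance by a two-response policy $\pi_\theta(\text{correct}\mid x_0)=\theta$. The difference then equals $1-2\theta$, which is nonzero for every $\theta\neq 1/2$.

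The main obstacle is interpretive rather than technical: one must make sure the gradients are compared as vectors, not merely up to a positive scale $Z_\theta$, which would not change the update direction. In the single-prompt example, $1$ and $2\theta$ are collinear, so in that sense the directions agree. To obtain a genuine \emph{directional} discrepancy, I would use two prompts $x_1,x_2$ with equal mass, pass rates $\theta_1,\theta_2$, and independent parameters. The gradients are then $\tfrac12(1,1)$ versus $\tfrac12(2\theta_1,2\theta_2)$, which are non-parallel whenever $\theta_1\ne\theta_2$. This establishes the corollary in the stronger directional sense as well.
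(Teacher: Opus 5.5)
Your proof is correct and follows the same skeleton as the paper's: the chain rule gives the transformed weight $g'(G(p))G'(p)$, and a counterexample with $G(t)=t^2$ finishes it. Your counterexample, however, differs from the paper's in a way that matters.

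The paper takes $g=\log$ (MaxRL), for which $g'(G(p))G'(p)=2/p=2g'(p)$. Since $\log(p^2)=2\log p$, the two objectives are exactly proportional, so the paper's two gradients differ only by the factor $2$ and point in the same direction. By the paper's own remark after Eq.~\eqref{eq:gradient_reweighted}, a positive scalar changes only the update scale, so this is not a change in the update direction at all.

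Your choice $g(p)=p$ gives weights $1$ versus $2p$, which are not proportional. Your two-prompt model then exhibits genuinely non-parallel gradients, so you establish the stronger directional version of the claim.

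You also close a small gap the paper leaves open. The paper stops at a pointwise inequality of weights, but the corollary is about expectations. You exhibit concrete models, $p_\theta(x_0)=\theta$ and $p_\theta(x_i)=\theta_i$, in which $\mathbb{E}_{x\sim d_0}\bigl[(w-\tilde w)(p_\theta(x))\,\nabla_\theta p_\theta(x)\bigr]$ is verifiably nonzero.

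The paper's route is marginally shorter. Yours is more complete and better supports the interpretive point the corollary is meant to make.
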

\begin{proof}
Firstly, we have
    \begin{align*}
        \nabla_\theta J_g\left(\theta ; \tilde{p}_\theta\right)=\mathbb{E}_{x \sim d_0}\left[g^{\prime}\left(\tilde{p}_\theta(x)\right) \nabla_\theta \tilde{p}_\theta(x)\right]=\mathbb{E}_{x \sim d_0}\left[g^{\prime}\left(G\left(p_\theta(x)\right)\right) \cdot G^{\prime}\left(p_\theta(x)\right) \cdot \nabla_\theta p_\theta(x)\right].
    \end{align*}
If we let
\begin{align*}
    \nabla_\theta J_g(\theta)=\mathbb{E}_{x \sim d_0}\left[g^{\prime}\left(p_\theta(x)\right) \nabla_\theta p_\theta(x)\right] = \mathbb{E}_{x \sim d_0}\left[g^{\prime}\left(G\left(p_\theta(x)\right)\right) \cdot G^{\prime}\left(p_\theta(x)\right) \cdot \nabla_\theta p_\theta(x)\right],
\end{align*}
this implies a functional equation for all $p_\theta(x)$ and $G$.
\begin{align*}
    g^{\prime}\left(p_\theta(x)\right) = g^{\prime}\left(G\left(p_\theta(x)\right)\right) G^{\prime}\left(p_\theta(x)\right),
\end{align*}
A counterexample is when $g'(t)=\frac{1}{t}$ in MaxRL and $G(t)=t^2$, we have:
\begin{align*}
    \frac{1}{p_\theta(x)} \neq \frac{1}{p_\theta(x)^2} 2p_\theta(x) = \frac{2}{p_\theta(x)}.
\end{align*}
In summary, our distribution-aware prompt reweighting enjoys the monotone calibration invariance and is thus robust to the miscalibration for the absolute value of pass rates. By contrast, the pointwise prompt reweighting does not satisfy this property in general.
\end{proof}

% \clearpage
% \input{Appendix_ImplementationDetails}
% !TEX root = ./neurips_2026.tex

\section{Implementation Details}
\label{app:impl-details}

\paragraph{More Details of Experimental Setup.} Based on verl framework, our training applies a fully on-policy policy gradient update, meaning that there is no importance ratio or associated clipping. We also disable both the KL penalty and the entropy bonus. As such, the effect of the prompt-weighting function is not entangled with auxiliary regularizers, thereby avoiding complex tuning. This practice is also adopted in prior work on RLVR for LLM reasoning~\citep{tajwar2026maximum,olmo2025olmo,tang2026multiplex,zhang2025rediscovering}. During training across $1000$ RL steps, we employ the temperature $1.0$ and the learning rate $1\times 10^{-6}$. The maximum prompt length is capped at $1024$ tokens and the maximum response length at $4096$ tokens for both Qwen3-1.7B-Base and Qwen3-4B-Base. In evaluation, we use $\mathrm{pass@}k$~\citep{chen2021evaluating} as the primary metric, with $k\in\{1,2,4,\dots,1024\}$ for Qwen3-1.7B-Base and $k\in\{1,2,4,\dots,512\}$ for Qwen3-4B-Base. $\mathrm{pass@}k$ measures the probability that at least one of $k$ independently sampled responses is correct, serving as a proxy for a model's exploration capability under a fixed sampling budget. In practice, $\mathrm{pass@}1$ is the raw mean
accuracy, while $\mathrm{pass@}k$ for $k\geq 2$ uses $1000$ best-of-$k$ bootstrap resamples (with replacement) of size $k$, averaged across prompts. We sample $2048$ rollouts per prompt for Qwen3-1.7B-Base and $1024$ for Qwen3-4B-Base to reduce estimation variance.

\paragraph{RL Stack and Computational Devices.} All experiments are run within the verl post-training framework~\citep{sheng2025hybridflow}, which couples a policy-gradient trainer with a vLLM-backed rollout engine. Using a single inference stack for both training rollouts and downstream evaluation removes a common source of distribution shift between the two stages. Both backbones are optimized with the AdamW optimizer using its default first- and second-moment coefficients, under bfloat16 mixed precision and FSDP-style sharding of parameters, gradients, and optimizer states. Each experiment runs on a single node of $8\times$ NVIDIA B200 GPUs interconnected by NVLink.

\begin{tcolorbox}[
    breakable, enhanced, colback=gray!5, colframe=black!60,
    title=\textbf{Qwen-math prompt template}, fonttitle=\bfseries,
]
\begin{verbatim}
<|im_start|>system
Please reason step by step and put the final answer in \boxed{}.
<|im_end|>
<|im_start|>user
{problem statement}
Let's think step by step and put the final answer within \boxed{}.
<|im_end|>
<|im_start|>assistant
\end{verbatim}
\end{tcolorbox}

\paragraph{Prompt Template.} Every training and evaluation prompt is rendered with the Qwen-math chat template~\citep{yang2025qwen3}. We append the instruction ``\texttt{Let's think step by step and put the final answer within \textbackslash boxed\{\}.}'' to the problem statement and route the result through the tokenizer's chat template, yielding inputs of the form shown below. At inference time the verifier extracts the final boxed expression from the generated chain of thought and grades it with \textsc{Math-Verify}.

\paragraph{On-policy Update.} Each RL step generates a fresh batch of rollouts from the current policy and consumes it with a single optimizer update. Therefore, the data-generating policy and the target policy coincide. The token-level importance ratio is identically one along the entire response,
\[
    \rho_{i,t}(\theta) \;=\; \frac{\pi_\theta(y_{i,t}\mid x,\,y_{i,<t})}
                                {\pi_{\theta_{\text{old}}}(y_{i,t}\mid x,\,y_{i,<t})}
    \;\equiv\; 1,
\]
which implies that the PPO clipping range $[1-\epsilon,\,1+\epsilon]$ is never active and the threshold $\epsilon$ has no effect on the optimization trajectory. The KL penalty and the entropy bonus are also disabled, isolating the contribution of the prompt-weighting rule from auxiliary regularizers and matching the simplified objective adopted by \citep{tajwar2026maximum}. This supports the concise formulations in the preliminaries of \Cref{sec:preliminaries}.

\paragraph{Evaluation Decoding.} Following \citep{tajwar2026maximum}, evaluation generations are sampled at temperature $0.6$ with top-$p$ $0.95$, and both top-$k$ and min-$p$ truncation are disabled. We do not employ adaptive sampling and do not apply any logit correction for the mismatch between inference and training runtimes. Reported metrics are computed on the final checkpoint of each run.

\begin{table}[htbp]
\centering
\caption{Training hyperparameters shared by both Qwen3 backbones.}
\label{tab:train-hparams}
\small
\begin{tabular}{ll@{\hskip 2em}ll}
\toprule
Parameter & Value & Parameter & Value \\
\midrule
Base model            & Qwen3-\{1.7B, 4B\}-Base & Prompts per batch     & $256$ \\
Rollouts per prompt   & $8$                     & Grad updates per step & $1$ \\
Max prompt length     & $1024$                  & Max response length   & $4096$ \\
Learning rate         & $1\times 10^{-6}$       & Training steps        & $1000$ \\
KL coefficient        & $0$                     & Entropy coefficient   & $0$ \\
Rollout temperature   & $1.0$                   & Eval temperature      & $0.6$ \\
Eval top-$p$          & $0.95$                  & Device                & $8 \times$ NVIDIA B200 \\
\bottomrule
\end{tabular}
\end{table}

\paragraph{Hyperparameter Summary.} \Cref{tab:train-hparams} consolidates the training-side hyperparameters, which are shared across both Qwen3-1.7B-Base and Qwen3-4B-Base.

\clearpage
% \input{Appendix_SupplementalResults}
% !TEX root = ./neurips_2026.tex

\section{Supplemental Experimental Results}
\label{app:supp-results}

This appendix contains experimental results that are not shown in the main text due to the space limit. The training setup, evaluation protocol, and bootstrap procedure are identical to those described in \Cref{sec:exp-setup}.

\subsection{Pass@1 and Pass@$64$ on Three Additional Benchmarks}
\label{app:supp-table}

\begin{table}[htbp]
\centering
\caption{\textbf{Supplemental results on three additional math reasoning benchmarks.} pass@1 and pass@64 (\%), best per column in bold.}
\label{tab:supp-results}
\setlength{\tabcolsep}{3pt}
\small
% \resizebox{\textwidth}{!}{%
\scalebox{0.8}{
\begin{tabular}{lcccccccc}
\toprule
\multicolumn{1}{c}{\textbf{Method}} & \multicolumn{2}{c}{\textbf{BRUMO 2025}} & \multicolumn{2}{c}{\textbf{HMMT 11/25}} & \multicolumn{2}{c}{\textbf{Minerva}} & \multicolumn{2}{c}{\textbf{Avg.}} \\
\cmidrule(lr){2-3} \cmidrule(lr){4-5} \cmidrule(lr){6-7} \cmidrule(lr){8-9}
 & pass@1 & pass@64 & pass@1 & pass@64 & pass@1 & pass@64 & pass@1 & pass@64 \\
\midrule
\multicolumn{9}{c}{\emph{Qwen3-1.7B-Base}} \\
\midrule
Base & 8.0 & 40.1 & 2.4 & 18.7 & 20.8 & 56.9 & 10.4 & 38.6 \\
GRPO & 14.2 & 40.2 & 3.0 & 14.7 & \textbf{31.4} & 57.7 & 16.2 & 37.5 \\
MaxRL & 15.9 & 42.5 & 3.7 & 18.4 & 31.2 & 59.5 & 16.9 & 40.2 \\
\rowcolor{gray!18} \textbf{CurveRL} & \textbf{18.0} & \textbf{45.3} & \textbf{3.9} & \textbf{20.6} & 30.3 & \textbf{60.3} & \textbf{17.4} & \textbf{42.1} \\
\midrule
\multicolumn{9}{c}{\emph{Qwen3-4B-Base}} \\
\midrule
Base & 18.2 & 46.7 & 3.5 & 21.3 & 25.9 & 61.3 & 15.9 & 43.1 \\
GRPO & 27.3 & 47.1 & 5.7 & 27.8 & 40.8 & 58.3 & 24.6 & 44.4 \\
MaxRL & \textbf{30.4} & \textbf{66.5} & \textbf{8.1} & 35.6 & \textbf{42.8} & 62.3 & \textbf{27.1} & 54.8 \\
\rowcolor{gray!18} \textbf{CurveRL} & 27.6 & 66.2 & 7.2 & \textbf{39.1} & 39.9 & \textbf{62.9} & 24.9 & \textbf{56.1} \\
\bottomrule
\end{tabular}%

}
% }
\end{table}

\Cref{tab:supp-results} reports pass@$1$ and pass@$64$ on three additional
benchmarks: \textbf{BRUMO 2025}~\citep{balunovic2025matharena},
\textbf{HMMT 11/25}~\citep{balunovic2025matharena}, and
\textbf{Minerva Math}~\citep{lewkowycz2022solving}. The qualitative picture
mirrors~\Cref{tab:main-results}: CurveRL matches or exceeds MaxRL at
pass@$64$ on all benchmarks across both model sizes (except BRUMO 2025 under Qwen3-4B-Base).

% \subsection{Majority Voting Accuracy at the Maximum Sample Budget}
% \label{app:majmaxk}

% \Cref{tab:majmaxk-4b} reports majority-vote accuracy at the maximum
% per-prompt rollout budget on Qwen3-4B-Base (majority@1024). CurveRL is
% column-best on BeyondAIME, HMMT 02/26, and HMMT 11/25, and ties for first
% on HMMT 02/25; MaxRL retains the lead on AIME 2025, BRUMO 2025, and
% Minerva.

% \begin{table}[htbp]
% \centering
% \caption{\textbf{Majority@1024 on Qwen3-4B-Base} across the eight evaluation benchmarks. Best per column in bold.}
% \label{tab:majmaxk-4b}
% \setlength{\tabcolsep}{3pt}
% \small
% \resizebox{\textwidth}{!}{%
% \begin{tabular}{lccccccccc}
% \toprule
% \textbf{Method} & \textbf{AIME 2025} & \textbf{BeyondAIME} & \textbf{HMMT 02/25} & \textbf{HMMT 02/26} & \textbf{MATH-500} & \textbf{BRUMO 2025} & \textbf{HMMT 11/25} & \textbf{Minerva} & \textbf{Avg.} \\
% \midrule
% Base & 14.8 & 7.0 & 0.0 & 6.1 & 81.8 & 23.9 & 3.3 & 32.0 & 21.1 \\
% GRPO & \textbf{24.3} & 11.0 & \textbf{10.2} & 14.0 & 86.0 & 31.0 & 3.5 & 44.1 & 28.0 \\
% MaxRL & 23.5 & 11.1 & 10.0 & 12.1 & \textbf{89.9} & \textbf{38.6} & 10.0 & \textbf{48.5} & \textbf{30.5} \\
% \rowcolor{gray!18} \textbf{CurveRL} & 23.4 & \textbf{13.0} & \textbf{10.2} & \textbf{18.3} & 89.3 & 33.5 & \textbf{10.1} & 45.0 & 30.3 \\
% \bottomrule
% \end{tabular}%
% }
% \end{table}

% \clearpage
\subsection{Pass@$k$ Curves on the Additional Benchmarks}
\label{app:supp-figure}

\begin{figure}[htbp]
    \centering
    \includegraphics[width=0.7\linewidth]{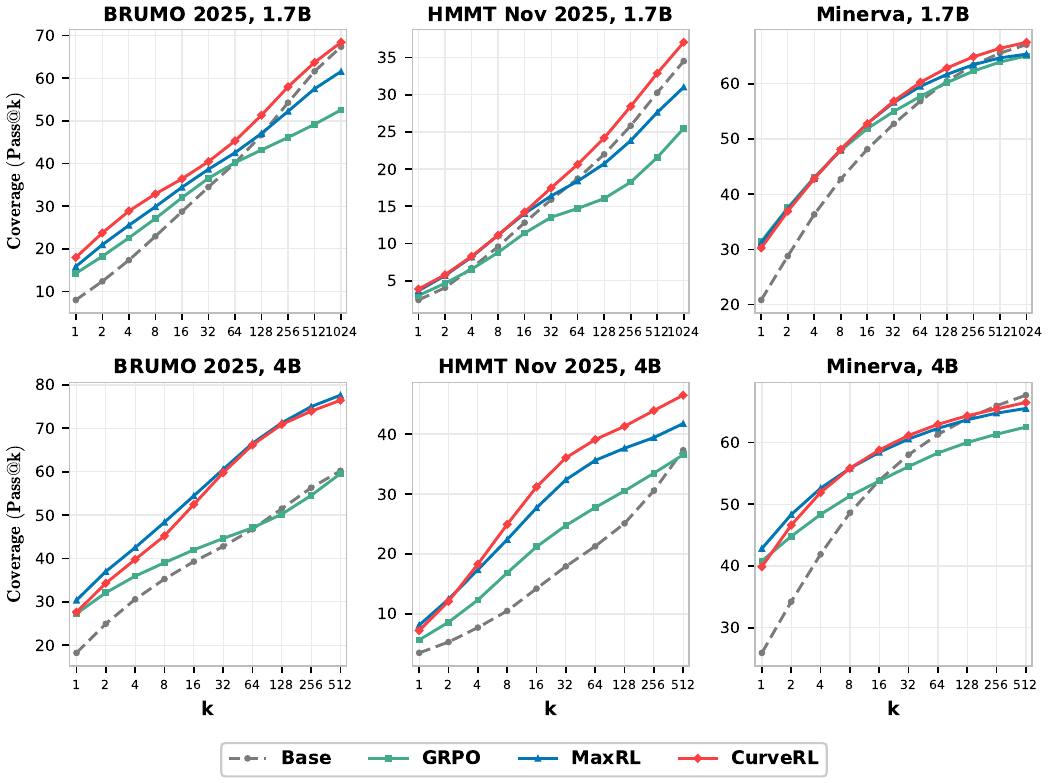}
    \caption{\textbf{Pass@$k$ scaling on the additional three benchmarks.}
    Top row: Qwen3-1.7B-Base, $k\in\{1,\dots,1024\}$. Bottom row:
    Qwen3-4B-Base, $k\in\{1,\dots,512\}$. CurveRL tracks or dominates the
    GRPO and MaxRL baselines on most benchmarks.}
    \label{fig:appendix-passk}
\end{figure}

\clearpage
\subsection{Additional Difficulty Distribution}
\label{app:difficulty-supp}

\begin{figure}[h]
    \centering
    \includegraphics[width=0.8\linewidth]{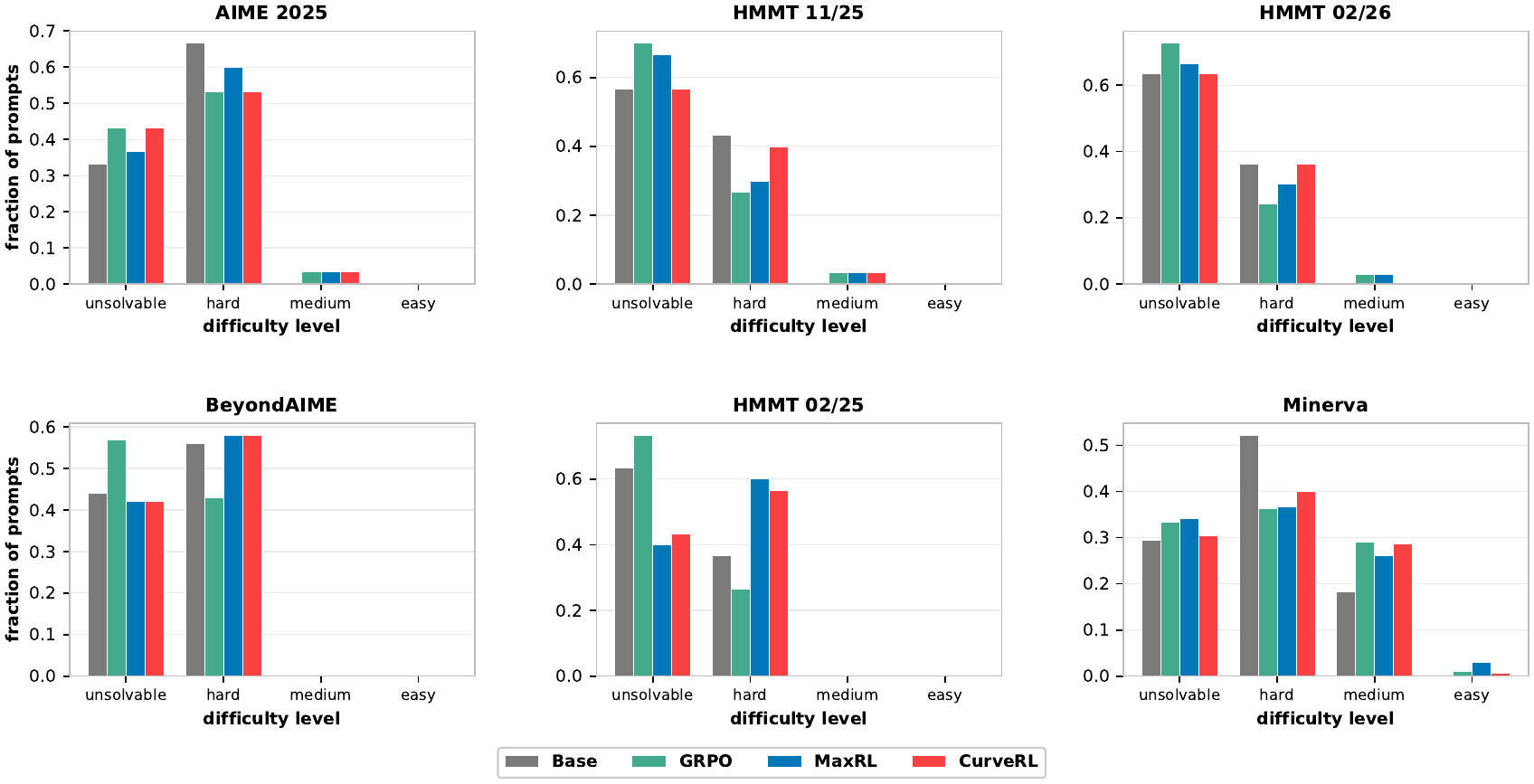}
    \caption{\textbf{Qwen3-1.7B-Base post-training prompt-difficulty distribution on the remaining six benchmarks.} Across benchmarks, CurveRL's \emph{unsolvable} fraction is mostly no higher than that of the strongest pointwise-weighted baseline, while the \emph{hard}/\emph{medium} mass is preserved or enlarged.}
    \label{fig:difficulty-appendix}
\end{figure}

\begin{figure}[h]
    \centering
    \includegraphics[width=\linewidth]{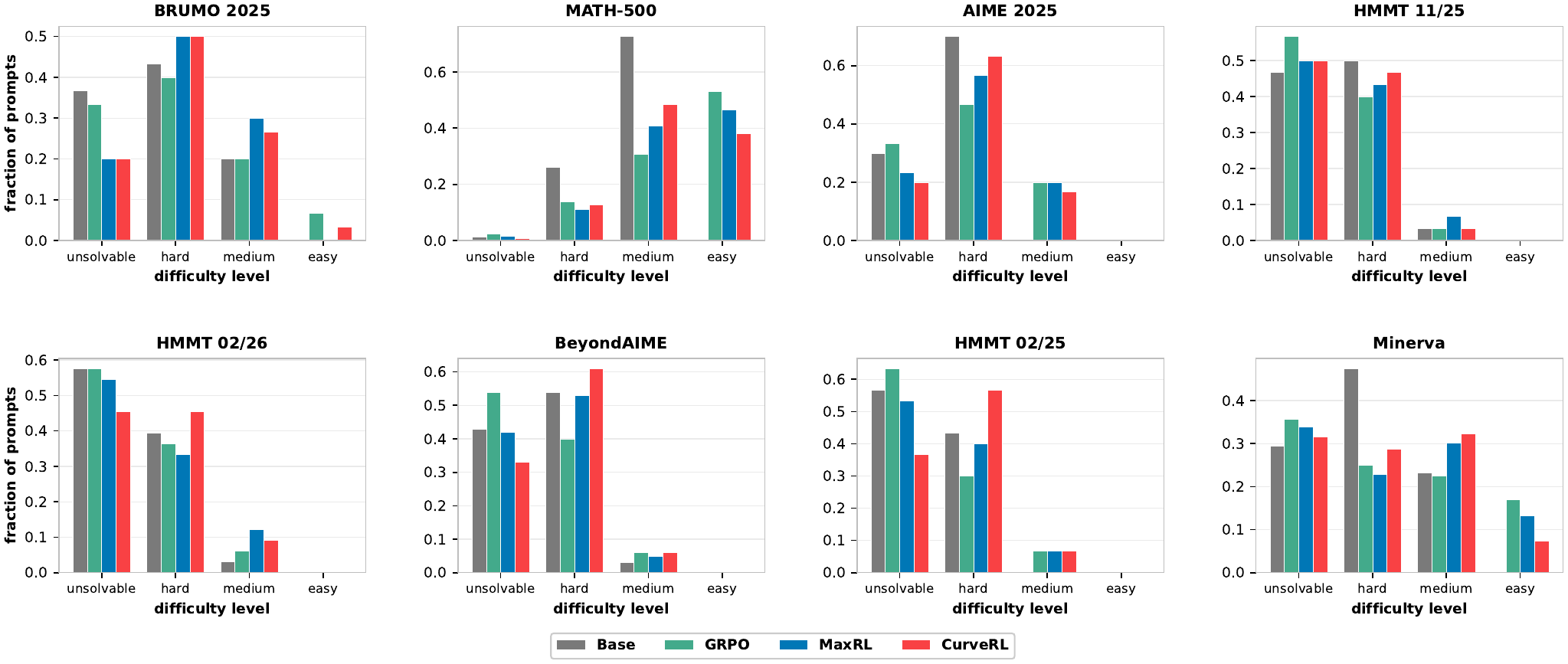}
    \caption{\textbf{Qwen3-4B-Base post-training prompt-difficulty distribution on all eight benchmarks.} CurveRL's \emph{unsolvable} fraction tracks or improves on the strongest pointwise-weighted baseline at the 4B scale as well, mirroring the 1.7B trend in \Cref{fig:difficulty-appendix}.}
    \label{fig:difficulty-combined-4b}
\end{figure}

\clearpage
\subsection{Learning Signals Across Training Dynamics}
\label{app:nonzero-pass-rate}

\Cref{fig:nonzero-pass-rate} tracks the fraction of training prompts with strictly positive empirical pass rate 
$\hat p$ at each RL step on Qwen3-1.7B-Base and Qwen3-4B-Base. Larger values indicate a richer pool of prompts that 
provide non-vanishing gradient signals. While all three methods start from a similar pool, CurveRL and MaxRL remain consistently above GRPO throughout the training, with the gap widening after the early phase. This suggests that they slow down the collapse of prompts into the zero-gradient $\hat p=0$ region, preserving useful learning signals for more steps and 
broadening the reasoning boundary.

\begin{figure}[htbp]
    \centering
    \includegraphics[width=0.8\linewidth]{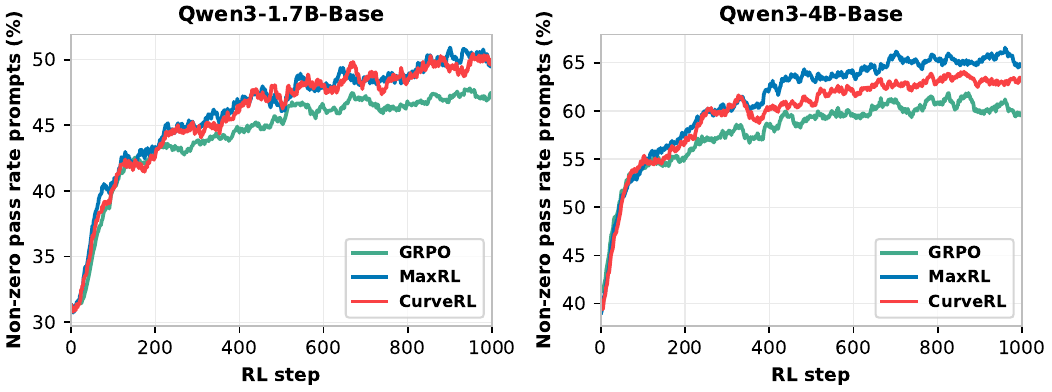}
    \caption{Fraction of prompts where the model generates at least one correct rollout out of 8 samples.}
    \label{fig:nonzero-pass-rate}
\end{figure}

% \clearpage
\subsection{Supplemental Results on Sensitivity Analysis}
\label{app:sensitivity-t0}

% \begin{table}[h]
% \centering
% \caption{\textbf{Sensitivity to sliding-window size $t_0$ on the three additional benchmarks.} pass@1 / pass@64 (\%) on Qwen3-1.7B-Base; best per column in bold; ``$-$'' marks pending runs.}
% \label{tab:sensitivity-t0-supp}
% \setlength{\tabcolsep}{4pt}
% \small
% \begin{tabular}{lcccccccc}
% \toprule
% \multicolumn{1}{c}{\textbf{$t_0$}} & \multicolumn{2}{c}{\textbf{BeyondAIME}} & \multicolumn{2}{c}{\textbf{HMMT 02/25}} & \multicolumn{2}{c}{\textbf{Minerva}} & \multicolumn{2}{c}{\textbf{Avg.}} \\
% \cmidrule(lr){2-3} \cmidrule(lr){4-5} \cmidrule(lr){6-7} \cmidrule(lr){8-9}
%  & pass@1 & pass@64 & pass@1 & pass@64 & pass@1 & pass@64 & pass@1 & pass@64 \\
% \midrule
% $1$ & \textbf{1.4} & 21.6 & 1.0 & 17.3 & \textbf{30.4} & 58.7 & 10.9 & 32.5 \\
% \rowcolor{gray!18} $10$ (default) & \textbf{1.4} & \textbf{23.6} & \textbf{1.7} & \textbf{19.3} & 30.3 & \textbf{60.3} & \textbf{11.1} & \textbf{34.4} \\
% $50$ & $-$ & $-$ & $-$ & $-$ & $-$ & $-$ & $-$ & $-$ \\
% \bottomrule
% \end{tabular}
% \end{table}

% \Cref{tab:sensitivity-t0-supp} mirrors \Cref{tab:sensitivity-t0} on BeyondAIME, HMMT 02/25, and Minerva. The qualitative pattern carries over: the default $t_0\!=\!10$ matches or improves on the single-batch setting at every column, with the average pass@$64$ advantage (\textbf{$+1.9$\,pp}) again concentrated at moderate-to-large $k$. The $t_0\!=\!50$ row is left as placeholder pending the corresponding evaluation runs.

\begin{figure}[htbp]
    \centering
    \includegraphics[width=0.8\linewidth]{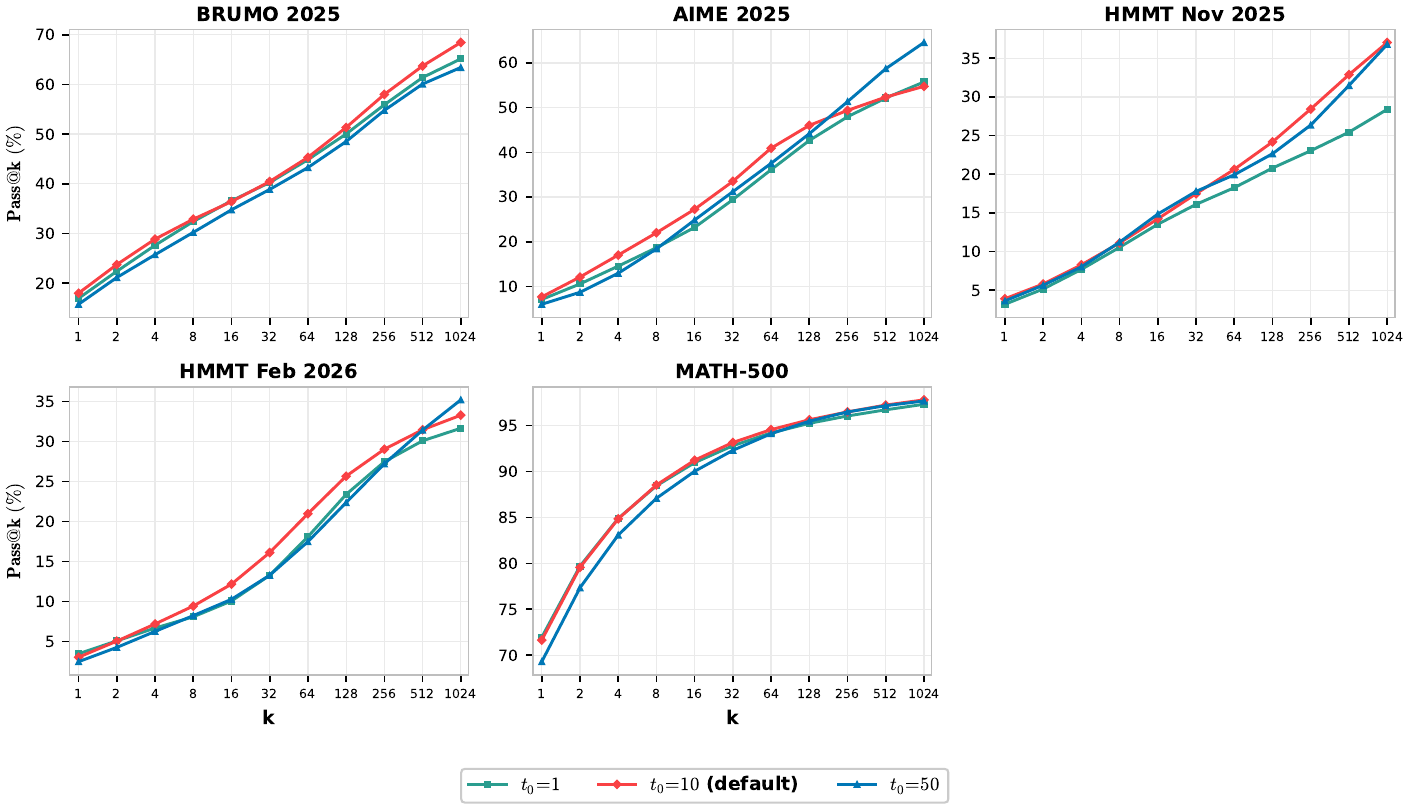}
    \caption{\textbf{Pass@$k$ scaling for the three sliding-window sizes.} Five benchmarks from \Cref{tab:sensitivity-t0}; each panel overlays CurveRL with $t_0\!\in\!\{1,10,50\}$ batches on Qwen3-1.7B-Base. The default $t_0\!=\!10$ tracks or improves on both alternatives at every $k$ on the most benchmarks.}
    \label{fig:sensitivity-passk}
\end{figure}

\clearpage
\subsection{Stable Training Dynamics}
\label{app:training-dynamics}

\Cref{fig:training-dynamics} reports additional training dynamics on Qwen3-1.7B-Base (top row) and Qwen3-4B-Base
(bottom row), including mean response length, policy entropy, and gradient norm. CurveRL generally produces longer
chains of thought and maintains higher actor entropy, supporting greater reasoning diversity and a wider capability
boundary, as shown in \Cref{sec:experiment}. CurveRL also exhibits a flatter gradient-norm profile, indicating more
stable training.

\begin{figure}[h]
    \centering
    \includegraphics[width=\linewidth]{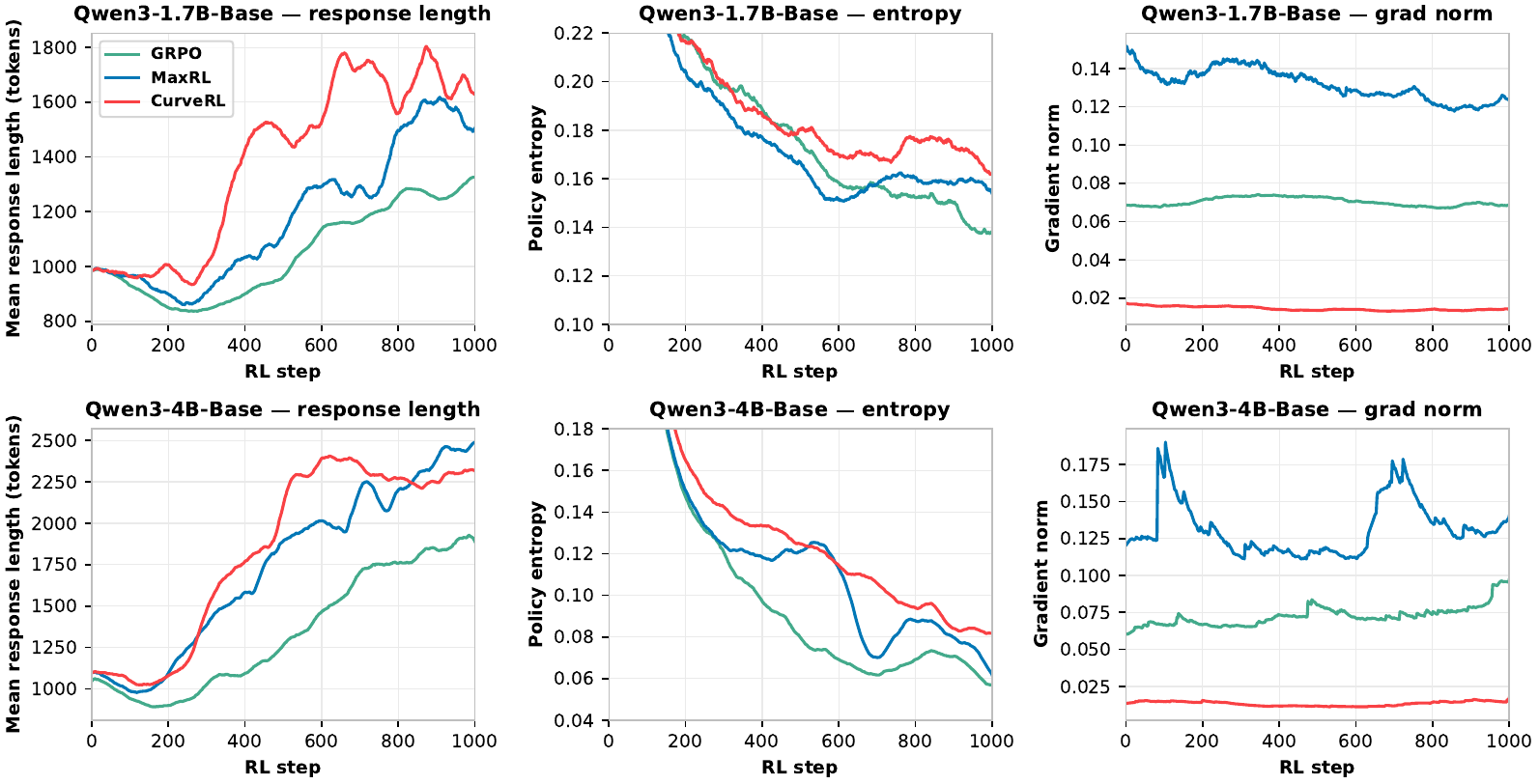}
    \caption{\textbf{Additional training dynamics metrics for Qwen3-1.7B-Base (top row) and Qwen3-4B-Base (bottom row).} Mean
    response length, policy entropy, and gradient norm over RL steps for three RLVR algorithms.}
    \label{fig:training-dynamics}
\end{figure}

\clearpage
\subsection{Validation Accuracy During Training}
\label{app:validation-curves}

\Cref{fig:validation-curves,fig:validation-curves-4b} report the validation accuracy on the MATH-500 during training. CurveRL generally outperforms other baselines on both pass@$1$ and
pass@$k$ during training.

\begin{figure}[h]
    \centering
    \includegraphics[width=0.7\linewidth]{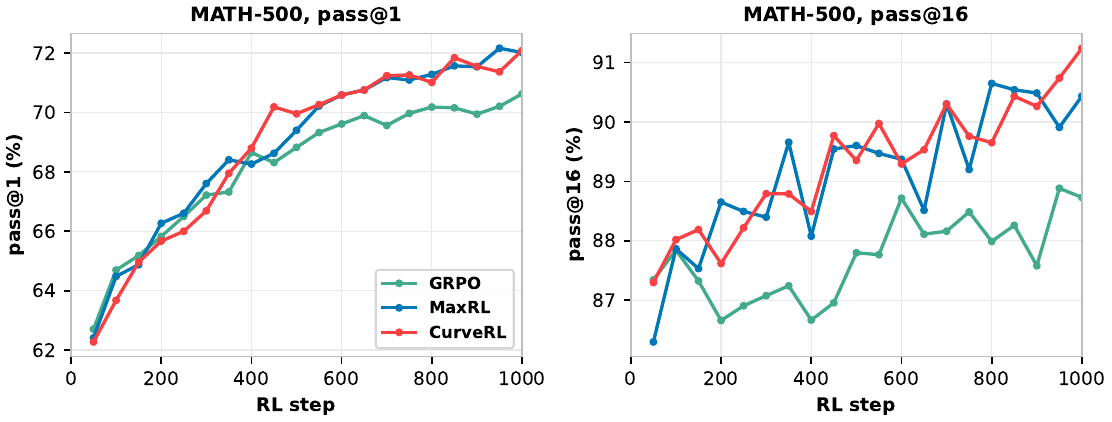}
    \caption{\textbf{Qwen3-1.7B-Base validation accuracy during training.}}
    \label{fig:validation-curves}
\end{figure}

\begin{figure}[h]
    \centering
    \includegraphics[width=0.7\linewidth]{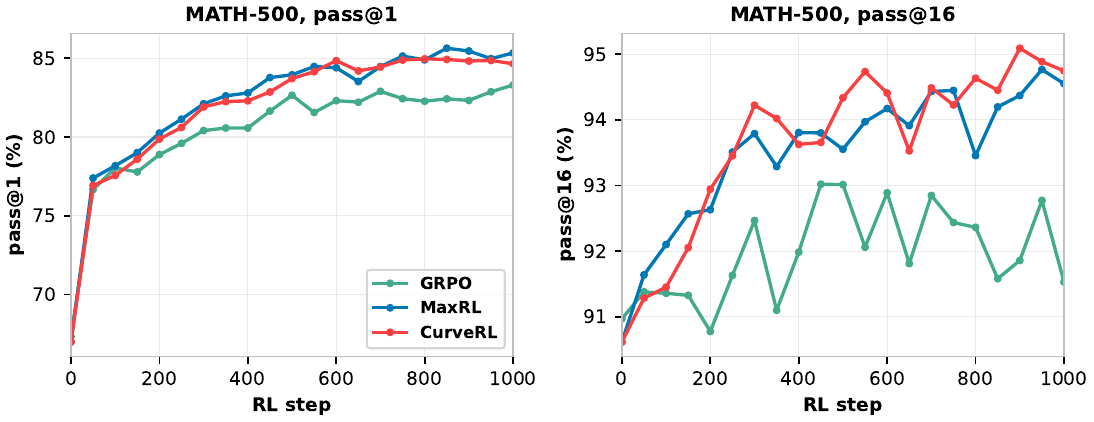}
    \caption{\textbf{Qwen3-4B-Base validation accuracy during training.}}
    \label{fig:validation-curves-4b}
\end{figure}

\subsection{Distribution-Aware Weighting on Qwen3-1.7B-Base}
\label{app:weight-comparison-4b}

Results shown in \Cref{fig:weight-comparison-4b} on Qwen3-1.7B-Base are similar to \Cref{fig:weight-comparison} on Qwen3-4B-Base. The qualitative pattern
matches the 4B case in the main text: the empirical pass-rate density $\hat f_{\mathrm{ref}}$ drifts toward higher
pass-rate bins as training proceeds, while the data-driven weight $w_t = \hat f_{\mathrm{ref}}/\hat F_{\mathrm{ref}}$
continues to emphasize the low-pass-rate prompts. The static GRPO and MaxRL weights are model-size invariant and therefore identical to the 4B panels.

\begin{figure}[h]
    \centering
    \includegraphics[width=\linewidth]{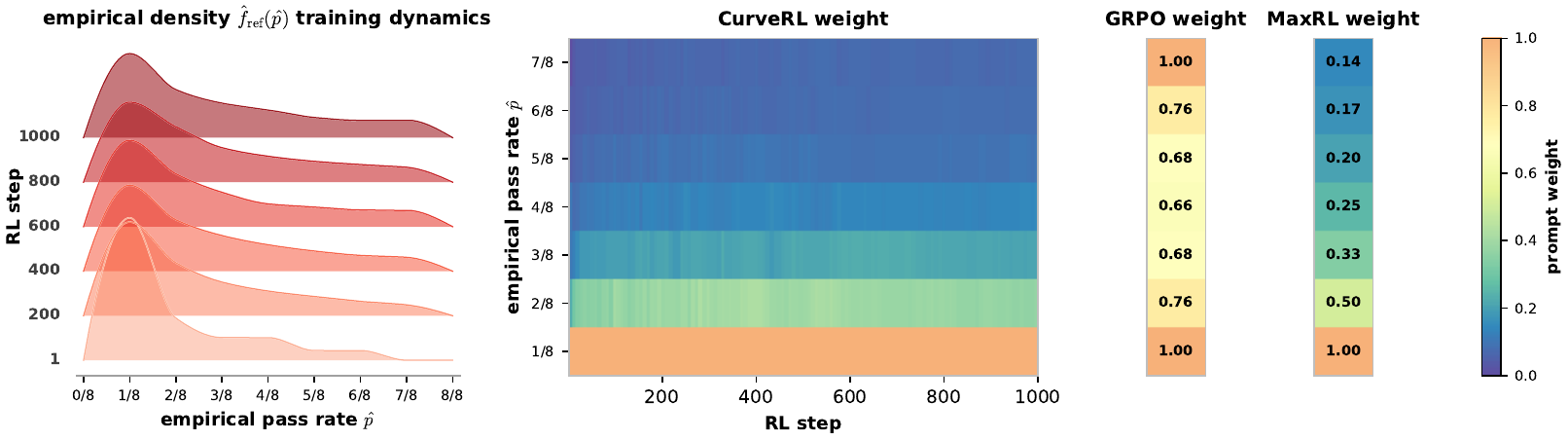}
    \caption{\textbf{Weighting schemes on Qwen3-1.7B-Base.} Same layout and scale as \Cref{fig:weight-comparison}.
    Only CurveRL's panels (left two) reflect 1.7B training, since the GRPO and MaxRL weights are static.}
    \label{fig:weight-comparison-4b}
\end{figure}

\clearpage
\section{More Discussions}\label{app:discussion}

\subsection{Curriculum Learning under Our Framework}\label{app:discussion_curriculum}

\paragraph{Curriculum Learning is a Time-Varying Pointwise Prompt Reweighting Method.} The curriculum learning strategy in RLVR, such as \citep{parashar2025curriculum}, develops different data scheduling from easy to hard prompts, therefore improving the policy improvement. Within our framework in Definition~\ref{def:weight}, it indeed changes the measure from a fixed $d_0$ to a time-varying measure $d_0^{(t)}$ that is manually developed, which is orthogonal to the policy-dependent prompt reweighting $w_\theta(x)$. The utility function $\mathcal{U}_\theta$ based on the pointwise objective $J_g(\theta)$ in Eq.~\eqref{eq:objective_pointwise} is changed to a time-varying version:
\begin{align*}
     \mathcal{U}_\theta^{(t)} = \mathbb{E}_{x \sim d_0^{(t)}}\left[g\left(p_\theta(x)\right)\right] = \mathbb{E}_{x \sim d_0}\left[ \frac{d_0^{(t)}(x)}{d_0(x)} g\left(p_\theta(x)\right)\right].
\end{align*}
By the functional derivative defined in Definition~\ref{def:weight}, for any perturbation 
$h \in L^2(d_0^{(t)})$, according to the Riesz representation, the first variation of $\mathcal{U}^{(t)}_\theta$ is given by
\begin{align*}
\lim_{\epsilon \to 0}
\frac{\mathcal U^{(t)}_\theta(p_\theta + \epsilon h) - \mathcal U^{(t)}_\theta(p_\theta)}{\epsilon}
=
\int w^*_\theta(x)\, h(x)\, d^{(t)}_0(x) dx  = \int  w^*_\theta(x)\, \frac{d^{(t)}_0(x)}{d_0(x)} \, h(x)\, d_0(x) dx.
\end{align*}
where $\frac{d_0^{(t)}(x)}{d_0(x)}$ is the  Radon-Nikodym derivative. Immediately, we can derive a time-varying optimal weight function for the pointwise objective $J_g(\theta)$ as
\begin{align*}
    w_\theta^{*}(x, t) = \frac{\delta \mathcal{U}_\theta^{(t)}}{\delta p_\theta}(x) = \frac{d_0^{(t)}(x)}{d_0(x)}  g'(p_\theta(x)).
\end{align*}
This illuminates the role of curriculum learning in RLVR: it incorporates a pointwise time-varying prompt reweighting by manually developing $d_0^{(t)}$ in a curriculum way, which is on top of the policy-dependent prompt reweighting function $w_\theta(x)$.

\paragraph{Similarity and Difference Between Curriculum Learning and Our Distribution-aware Method.} Given the aforementioned mechanism of curriculum learning in RLVR, a natural question is: \textit{why curriculum learning and our distribution-aware method are both effective in RLVR and what the fundamental distinction is}. We argue that both methods aim to reshape the learning dynamics of RLVR to focus on prompts near the learning frontier or in the current learnable window at each training time by prompt reweighting, even though they are in different ways:
\begin{itemize}[leftmargin=*]
    \item The easy-to-hard schedule employed in curriculum learning incorporates a non-stationary sample allocation mechanism to help the policy optimization to \textbf{chase} the moving learnable window. However, the endogenous easy-to-hard schedule is typically conducted by leveraging prior knowledge and based on the absolute values of pass rates, to the best of our knowledge.
    \item By introducing the quantile coordinate transform, i.e., quantile reparameterization, our distribution-aware method flattens the difficulty distribution in a nearly uniform and stationary spectrum, transforming the moving learnable windows to a stationary and fixed one. This strategy implicitly yet adaptively helps the policy to concentrate on the prompts in the learning frontier, and at the same time it takes advantage of the distributional information of pass rates in the pass rate function space.
\end{itemize}

\subsection{When Gradients of Distribution-aware and Pointwise Utility Functionals Equal}\label{app:discussion_equivalence}
\begin{proof}
    Under the same distortion function, the distribution-aware and pointwise utility functions are
\begin{align*}
    \mathcal{U}_\theta(F_{\mathrm{ref}})=\mathbb{E}_{x \sim d_0}\left[\psi\left(F_{\text {ref }}\left(p_\theta(x)\right)\right)\right], \ \mathcal{U}_\theta:=J_\psi(\theta)=\mathbb{E}_{x \sim d_0}\left[\psi\left(p_\theta(x)\right)\right],
\end{align*}
with their respective gradients:
\begin{align*}
    \nabla_\theta \mathcal{U}_\theta(F_{\mathrm{ref}})&=\mathbb{E}_{x \sim d_0}\left[\psi^{\prime}\left(F_{\text {ref }}\left(p_\theta(x)\right)\right) f_{\text {ref }}\left(p_\theta(x)\right) \nabla_\theta p_\theta(x)\right], \\
    \nabla_\theta J_\psi(\theta) &=\mathbb{E}_{x \sim d_0}\left[\psi^{\prime}\left(p_\theta(x)\right) \nabla_\theta p_\theta(x)\right].
\end{align*}
Denote $p=p_\theta(x)$. For the general increasing function $\psi$, the equivalence of the two gradients implies
\begin{align*}
    \psi^\prime(p) = \psi^\prime({F_{\text {ref }}(p)}) f_{\text {ref }}(p) \Rightarrow \frac{d}{d p} \psi\left(F_{\mathrm{ref}}(p)\right)=\frac{d}{d p} \psi(p).
\end{align*}
By taking the integral on both sides between $p$ and 1, we have
\begin{align*}
    \psi(F_{\mathrm{ref}}(1)) - \psi(F_{\mathrm{ref}}(p)) = \psi(1) - \psi(p).
\end{align*}
As we know $F_{\mathrm{ref}}(1)=1$, this implies that
\begin{align*}
    \psi(F_{\mathrm{ref}}(p)) = \psi(p),
\end{align*}
which holds for each $p$. As $\psi$ is an increasing function, we can derive
\begin{align*}
   F_{\mathrm{ref}}(p) = p, 
\end{align*}
which implies that $F_{\mathrm{ref}}$ is $\text{Uniform}(0, 1)$ for the pass rate distributions of $p_\theta(x)$. 

On the other proof direction, we assume $F_{\mathrm{ref}}$ is $\text{Uniform}(0, 1)$, which immediately implies $\psi(F_{\mathrm{ref}}(p)) = \psi(p)$. Consequently, the distribution-aware utility function degrades to the pointwise one and this degradation also happens to the prompt weighting function.

\paragraph{Remark.} This connection with the uniform distribution reveals a fundamental source of differences between distribution-aware prompt reweighting and pointwise counterpart: the mismatch between model capability and data difficulty. Specifically, in the general learning dynamics, the pass rate distributions can be more concentrated around 0 if the model struggles to achieve high pass rates for most prompts, especially in the early training phase. That's when the \textit{weight collapse} issue happens and distribution-aware prompt reweighting differs from the pointwise counterpart. For example, if we increase the model size on a fixed training dataset, the advantage may diminish as the pass rate distribution becomes more spread and closer to a uniform distribution. However, this does not happen in the general learning dynamics. Therefore, the fundamental source of the advantage of distribution-aware prompt reweighting over the pointwise version depends on capability-difficulty match between the model and the dataset.
\end{proof}

\subsection{Sufficient Conditions for Prompt Weight Comparison and Risk Preference}\label{app:discussion_weightcomparison}

Under the same risk measure $\psi$, we define the pointwise and distribution-aware prompt weight by
\begin{align*}
    w_0(p)= \psi^\prime(p), \  w_F(p)=\psi^\prime({F_{\text {ref }}(p)}) f_{\text {ref }}(p).
\end{align*}
Following the analysis in Appendix~\ref{app:discussion_equivalence}, we hope to know when $w_0(p) > w_F(p)$ for the concerned pass rates, e.g., low pass rate in the early training. Answering this question recovers a fundamental relationship between model capability and data difficulty. 

\begin{figure}[b!]
    \centering
    \includegraphics[width=1.0\linewidth]{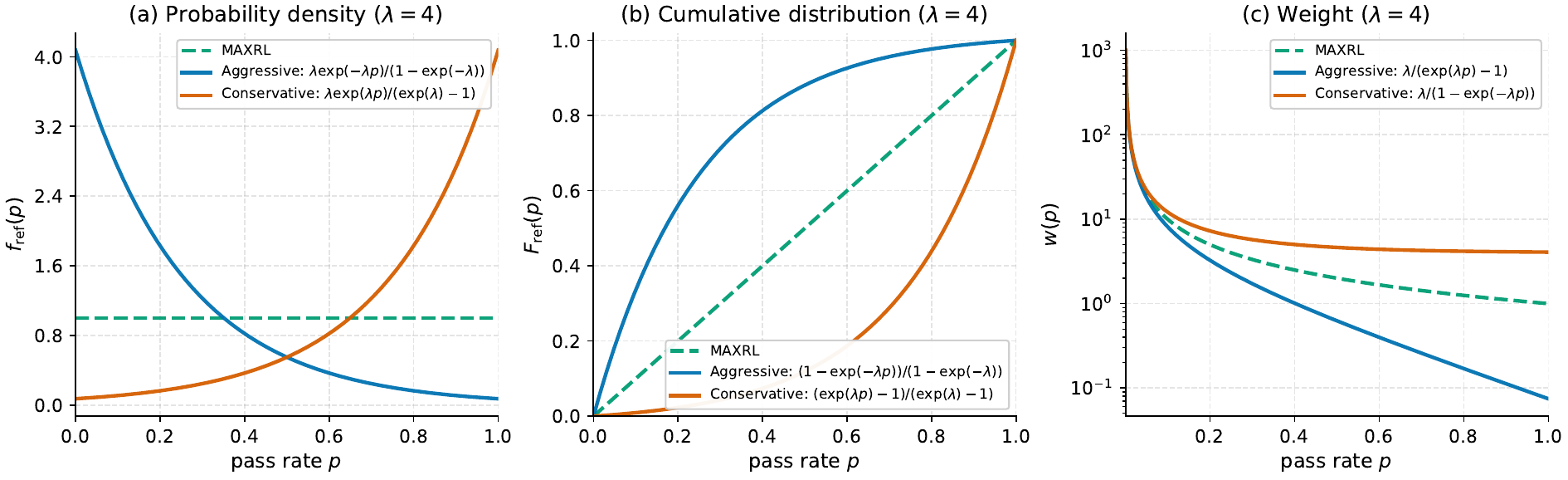}
    \caption{The probability density function, CDF and weight function in terms of the pass rate $p$ among MaxRL and the two considered examples. The blue line shows a more aggressive weight update on the low-pass-rate prompts, while the red line suggests a more conservative update relative to MaxRL.}
    \label{fig:app_weightcomparision}
\end{figure}

\paragraph{For $\psi(t)=\log(t)$.} $w_0(p)=\frac{1}{p}$ and $w_F(p)=\frac{ f_{\mathrm{ref}}(p)}{F_{\mathrm{ref}}(p)}$. As the two weights have different magnitudes, comparing the weight requires us to perform normalization. For $N$ number of rollouts, the pass rate satisfies $p\in [0, \frac{1}{N}, \ldots, \frac{N}{N}]$. Denote $p_i = \frac{i}{N}$ and thus the normalized weights are
\begin{align*}
    \bar{w}_0(p_i) = \frac{w_0(p_i)}{\sum_{i=0}^N w_0(p_i)}, \ \bar{w}_F(p_i) = \frac{w_F(p_i)}{\sum_{i=0}^N w_F(p_i)}.
\end{align*}
We now analyze the scenario when the distribution-aware weight $\bar{w}_F$ is more aggressive than the pointwise weight $\bar{w}_0$. We first define:
\begin{align*}
    r(p)=\frac{w_F(p)}{w_0(p)}=\frac{p f_{\mathrm{ref}}(p)}{F_{\mathrm{ref}}(p)}=\frac{d \log F_{\mathrm{ref}}(p)}{d \log p} = \frac{ f_{\mathrm{ref}}(p)}{\frac{1}{p}\int_0^p f_{\mathrm{ref}}(t) dt}.
\end{align*}
Given any $p_l < p_u$, being more aggressive for $\bar{w}_F$ indicates that
\begin{align*}
    \frac{\bar{w}_F(p_l)}{\bar{w}_F(p_u)} > \frac{\bar{w}_0(p_l)}{\bar{w}_0(p_u)} \Leftrightarrow \frac{{w}_F(p_l)}{{w}_F(p_u)} > \frac{{w}_0(p_l)}{{w}_0(p_u)} \Leftrightarrow r(p_l) > r(p_u).
\end{align*}
Therefore, \textbf{the sufficient condition} is 
\begin{align*}
    r^\prime(p) < 0,
\end{align*}
in the considered range of $p$, which can guarantee $r(p_l) > r(p_u)$ if $p_l < p_u$. Let $r^\prime(p)=0$, we have:
\begin{align*}
    \frac{d \log F_{\mathrm{ref}}(p)}{d \log p} = \alpha \Rightarrow \log F_{\mathrm{ref}}(p) = \alpha \log p + C,
\end{align*}
where $\alpha$ is a constant. Let $F_{\mathrm{ref}}(1)=1$, we have $C=0$ and 
\begin{align*}
    F_{\mathrm{ref}}(p) = p^\alpha, \ f_{\mathrm{ref}}(p) = \alpha p^{\alpha-1}, \ w_F(p)=\frac{\alpha}{p}.
\end{align*}
which is a power function. In our case, $\alpha=1$. This result indicates that if $r(p)$ is a  decreasing function regarding $p$ or $w_F(p)$ decreases faster than $w_0(p)=\frac{1}{p}$, we will observe a more aggressive prompt reweighting on the low-pass-rate prompts over MaxRL.

For example, consider a random variable $Z$ with a truncated exponential distribution in $[0, 1]$:
\begin{align*}
    f_Z(p) = \frac{\lambda e^{-\lambda p}}{1 - e^{-\lambda }}, \ F_Z(p)=\frac{1 - e^{-\lambda p}}{1 - e^{-\lambda }}, \ p\in[0,1]. 
\end{align*}
If $w_F(p)$ follows the truncated exponential distribution above, we have
\begin{align*}
    w_F(p) = \frac{\lambda e^{-\lambda p}}{1 - e^{-\lambda p}} = \frac{\lambda}{ e^{\lambda p} - 1}, r(p) = \frac{\lambda p}{ e^{\lambda p} - 1} \Rightarrow r^\prime(p) = \lambda \frac{e^{\lambda p} - 1 - \lambda p e^{\lambda p}}{(e^{\lambda p} - 1)^2} < 0,
\end{align*}
as we can easily prove that $g(z)=e^z - 1 - z e^z$ with $z=\lambda p > 0$ is a decreasing function with a negative derivative and $g(z)<g(0)=0$. Another example is a symmetric random variable $Y$ relative to $Z$ in $[0, 1]$, i.e., $Y = 1-Z$. Therefore, we have
\begin{align*}
    f_Y(p) = \frac{\lambda e^{\lambda p}}{e^{\lambda }-1}, \ F_Y(p)=\frac{ e^{\lambda p}-1}{e^{\lambda }-1}, \ p\in[0,1]. 
\end{align*}
Similarly, this distribution implies that
\begin{align*}
    w_F(p) = \frac{\lambda }{1 - e^{-\lambda p}}, r(p) = \frac{p \lambda }{1 - e^{-\lambda p}} \Rightarrow r^\prime(p) = \lambda \frac{1 - e^{-\lambda p}  - \lambda p e^{-\lambda p}}{(1 - e^{-\lambda p} )^2} > 0,
\end{align*}
which indicates that this strategy is more conservative than MaxRL with a slower decreasing speed of $W_F(p)$ than $\frac{1}{p}$. Figure~\ref{fig:app_weightcomparision} showcases the density function, CDF, and weight functions in terms of the pass rate $p$ of MaxRL and two considered examples, with different aggressiveness of weights in the gradient update. 

Empirically, we compare the dynamics of normalized weights among GRPO, MaxRL, and CurveRL in \Cref{fig:dynamics_weights}. It suggests that CurveRL behaves more aggressively and puts larger weights on low-pass-rate prompts than MaxRL with a faster decreasing speed when we increase $p$. We argue that the current model capability is still less sufficient for the dataset POLARIS-53K, such that a more risk-seeking utility function is preferable, which is guided in principle by the relationship between data difficulty and the model's capability. This result demonstrates that our method is data-driven with an adaptive weight function relative to the static ones in GRPO and MaxRL. Conversely, a conservative weight should be observed when we deploy a larger model on an easier dataset. This is an interesting investigation for empirical demonstration, which we leave as future work.

\begin{figure}[t]
    \centering
    \begin{subfigure}{0.51\linewidth}
        \centering
        \includegraphics[width=\linewidth]{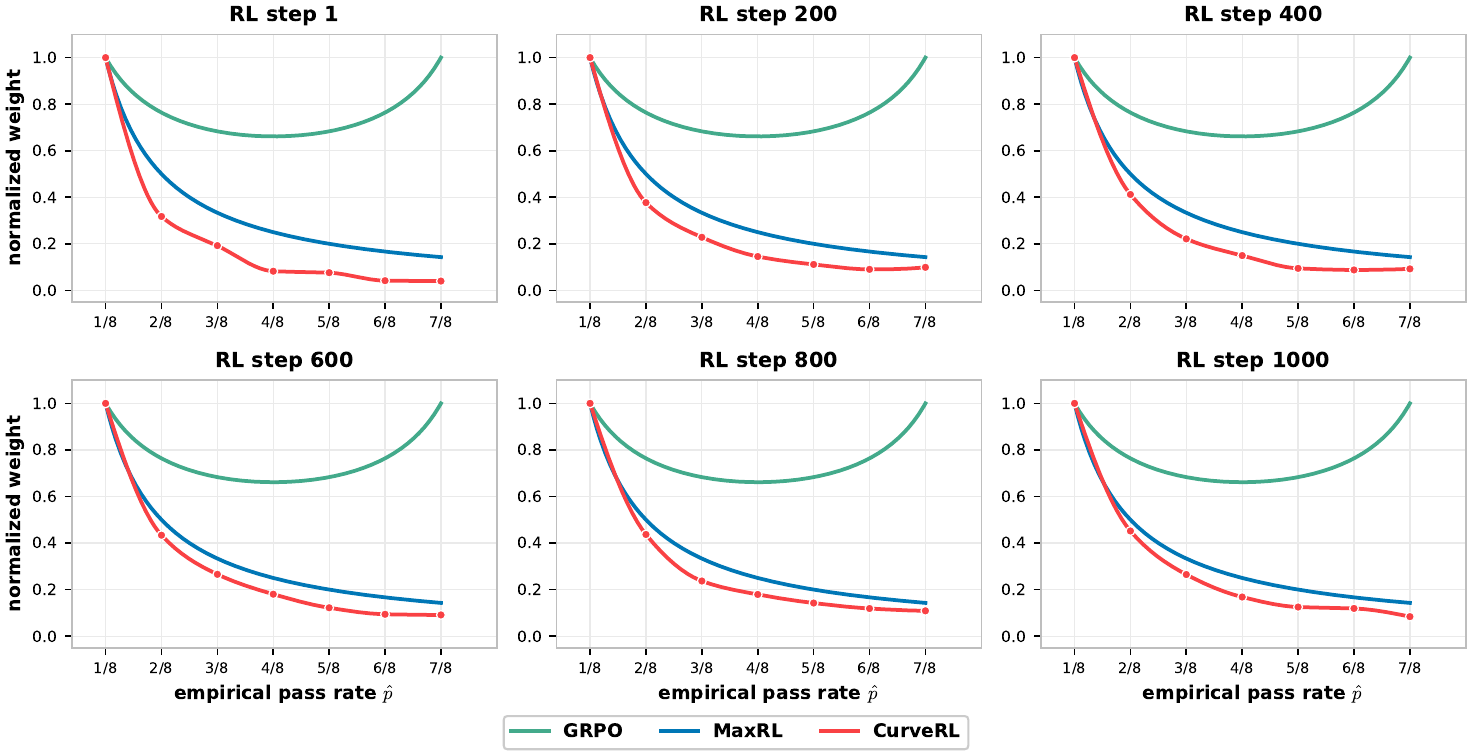}
        \caption{Normalized Weight Dynamics on Six Steps.}
        % \label{fig:empirical-vs-maxrl-weight}
    \end{subfigure}
    \hfill
    \begin{subfigure}{0.46\linewidth}
        \centering
        \includegraphics[width=\linewidth]{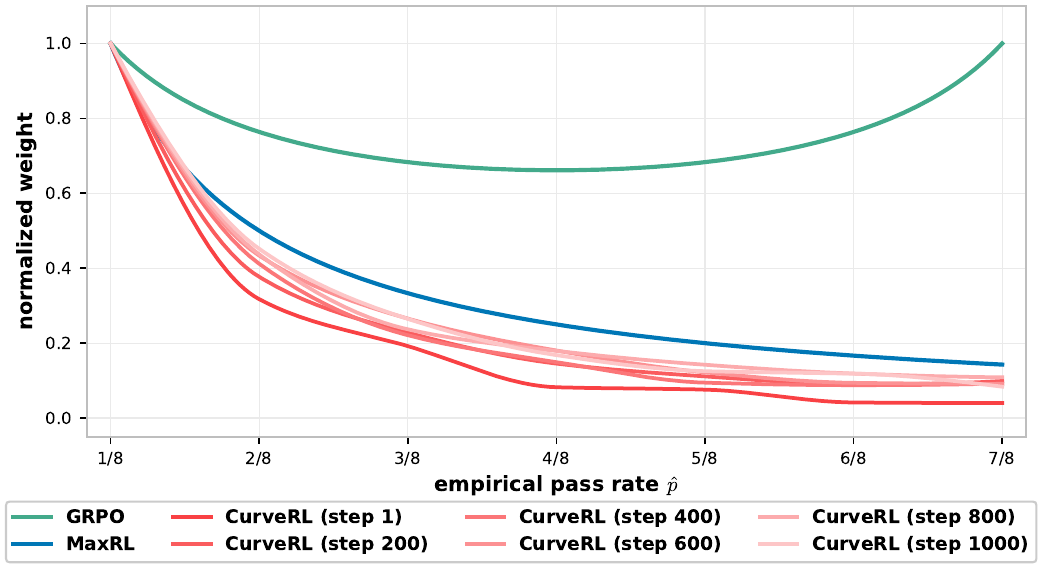}
        \caption{Combined Normalized Weight Dynamics.}
        \label{fig:empirical-weights-combined}
    \end{subfigure}
    \caption{Dynamics of normalized weights among GRPO, MaxRL, and CurveRL on Qwen3-4B-Base.}
    \label{fig:dynamics_weights}
\end{figure}

\paragraph{For the General $\psi$.} Our sufficient condition can be extended to the general risk function $\psi$, where the condition quantity is
\begin{align*}
    R_\psi(p):=\frac{w_F(p)}{w_0(p)}=\frac{\psi^{\prime}\left(F_{\mathrm{ref}}(p)\right) f_{\mathrm{ref}}(p)}{\psi^{\prime}(p)}.
\end{align*}
If $R_{\psi}^\prime(p)<0$, our distribution-aware weight function $w_F(p)$ is more aggressive over the low-pass-rate prompts than the pointwise counterpart in the learning dynamics.

\paragraph{Relationship between Model Capacity and Data Difficulty.} A key observation is that our distribution-aware weight is data-driven and depends on both the model $\pi_\theta$ and the dataset. The mixmatch between the model capability and data difficulty fundamentally determines the dynamics of the distribution-aware weights, including whether the weight is more aggressive than the pointwise version or not. In particular, if a simpler model is employed on a hard dataset, more pass rates of prompts will be more concentrated around 0, leading to more aggressive weight on the low-pass-rate prompts. On the other hand, if a large model is employed on an easy dataset, the weight distribution is more spread and tends to be more uniform. Our distribution-aware weight likely behaves similarly as the pointwise counterpart, e.g., MaxRL, or even more conservatively on low-pass-rate prompts. We leave more empirical demonstration as future work.

\subsection{Induced Wasserstein-type Geometry in Our Method}\label{app:discussion_geometry}

The transform $F_{\text{ref}}$ provides a quantile-based difficulty coordinate: it measures the relative position of the pass rate $p_\theta(x)$ under a fixed reference distribution. This representation depends on the ordering structure of pass rates rather than their raw scale. In this sense, it is closer in spirit to one-dimensional Wasserstein-type quantile representations than to pointwise density-ratio-based objectives such as Kullback–Leibler~(KL) divergence.

\paragraph{Kullback–Leibler~(KL) divergence with pointwise density geometry.} When we compare the distances of two one-dimensional distributions $\mu$ and $\nu$, if they are absolutely continuous with respect to Lebesgue measure, the densities $p_\mu$ and $p_\nu$ exist. Consequently, the Kullback–Leibler~(KL) divergence is defined by using the pointwise probability ratio:
\begin{align*}
    D_{\text{KL}}(\mu, \nu) = \int_{\mathbb{R}} p_{\mu}(x) \log \frac{p_\mu(x)}{p_\nu(x)} dx,
\end{align*}
which induces a pointwise geometry in the density space.

\paragraph{Optimal Transport~(OT)-based Distance with expressive geometry.}
As a typical instance of optimal transport, $p$-Wasserstein distance on one-dimensional random variables is defined by
\begin{align*}
    W_p^p(\mu, \nu)=\inf _{\gamma \in \Pi(\mu, \nu)} \int\|x-y\|^p d \gamma(x, y),
\end{align*}
where $\gamma$ is the coupling or the transport plan, and $\Pi(\mu, \nu)$ is the joint distribution space with the marginal distributions as $\mu$ and $\nu$. By definition, Wasserstein distance explicitly accounts for the underlying geometry of the data space, as opposed to the KL divergence. In the one-dimensional space, Wasserstein distance is equivalent to the \textbf{quantile matching}:
\begin{align*}
    W_p^p(\mu, \nu)=\int_0^1\left|F_\mu^{-1}(u)-F_\nu^{-1}(u)\right|^p d u,
\end{align*}
which evaluates the distance in the quantile space with respect to the quantile function $F^{-1}$. When $p=1$, 1-Wasserstein distance has an equivalent form based on the CDF:
\begin{align*}
    W_1(\mu, \nu)=\int_{\mathbb{R}}\left|F_\mu(t)-F_\nu(t)\right| d t,
\end{align*}
which induces a transport or ordering geometry.

In summary, our method operates $F_{\text{ref}}(p_\theta(x))$ in the distribution-aware utility functional, a surrogate of the one from the perspective of distribution transport~(moving the mass from $p_\theta(x)$ to $p^*(x)$) as introduced in Section~\ref{sec:optimality_utility}. Since optimal transport in the one-dimensional space reduces to quantile matching and CDF matching, our method effectively operates in the quantile space~(equivalent to the CDF space), inducing a Wasserstein-type geometry. 

\subsection{Integrating both Pointwise and Distribution-aware Utility Functionals}\label{app:discussion_integration}

\paragraph{Strategy 1: Single Function Transformation.} We integrate both $p_\theta(x)$ and $F_{\mathrm{ref}}\left(p_\theta(x)\right)$ into a single function transformation $\phi$:
\begin{align*}
    \mathcal{U}_\theta=\mathbb{E}_{x\sim d_0}\left[\phi\left(p_\theta(x), F_{\mathrm{ref}}\left(p_\theta(x)\right)\right)\right].
\end{align*}
We choose $\phi(a, b)=\log(a^{1-\lambda} b^\lambda)$ for computational convenience, which leads to
\begin{align*}
    \mathcal{U}_\theta=\mathbb{E}_{x\sim d_0}\left[\log \left(p_\theta(x)^{1-\lambda} F_{\text{ref}}\left(p_\theta(x)\right)^\lambda\right)\right] = (1-\lambda) \mathbb{E}_{x\sim d_0}\left[\log p_\theta(x)\right] + \lambda \mathbb{E}_{x\sim d_0}\left[\log F_{\text{ref}}\left(p_\theta(x)\right)\right].
\end{align*}
Notably, this utility functional is naturally meaningful as it is a monotonically increasing function of $p_\theta(x)$. Therefore, by applying the functional derivative in Definition~\ref{def:weight}, the optimal weight function and gradient update rule is:
\begin{align*}
    \nabla_\theta J_{\text{integrated}}(\theta) &  = \mathbb{E}_{x\sim d_0}\left[\left((1-\lambda) \frac{1}{p_\theta} + \lambda \frac{f_{\text{ref}}\left(p_\theta(x)\right)}{F_{\text{ref}}\left(p_\theta(x)\right)}\right)\nabla_\theta p_\theta(x)\right].
\end{align*}

\paragraph{Strategy 2: Multiplication.} Alternatively, we can directly multiply the two function transformations $\psi$ and $g$ under a monotonic constraint to ensure a meaningful utility function over $p_\theta(x)$:
\begin{align*}
    & \mathcal{U}_\theta=\mathbb{E}_{x \sim d_0}\left[\psi\left(F_{\text{ref}}\left(p_\theta(x)\right)\right) g\left(p_\theta(x)\right)\right], \\
    &\text{s.t.} \ \frac{\delta U_\theta}{\delta p_\theta}(x) = \psi\left(F_{\text {ref }}(p_\theta(x))\right) g^{\prime}(p_\theta(x))+\psi^{\prime}\left(F_{\text {ref }}(p_\theta(x))\right) f_{\text {ref }}(p_\theta(x)) g(p_\theta(x)) > 0.
\end{align*}
where the monotonic constraint is derived by ensuring a positive first-order derivative in terms of $p_\theta(x)$. When we employ $\psi(\mu)=-\log(\mu)$ and $g(\nu)=\log(\nu)$, it guarantees $\frac{\delta U_\theta}{\delta p_\theta}(x) > 0$. Consequently, the integrated gradient update rule is:
\begin{align}
      \nabla_\theta J_{\text{integrated}}(\theta) &  = \mathbb{E}_{x\sim d_0}\left[ - \left( \frac{\log F_{\text {ref }}\left(p_\theta(x)\right)}{p_\theta(x)}+\frac{f_{\text {ref }}\left(p_\theta(x)\right) \log p_\theta(x)}{F_{\text {ref }}\left(p_\theta(x)\right)} \right)\nabla_\theta p_\theta(x) \right].
\end{align}

\paragraph{Remark.} Although the above integrated formulations provide a natural way to combine local sensitivity and distributional information, we do not observe empirical improvements in practice. This suggests that simply combining pointwise and distribution-aware terms does not necessarily lead to more effective learning signals due to the \textit{geometry mismatch or gradient conflicts}. This observation demonstrates that the benefit of distribution-aware objectives does not arise from a direct additive or multiplicative fusion with pointwise terms, but rather from explicitly modeling the geometry of the pass-rate distribution. Therefore, it further helps to posit our contribution: purely distribution-aware utility functionals with clearer geometric interpretation are not considered as a regularization or supplement to pointwise ones, but correspond to a fundamentally different objective class that derives a new separate weight function family.

\end{document}